\documentclass{article}
\pdfoutput=1
\usepackage{etoolbox, tikz}

% ready for submission
%\usepackage{neurips_2021}

\usepackage{booktabs}       % professional-quality tables

\newtoggle{arxivupload}
\togglefalse{arxivupload}

%!TEX root = ../main.tex

\usepackage{boxedminipage}
\usepackage{multirow,nicefrac}
\usepackage{makecell,upgreek}
\usepackage{footnote}
\usepackage{longtable}
\usepackage[shortlabels]{enumitem}
\usepackage{tablefootnote}
\usepackage[T1]{fontenc}
\usepackage{verbatim} 
\usepackage[utf8]{inputenc}

\usepackage{fullpage}

\usepackage[breaklinks=true,linkcolor=black]{hyperref}

\usepackage{amsthm}

\usepackage{amsfonts,amssymb,mathrsfs}
\usepackage{mathtools}
\DeclareMathSymbol{\shortminus}{\mathbin}{AMSa}{"39}

	\usepackage{natbib}

\usepackage{prettyref}

\usepackage[vlined,boxed]{algorithm2e}

\usepackage[capitalise,nameinlink]{cleveref}
\Crefname{equation}{Eq.}{Eqs.}
\Crefname{assumption}{Assumption}{Assumptions}
\Crefname{condition}{Condition}{Conditions}

\usepackage{breakcites,bm}

\hypersetup{colorlinks,citecolor=blue,linkcolor = blue}
% better to load after ams math
\usepackage{latexsym}
\usepackage{relsize}
\usepackage{thm-restate}
\usepackage{appendix}

\usepackage{xcolor}
\usepackage{dsfont}

\newcommand{\defeq}{:=}

\numberwithin{equation}{section}

% notation here

% notation

\newcommand{\calM}{\mathcal{M}}

\DeclareFontFamily{U}{mathx}{\hyphenchar\font45}
\DeclareFontShape{U}{mathx}{m}{n}{
      <5> <6> <7> <8> <9> <10>
      <10.95> <12> <14.4> <17.28> <20.74> <24.88>
      mathx10
      }{}
\DeclareSymbolFont{mathx}{U}{mathx}{m}{n}
\DeclareFontSubstitution{U}{mathx}{m}{n}
\DeclareMathAccent{\widecheck}{0}{mathx}{"71}
\DeclareMathAccent{\wideparen}{0}{mathx}{"75}

% comments, etc
\newcommand{\ignore}[1]{}

% for emails

%% JMLR theorems adjustments
%\setlength{\topsep}{\medskipamount}
%
%\newtheoremstyle{jmlrthm} % name
%	{\topsep}		% Space above
%	{\topsep}		% Space below
%	{\itshape}	% Body font
%	{}						% Indent amount
%	{\bfseries}	% Theorem head font
%	{}						% Punctuation after theorem head
%	{0.33em}		% Space after theorem head
%	{}  					% Theorem head spec (can be left empty, meaning ?normal?)
%
%\newtheoremstyle{jmlrdef} % name
%	{\topsep}		% Space above
%	{\topsep}		% Space below
%	{}						% Body font
%	{}						% Indent amount
%	{\bfseries}	% Theorem head font
%	{}						% Punctuation after theorem head
%	{.33em}			% Space after theorem head
%	{}  					% Theorem head spec (can be left empty, meaning ?normal?)
%
%\renewcommand\qedsymbol{\BlackBox}
%%\renewenvironment{proof}[1][Proof]{\par\noindent{\bfseries\upshape #1\ }}{\hfill\BlackBox\\[\medskipamount]}
%\renewenvironment{proof}[1][Proof]{\par\noindent{\bfseries\upshape #1\ }}{\hfill\BlackBox\\}

% Asymptotics

%\newcommand{\matM}{{M}}

\newcommand{\twonorm}[1]{\|#1\|_2}

\newcommand{\R}{\mathbb{R}}

\newcommand{\opnorm}[1]{\|#1\|_{\op}}
\newcommand{\infnorm}[1]{\|#1\|_{\infty}}

\newcommand{\calN}{\mathcal{N}}

\newcommand{\op}{\mathrm{op}}

\newcommand{\calF}{\mathcal{F}}

\newcommand{\calD}{\mathcal{D}}

	\theoremstyle{plain}
	\newtheorem{theorem}{Theorem}

	\newtheorem{lemma}{Lemma}[section]

	\newtheorem{claim}[lemma]{Claim}
	\newtheorem{fact}[lemma]{Fact}
	\newtheorem{corollary}{Corollary}[section]
	\newtheorem{proposition}[lemma]{Proposition}

	\theoremstyle{definition}

	\newtheorem{definition}{Definition}[section]

  \newtheorem{assumption}{Assumption}
  \newtheorem{condition}{Condition}[section]

%\theoremstyle{jmlrdef}

% theorems
	
\makeatletter
\newcommand{\neutralize}[1]{\expandafter\let\csname c@#1\endcsname\count@}
\makeatother
%\makesavenoteenv{algorithm}

\newtheorem*{theorem*}{Theorem}
\newtheorem*{lemma*}{Lemma}
\newtheorem*{corollary*}{Corollary}
\newtheorem*{proposition*}{Proposition}
\newtheorem*{claim*}{Claim}
\newtheorem*{fact*}{Fact}
\newtheorem*{observation*}{Observation}

\newtheorem*{definition*}{Definition}
\newtheorem*{remark*}{Remark}
\newtheorem*{example*}{Example}

\newtheoremstyle{named}{}{}{\itshape}{}{\bfseries}{}{.5em}{\Cref{#3} {\normalfont (informal)} }
{}
\theoremstyle{named}

\theoremstyle{plain}

% repeated theorems
 %\theoremstyle{plain}
%\theoremstyle{jmlrthm}
%\newtheorem*{theoremaux}{\theoremauxref}
%\gdef\theoremauxref{1}

%\newtheoremstyle{theoremrep}
%	{\topsep}{\topsep}              %%% space between body and thm
%	{\itshape}                      %%% Thm body font
%	{}                              %%% Indent amount (empty = no indent)
%	{\bfseries}                     %%% Thm head font
%	{}                             %%% Punctuation after thm head
%	{0.33em}                             %%% Space after thm head
%	{\cref{#3}}
%
%\theoremstyle{theoremrep}
%\newtheorem{repthm}{dontcare}

% % old refs - use \cref{} instead!
% \newcommand{\secref}[1]{Section~\ref{#1}}
%\newcommand{\secref}[1]{Sec.~\ref{#1}}
%\newcommand{\figref}[1]{Figure~\ref{#1}}
%\renewcommand{\eqref}[1]{Eq.~(\ref{#1})}
%\newcommand{\lemref}[1]{Lemma~\ref{#1}}

% \newcommand{\thmref}[1]{Theorem~\ref{#1}}
% \newcommand{\appref}[1]{Appendix~\ref{#1}}
% \newcommand{\algref}[1]{Algorithm~\ref{#1}}

% style defs
%\renewcommand{\le}{\leqslant}
%\renewcommand{\ge}{\geqslant}
%\renewcommand{\leq}{\le}
%\renewcommand{\geq}{\ge}

% bold serif
\DeclareMathAlphabet{\mathbfsf}{\encodingdefault}{\sfdefault}{bx}{n}

% operators
\DeclareMathOperator*{\argmin}{arg\,min}

\DeclareMathOperator*{\supp}{supp}
%\DeclareMathOperator*{\poly}{poly}

% cases

% macros

%\newcommand{\set}[1]{\{#1\}}

%\renewcommand{\O}{\mathcal{O}}

\newcommand{\E}{\mathbb{E}}

\newcommand{\trace}{\mathrm{tr}}

\newcommand{\eps}{\varepsilon}

\renewcommand{\leq}{~\le~}
\renewcommand{\geq}{~\ge~}

\let\oldtfrac\tfrac
\renewcommand{\tfrac}[2]{\smash{\oldtfrac{#1}{#2}}}

\let\nablaold\nabla
\renewcommand{\nabla}{\nablaold\mkern-2.5mu}
%\let\cdotold\cdot
%\renewcommand{\cdot}{\mkern-2.5mu\cdotold\mkern-2.5mu}
%\newcommand{\param}{\cdotold}

%\newcommand{\diag}{\mathrm{diag}}

%\newcommand{\one}[1]{\mathds{1}[#1]}

%\newcommand{\rc}[1]{R\left(#1\right)}

% Strong Convex Macros

%{\|#1\|_{\Omega_{#2}}}

\newcommand{\diag}{\mathrm{diag}}

%\newenvironment{bsmallmatrix}
 % {\left[\begin{smallmatrix}}
 % {\end{smallmatrix}\right]}

%!TEX root = ../main.tex

 %

%!TEX root = ../main.tex
%\newcommand{\matzch}{\check{\matz}}
%\newcommand{\cost}{\ell}
\newcommand{\cond}{\mathsf{cond}}
\newcommand{\trop}{P^{(\pi)}}

\newcommand{\epsinf}{\epsilon_{\infty}}

\newcommand{\levcov}{\rho_{s}}
\newcommand{\levnext}{\rho_{s'}}

\newcommand{\thetahatls}{\widehat{\theta}_{\mathrm{LS}}}

\newcommand{\crossvar}{\sigma^2_{\mathrm{cr}}}

%Juanky

\newcommand{\epsop}{\epsilon_{\mathrm{op}}}
\newcommand{\epsz}{\epsilon_{r}}

\newcommand{\tsg}{\theta^\star_\gamma}
\newcommand{\cD}{\mathcal{D}}
\newcommand{\covhat}{\widehat{\Sigma}_{\mathrm{cov}}}
\newcommand{\cov}{\Sigma_{\mathrm{cov}}}
\newcommand{\cross}{\Sigma_{\mathrm{cr}}}
\newcommand{\crosshat}{\widehat{\Sigma}_{\mathrm{cr}}}
\newcommand{\smin}{\sigma_{\min}}
\newcommand{\Ahat}{\widehat{A}}
\newcommand{\cds}{\mathcal{C}_{\mathrm{ds}}}
\newcommand{\Cds}{\cds}
\newcommand{\gammads}{\gamma_{\mathrm{ds}}}

\newcommand{\traceb}[1]{\trace\left[ #1 \right]}

\renewcommand{\epsilon}{\varepsilon}

\newcommand{\dlyaps}[1]{\mathsf{dlyap}(#1)}

\newcommand{\fnorm}[1]{\| #1 \|_{\mathrm{F}}}

\newcommand{\nextcov}{\Sigma_{\mathrm{next}}}

% Juanky
\newcommand{\states}{\mathcal{S}}
\newcommand{\actions}{\mathcal{A}}
\newcommand{\tpr}{\theta_{\phi, r}}
\newcommand{\tphi}{\widetilde{\phi}}
\newcommand{\wphi}{\phi_w}
\newcommand{\tcov}{\widetilde{\Sigma}_{\mathrm{cov}}}
\newcommand{\tcross}{\widetilde{\Sigma}_{\mathrm{cr}}}
\newcommand{\varcov}{\sigma^2_{\mathrm{cov}}}
\newcommand{\whitecross}{\gamma \cov^{-1/2}\cross \cov^{-1/2}}

\newcommand{\Qhat}{\widehat{Q}^\pi}
\newcommand{\Qpi}{Q^{\pi}}

\newcommand{\altm}{\overline{\calM}}
\newcommand{\altreward}{\overline{R}}
\newcommand{\altqpi}{\bar{Q}^\pi}
\newcommand{\altcov}{\bar{\Sigma}_{\mathrm{cov}}}
\newcommand{\altnext}{\bar{\Sigma}_{\mathrm{next}}}
\newcommand{\altcross}{\bar{\Sigma}_{\mathrm{cr}}}
\newcommand{\altthetar}{\bar{\theta}_{\phi,r}}
\newcommand{\tbrm}{\theta_{\mathrm{BRM}}}
\newcommand{\tsi}{\theta^\star_\infty}
\newcommand{\thetahat}{\widehat{\theta}}
\newcommand{\epsfp}{\eps_{\mathrm{fp}}}
\newcommand{\tfp}{\theta^\star_{\mathrm{fp}}}
\newcommand{\rewardvar}{\sigma^2_{r}}

\title{A Complete Characterization of Linear Estimators\\ for Offline Policy Evaluation}
\author{Juan C. Perdomo \\\texttt{jcperdomo@berkeley.edu}\\ University of California, Berkeley \and Akshay Krishnamurthy\\\texttt{akshaykr@microsoft.com}\\ Microsoft Research \and Peter Bartlett \\\texttt{peter@berkeley.edu} \\  University of California, Berkeley \and Sham Kakade  \\\texttt{sham@seas.harvard.edu} \\ Harvard University
}

\date{\today}

\begin{document}

\maketitle
\begin{abstract}
Offline policy evaluation is a fundamental statistical problem in reinforcement learning that involves estimating the value function of some decision-making policy given data collected by a potentially different policy. 
In order to tackle problems with complex, high-dimensional observations, there has been significant interest from theoreticians and practitioners alike in understanding the possibility of \emph{function approximation} in reinforcement learning. 
Despite significant study, a sharp characterization of when we might expect offline policy evaluation to be tractable, even in the simplest setting of \emph{linear} function approximation, has so far remained elusive, with a surprising number of strong negative results recently appearing in the literature. 

In this work, we identify simple control-theoretic and 
linear-algebraic conditions that are necessary and sufficient for classical methods, in particular Fitted Q-iteration (FQI) and least squares temporal difference learning (LSTD), to succeed at offline policy evaluation. 
Using this characterization, we establish a precise hierarchy of regimes under which these estimators succeed. 
We prove that LSTD works under strictly weaker conditions than FQI. Furthermore, we establish that if a problem is not solvable via LSTD, then it cannot be solved by a broad class of linear estimators, even in the limit of infinite data. 
Taken together, our results provide a complete picture of the
   behavior of linear estimators for offline policy evaluation, unify previously disparate analyses of canonical algorithms, and provide significantly sharper notions of the underlying statistical complexity of offline policy evaluation. 

%Our control-theoretic condition is necessary and sufficient for FQI to succeed, and thus provides a sharp characterization for this method.
%Our linear-algebraic condition is strictly weaker. We show it is sufficient for LSPE and also necessary in a strong sense for a broad class of estimators. 
%% Our linear-algebraic condition---under which LSPE succeeds---is also necessary in a strong sense for a broad class of estimators. 
%We additionally study how these conditions relate to each other
\end{abstract}

\section{Introduction}

A central component of a practical sequential decision making
  system is its ability to cope with high-dimensional and complex data
  sources. While feature engineering or discretization techniques
  can in principle be used to address the challenges associated with
  complex data, these approaches require significant domain expertise
  and suffer from a curse-of-dimensionality phenomenon  that limit their
  practical relevance. Instead, the use of more general \emph{function
  approximation methods} for reinforcement learning (RL) promises to avoid
  these drawbacks. Consequently, understanding these methods has long
  been a topic of interest to theoreticians and practitioners alike.

 While the use of nonlinear methods is by now common in the
  empirical reinforcement learning literature, the much simpler linear
  function approximation setting remains somewhat poorly understood
  theoretically, %even
  despite decades of study. Indeed, recently
  there has been a surge of research effort focusing on necessary and
  sufficient conditions for reinforcement learning with linear
  function approximation, including the first provably efficient
  algorithms for online exploration~\citep{yang2020reinforcement,jin2020provably} and a number of surprising
  statistical lower bounds that hold even under strong
  assumptions~\citep{wang2021exponential,weisz2021exponential,weisz2021tensorplan}. This line of work represents substantial progress, yet
  we still lack a clear picture as to precisely when and why RL with linear
  function approximation is tractable.  

As a step towards providing this clarity, in this paper we
  focus on the simpler \emph{offline policy evaluation} problem
  (OPE) in infinite horizon, discounted MDPs, under the assumption that the action-value function is
  \emph{linearly realizable} in some known features. Here, rather than
  interacting with an environment to maximize reward as in the standard
  RL formulation, the goal is to estimate the performance of a given
  decision-making policy by leveraging an observational dataset
  collected by a potentially different policy.  OPE is perhaps the
  simplest, non-trivial setting in which to study function
  approximation in RL. 
%% Yet, even here, relatively little is known about
%%   exactly when a given instance is tractable. 
%% Offline policy evaluation 
  It is also
  practically relevant in its own right: both OPE and the closely-related
  offline policy \emph{optimization} problem represent a promising avenue toward applying
  RL in safety-critical domains where active exploration is infeasible. Moreover, the principles developed for OPE
  are routinely used in online RL algorithms.

Fitted Q-iteration (FQI) \citep{ernst2005tree, riedmiller2005neural} and least squares temporal difference learning (LSTD) \citep{bradtke1996linear, boyan1999least, nedic2003least} are canonical algorithms for offline policy evaluation with function approximation. 
These simple, moment-based methods are some of the most popular approaches in practice and have served as inspiration for recent empirical breakthroughs in RL \citep{mnih2015human}.
They have also been the subject of intense theoretical investigation,
with early results on convergence and instability described
by~\citet{bertsekas1995neuro,tsitsiklis1996feature} as well as several
more recent results~\citep{antos2008learning,chen2019information,lazaric2012finite}.
Nevertheless, a sharp finite sample characterization of the behavior of FQI and LSTD, even in the linear realizability setting, remains undeveloped.

In this paper, we identify
necessary and sufficient conditions for FQI and LSTD to succeed at offline policy evaluation under linear realizability. 
In doing so, we establish a precise hierarchy of conditions under which these methods work; in particular, we prove that LSTD succeeds under strictly weaker assumptions than FQI. 
Moreover, if an offline policy evaluation problem is not solvable via LSTD, then it cannot be solved by any linear, moment-based method (see \Cref{def:linear_estimator}) even in the limit of infinite data. 
Our characterization draws upon ideas from the theory of Lyapunov stability and  provides a new, unifying perspective on the statistical complexity of offline policy evaluation. 
%Specifically, we show that FQI succeeds if and only if the spectral radius of a certain instance-dependent matrix is strictly below 1 and, when it succeeds, its performance is governed by the solution to a discrete-time Lyapunov equation. 
In particular, we show how traditional quantities, such as the ``effective horizon'', fail to capture the true complexity of the problem (\Cref{subsec:sufficiency,subsec:lspe_upper}) and 
propose instance-dependent measures which are significantly sharper. Furthermore, our results unify previously disparate analyses for FQI and LSTD as our conditions are implied by prior assumptions (\Cref{subsec:connections,subsec:invertibility_connections}). Taken together, our results provide a complete picture of the
possibilities and limitations of linear estimators for offline policy evaluation under linear realizability.

% Perhaps the most important feature of our upper bound is that it is sharp, which we show via several lower bounds. 
%  For FQI in particular, we establish an instance-specific lower bound (\Cref{subsec:lower_bounds}), showing that FQI diverges for any instance that fails to meet the spectral radius condition. 
%  This theoretical analysis corroborates empirical results on the divergence of FQI by \cite{wang2021Instabilities}.
% Furthermore, we also show that our conditions are necessary for any estimator, in a worst-case sense.
%  One can find instances that do not satisfy our conditions for which offline policy evaluation is information-theoretically impossible. Taken together, our results provide a complete picture of the
%  precise behavior of fitted Q iteration under linear realizability,
%  and, more broadly, shed light on the statistical complexity of the OPE problem.

\subsection{Linear estimators \& the offline policy evaluation problem}
\label{subsec:prob_def}
Let $\calM \defeq (\states, \actions, P, R, \gamma)$ denote an infinite horizon, $\gamma$-discounted MDP where $\states$ is the set of states, $\actions$ is the set of actions, $R:\states \times \actions \rightarrow \Delta([-1,1])$ is the random reward function, and $P:\states \times \actions \rightarrow \Delta(\states)$ is the transition operator, which defines a distribution over states for every pair $(s,a)$. 
 The action-value function $Q^\pi$ captures the expected total reward achieved by a randomized policy $\pi: \states \to \Delta(\actions)$ from an initial state-action pair $(s,a)$ when the trajectory is generated such that for each time step $h$, $a_h \sim \pi(s_h)$ and $s_{h+1} \sim P(\cdot \mid s_h,a_h)$.
%% Here, $\pi$ is a randomized policy, it defines a distribution over actions $\actions$ for every $s \in \states$. 
\begin{align}
\label{eq:q_function}
Q^\pi(s, a) \defeq \E\left[ \sum_{h=0}^\infty \gamma^h r(s_h, a_h) \mid (s_0, a_0)=(s,a), \pi\right].
\end{align}
%% Here at each time step $h$, $a_h \sim \pi(s_h)$, $s_{h+1}\sim P(\cdot | s_h, a_h)$.
%where the $s_{h+1}\sim P(s_h,a_h)$, $a_{h+1}\sim\pi(s_{h+1})$ and
%$r(s_h,a_h)\sim R(s_h,a_h)$ are conditionally independent of earlier
%$s_i$ given $s_h$.
In the offline policy evaluation problem, we are given a policy $\pi$ and a dataset $\{(s_i,a_i,r_i(s_i,a_i), s_i', a_i')\}_{i=1}^n$ of observed transitions and rewards, where the initial pair $(s_i,a_i)$ is sampled from some \emph{arbitrary} distribution $\calD$, $r_i(s_i,a_i) \sim R(s_i,a_i)$, the next state is sampled from the transition operator $s_i' \sim P(\cdot \mid s_i,a_i)$, and the next action $a_i' \sim \pi(s_i')$ is sampled according to $\pi$.\footnote{We ``augment'' the dataset to include the next state action $a' \sim \pi(s')$ purely for notational convenience.}
% \footnote{This is in sharp contrast to \emph{online} evaluation where $\calD$ corresponds a distribution induced by the policy $\pi$ being evaluated. }
Our goal is to return an estimate $\Qhat$ of $\Qpi$. For concreteness, we measure performance via $\E_{(s,a)\sim\calD}{|\Qhat(s,a) - \Qpi(s,a)|}$ and we ask that this quantity is vanishingly small with high probability over the draw of the dataset. 
For simplicity, we assume that samples are drawn i.i.d. via the procedure described above.\footnote{In particular, extensions to Markovian data, where samples are drawn from an ergodic chain, are fairly well-understood, see e.g., \cite{mou2021optimal, nagaraj2020least}. Overall, the statistical rates in the Markovian setting mimic those obtained under i.i.d assumptions, up to mixing time factors.}

As we would like to develop methods that scale to settings where the cardinalities of the sets $\states$ and $\actions$ are large or infinite, our focus is on understanding  policy evaluation using linear function approximation, as per the following definition:

\begin{assumption}[Linear Realizability]
\label{ass:realizability}
$Q^\pi$  is linearly realizable\footnote{Note that realizability of $\Qpi$ does not imply that the rewards are linearly realizable. We say that rewards are linearly realizable in a feature mapping $\phi:\states \times 
\actions \rightarrow \R^d$ if there exists $\theta^\star_r \in \R^d$ such that for all $(s,a) \in \states \times \actions$, $\phi(s,a)^\top \theta^*_r = \E r(s,a)$.} in a known feature map $\phi: \states \times \actions \rightarrow \R^d$ if there exists a vector $\theta_\gamma^\star \in \R^d$ such that for all $(s,a) \in \states \times \actions$,
$Q^\pi(s,a) = \phi(s,a)^\top \theta_\gamma^\star$.	
 %% where $Q^\pi$ is again defined as in \Cref{eq:q_function}. 
\end{assumption}

\paragraph{Fitted Q-iteration.} As mentioned previously, fitted Q-iteration is one of the most popular algorithms for policy evaluation in practice and can in principle work with any function approximation method. In the linear case, given a dataset $\{(s_i, a_i, r_i(s_i,a_i), s_i', a_i')\}_{i=1}^n$ and an initial vector $\widehat{\theta}_0$, FQI iteratively solves least squares regression problems of the form
% updates the solution by solving a series of regression problems, 
\begin{align}
\label{eq:fqi_def}
	\widehat{\theta}_{t+1} \in \argmin_{\theta} \sum_{i=1}^{n} \left(\phi(s_i,a_i)^\top \theta - r(s_i,a_i) - \gamma \phi(s_i', a_i')^\top\widehat{\theta}_t \right)^2,
\end{align} 
for some number of rounds $T$ and returns the estimator $\Qhat(s,a) := \phi(s,a)^\top \widehat{\theta}_T$.

\paragraph{Least squares temporal difference learning.} In the linear function approximation setting, the vector $\theta_\gamma^\star$ which realizes $\Qpi$ in the feature mapping $\phi$ satisfies the fixed point equation,\footnote{This fixed point relationship comes from examining the definition of $\Qpi(s,a)$ which satisfies, $\Qpi(s,a) = \E r(s,a) + \gamma \cdot \E\Qpi(s', a')$ point-wise over $(s,a)$. The precise equation follows from substituting in $\Qpi = \phi(s,a)^\top \tsg$.}
\begin{align}
\label{eq:bellman_fixed_point}
	\cov \theta^\star_\gamma = \gamma \cross \theta^\star_\gamma + \theta_{\phi,r}.
\end{align}
Here, $\cov$ if the offline feature covariance matrix, $\cross$ is the cross-covariance matrix between time-adjacent features, and $\theta_{\phi,r}$ is the mean feature-reward vector. (see \Cref{eq:pop_defs,eq:thetaphir} for formal definitions).
LSTD tries to approximate $\theta_\gamma^\star$ by computing the plug-in estimate to the closed-form solution to the equation above, 
\begin{align}
\label{eq:lstd_def}
	\thetahatls \defeq (I - \gamma \covhat^{-1} \crosshat)^{\dagger} \covhat^{-1} \widehat{\theta}_{\phi,r} = (\covhat - \gamma \crosshat)^\dagger \widehat{\theta}_{\phi,r}.
\end{align}  
and returns $\Qhat(s,a) := \phi(s,a)^\top \thetahatls$ \citep{bradtke1996linear}. We focus on the unregularized variant of both of these algorithms. However, similar insights apply to the regularized cases (see \Cref{subsec:ridge}).

\subsection{Our contributions}
\label{subsec:contributions}
The main result of our work is that we identify simple linear algebraic conditions which exactly characterize when (and why) linear estimators will succeed at offline policy evaluation under linear realizability of $\Qpi$. 
Under these conditions, which we introduce below, we establish upper bounds on the sample complexity of offline policy evaluation which scale with: $(1)$ for FQI, the operator norm of the solution to a particular discrete-time Lyapunov equation, and $(2)$ for LSTD, the minimum singular value of an instance-dependent matrix. In both cases, we illustrate how our results unify previously disparate analyses of these algorithms, and demonstrate how our new instance-dependent quantities provide sharper notions of the statistical complexity of OPE when compared to bounds that explicitly depend on traditional parameters such as the ``effective horizon'', i.e., $1/(1-\gamma)$.

Our conditions can be introduced rather succinctly.
For FQI, the key definitions and assumptions are:
\begin{align}
\label{eq:pop_defs}
\cov \defeq \underset{(s,a) \sim \calD}{\E} \left [\phi(s,a) \phi(s,a)^\top \right], \quad \cross \defeq \underset{\substack{(s,a) \sim \calD\\ s' \sim P(\cdot \mid s,a), \; a' \sim \pi(s')}}{\E} \left [\phi(s,a) \phi(s',a')^\top \right].
\end{align}
\begin{assumption}[Stability]
\label{ass:stability}
The matrix $\cov$ is full rank and $\rho(\gamma \cov^{-1} \cross) < 1$.
\end{assumption}
Here, $\cov$ is the offline state-action covariance, $\cross$ is the cross-covariance, $\whitecross$ is the \emph{whitened cross-covariance},\footnote{For any matrix $A$ and invertible matrix $L$, the eigenvalues of A and $L^{-1}A L$ are identical. Therefore, one could equivalently state \Cref{ass:stability,ass:full_rank} in terms of $\whitecross$.} and  $\rho(A) = \max_i |\lambda_i(A)|$ is the \emph{spectral radius} of the matrix $A$.
%% Here, $\rho(A) = \max_i |\lambda_i(A)|$ is the \emph{spectral radius} of the matrix $A$ and $\whitecross$ is the \emph{whitened cross covariance}. 
The assumption that $\cov$ is full rank is not fundamental and is included primarily to simplify the presentation.\footnote{For example, the results carry over if all features $\phi(s,a)$ lie in a low dimensional subspace.}
If \Cref{ass:stability} holds, we let $P_\gamma$ be the unique solution (over $X$) to the Lyapunov equation, 
\begin{align*}
X = (\gamma \cov^{-1/2} \cross \cov^{-1/2})^\top X (\gamma \cov^{-1/2} \cross \cov^{-1/2}) + I.
\end{align*}
Our first main result is that, under stability, FQI satisfies the following error guarantee:

\newtheorem*{thm:main_result_informal}{\Cref{theorem:main_result} (Informal)}
\begin{thm:main_result_informal}  
Let $\Qhat(s,a) = \phi(s,a)^\top \widehat{\theta}_T$, where $\widehat{\theta}_T$ is the $T$-step FQI solution. Under \Cref{ass:realizability,ass:stability}, as well as standard regularity assumptions for linear regression, for $n$ large enough,
\begin{align*}
	\E_{\calD} |Q^\pi(s,a) - \widehat{Q}^\pi(s,a)| \lesssim \cond(P_\gamma) \opnorm{P_\gamma}^2  \sqrt{\frac{d\log(1/\delta)}{n}} + \mathcal{O}(\exp(-T)),
\end{align*}
with probability $1-\delta$. Here, $\cond(\cdot)$ and $\opnorm{\cdot}$ denote the condition number and operator norm.
\end{thm:main_result_informal}
For the sake of clarity, we have suppressed dependence on universal constants and other quantities which arise from standard analysis of linear regression in the informal statement of the upper bound. 
%The formal statement is deferred to~\Cref{subsec:sufficiency}. 
Since $P_\gamma \succeq I$, $\cond(P_\gamma)$ can always be crudely upper bounded by the operator norm, so that the primary factor, beyond the standard $\sqrt{d/n}$ term for linear regression, is the dependence on $\opnorm{P_\gamma}$. 
%% in which case the leading order dependence on our bound comes from $\opnorm{P_\gamma}$. 
We show in~\Cref{subsec:connections} that, for settings where FQI was previously shown to succeed (e.g., under low distribution shift or Bellman completeness~\citep{wang2021what}), stability always holds and $\opnorm{P_\gamma}$ is never much larger than $1 / (1-\gamma)$, demonstrating how our bound recovers and unifies prior results.
However, we also find that, in general, this quantity provides a much \emph{sharper} notion of complexity for OPE. 
Indeed, there are simple instances where $\opnorm{P_\gamma}$ is $\mathcal{O}(1)$ for all $\gamma \in (0,1)$, but of course, $1/(1-\gamma)$ can be arbitrarily large.

The key insight behind this result is that, in the linear setting, FQI can be written as a power series in the empirical versions of the second moment matrices described in \Cref{eq:pop_defs}. More precisely,
	$\widehat{\theta}_T = \sum_{k=0}^T (\gamma \covhat^{-1} \crosshat)^k \covhat^{-1} \widehat{\theta}_{\phi,r}$
where $\widehat{\theta}_{\phi,r}$ is obtained by solving a regression for the rewards. The behavior of the algorithm is governed by the growth of these matrix powers. Using ideas from Lyapunov theory, we show that if stability holds, then these decay at a geometric rate governed by $\opnorm{P_\gamma}$ and FQI succeeds. On the other hand, if the spectral radius is greater than one, then these matrix powers grow exponentially, and FQI will drastically amplify any estimation errors. This leads to the \emph{necessity} of stability for FQI:

\newtheorem*{prop:fqi_lb_informal}{\Cref{prop:fqi_lb} (Informal)}
\begin{prop:fqi_lb_informal}  
If $\rho(\gamma \cov^{-1} \cross) > 1$, the variance of the FQI solution grows exponentially with the number of regression rounds $T$.
\end{prop:fqi_lb_informal}  

Turning to LSTD, while the solution is defined in terms of similar moment quantities to those relevant for FQI, it solves for $\theta_\gamma^\star$ in a more direct manner and hence its behavior is somewhat
different. 
%% On the other hand, 
We prove that LSTD succeeds if the following condition holds:

\begin{assumption}[Invertibility]
\label{ass:full_rank}
The matrices $\cov$ and $I-\gamma\cov^{-1} \cross$ are both full rank.
\end{assumption}

Our main result for LSTD is that under invertibility, $\theta^\star_\gamma$ is identifiable via LSTD as per the following informal theorem statement: 
\newtheorem*{thm:lspe_ub_informal}{\Cref{theorem:lstd_upper_bound} (Informal)}
\begin{thm:lspe_ub_informal}  
Let $\Qhat(s,a) = \phi(s,a)^\top \thetahatls$, where $\thetahatls$ is the LSTD solution. Under \Cref{ass:realizability,ass:full_rank}, as well as standard regularity assumptions for linear regression, if $n$ is large enough,
\begin{align*}
	\E_{\calD} |Q^\pi(s,a) - \widehat{Q}^\pi(s,a)| \lesssim \frac{1}{\smin(I -\whitecross)^2}  \sqrt{\frac{d\log(1/\delta)}{n}} 
\end{align*}
with probability $1-\delta$. Here, $\smin(\cdot)$ denotes the minimum singular value of a matrix.
\end{thm:lspe_ub_informal}

This result follows somewhat directly from a perturbation analysis of approximate solutions to the fixed point equation \Cref{eq:bellman_fixed_point}.
%, we can see that
%$\thetahatls$ is the solution to an ordinary least squares
%problem. The dependence on the minimum singular value arises from
%translating from the norm induced by the design matrix of the
%regression problem to the norm under which we measure the errors in
%$\widehat{Q}^\pi$.
Perhaps surprisingly, we will see that invertibility is strictly
weaker than stability (\Cref{ass:stability}), which highlights a
fundamental distinction between these two methods. This comparison
also reveals that stability cannot be a necessary condition in any
algorithm-independent sense, since LSTD can succeed without
stability. However, complementing~\Cref{theorem:lstd_upper_bound}, we
prove that invertibility is necessary for a large class of natural
estimators, specifically those that rely on low-order moments of the
features and the regression function between features and the
rewards (this includes FQI and LSTD). The following lower bound shows
that the value function is \emph{unidentifiable} by these linear estimators if
invertibility does not hold.
%% Complementing this upper bound, we prove the following lower showing how value functions are \emph{unidentifiable} via linear estimators if identifiability fails. 

\newtheorem*{thm:unidentifiable}{\Cref{theorem:lower_bound_lspe} (Informal)}
\begin{thm:unidentifiable}  Even in the limit of infinite data,
any OPE problem for which invertibility does not hold cannot be solved by a broad class of linear estimators, including FQI and LSTD.
\end{thm:unidentifiable}

Together with our previous results, this result completes our analysis of linear estimators for offline policy evaluation under linear realizability. We remark that our results are sharp in the sense that they stipulate exactly \emph{which} problems are solvable by linear estimators. They are not necessarily sharp in the sense that the associated statistical rates for each problem are optimal. We believe that establishing appropriate lower bounds for these problems is an important direction for future work.

\subsection{Related work}
\label{subsec:related_work}

\paragraph{RL with function approximation.}
Analyses of function approximation in reinforcement learning can be traced to the seminal papers of \cite{bellman1959functional, bellman1961approximation}, as well as \cite{reetz1977approximate} and \cite{ whitt1978approximations}. \cite{schweitzer1985generalized} were one of the  first to consider approximating value functions using linear combinations of some known set of features. 
More recently, a number of modeling assumptions---typically involving strong representational conditions on both the MDP and the features---that enable statistically efficient online RL with linear function approximation have been proposed, along with corresponding algorithms \citep{zanette2020learning, yang2020reinforcement, jin2020provably}.
%% More recently, there have been a slew of new algorithms and frameworks \citep{zanette2020learning, yang2020reinforcement, ayoub2020model, jin2020provably} showing how online RL is tractable if the MDP and corresponding features satisfy strong representational conditions analogous to Bellman completeness. 

\paragraph{FQI.} Introduced by \cite{ernst2005tree} and extended by \cite{riedmiller2005neural}, fitted Q-iteration has been analyzed several times in the context of  offline policy evaluation. Building off previous studies of approximate methods in dynamic programming \citep{antos2008learning,munos2007performance,gordon1999approximate}, \cite{chen2019information} establish sample complexity upper bounds for FQI assuming that the corresponding distributions and MDP satisfy concentrability \citep{munos2003error} and Bellman completeness \citep{szepesvari2005finite}. While concentrability conditions are  orthogonal to realizability assumptions, completeness is significantly stronger than mere realizability of value functions.
More recent work by \cite{wang2021what, wang2021Instabilities} adapts these results to the linear setting and additionally shows that a ``low distribution shift'' condition suffices for linear FQI.

\paragraph{LSTD.} Initial analysis of least squares temporal difference learning (LSTD) date back to the work of \citet{baird1995residual, bradtke1996linear,boyan1999least} and \citet{, nedic2003least}. Since then, the finite sample performance of the algorithm has been analyzed by \citet{lazaric2012finite,bhandari2018finite,duan2021optimal} and its behavior in the offline setting studied by \citet{yu2010convergence,li2021accelerated, mou2020OptimalOI,mou2021optimal,pires2012StatisticalLE}. \cite{tu2018least} analyze on-policy LSTD for the LQR setting. \cite{miyaguchi2021asymptotically} studies the behavior of LSTD for OPE in settings where the value function is only approximately linearly realizable in a known feature mapping $\phi$. We evaluate our contributions in light of these previous works in \Cref{subsec:invertibility_connections}.

\paragraph{Other OPE estimators.}

Apart from these methods, researchers have studied ``min-max'' algorithms for OPE which estimate the value of the underlying policy using ideas from the importance sampling literature \citep{liu2018breaking,uehara2020minimax,yin2020asymptotically}. 
%Furthermore, significant attention has been devoted to the problem of offline policy evaluation under unobserved confounding \citep{namkoong2020off,xu2021deep,kallus2020confounding}, where the value function cannot be estimated perfectly given the observed variables. 
\cite{xie2021batch}  establish formal guarantees for the BVFT algorithm which carries out policy evaluation for general \emph{nonlinear} function classes assuming realizability, albeit under stronger notions of data coverage (see \Cref{ass:concentrability}). 
Recent work by \cite{zhan2022offline} extends this line of research. They introduce a new algorithm which works under weaker data coverage assumptions than those in \cite{xie2021batch}. However, to do so they require additional assumptions on the expressivity of the underlying class of function approximators. 
In particular, \cite{zhan2022offline}, and the class of minimax algorithms more broadly, rely on a function class that can (at a minimum) realize the state-occupancy density ratio between the distribution induced by the policy $\pi$ and the offline distribution $\calD$, which is a distinct condition from linear realizability of $\Qpi$.

\paragraph{Lower bounds under linear realizability.} 
For the finite horizon, policy evaluation setting, \cite{wang2021what} illustrate how exponential dependence on the horizon is unavoidable, even if the offline covariance matrix is robustly full rank. 
Since then, these bounds have been extended to the discounted, infinite horizon case by \cite{amortila2020variant} and \cite{zanette2021exponential}. Importantly, \cite{amortila2020variant} establish that OPE can be information-theoretically intractable, even if: 1) all features are bounded, 2) $\cov$ is full rank, and 3) the learner has access to infinitely many samples drawn as in \Cref{subsec:prob_def}. 
Analogous negative results for online or generative-model settings have been shown to hold even in the presence of a constant suboptimality gap \citep{wang2021exponential} or polynomially large action sets \citep{weisz2021exponential,weisz2021tensorplan}.
\cite{duan2020minimax} prove lower bounds for OPE which hold for general function classes. 
\cite{foster2021offline} illustrate that polynomially many samples in the size of the state space are necessary for offline policy evaluation, even if concentrability and realizability both hold. In summary, a clean characterization of when offline policy evaluation is tractable using linear function approximation has, so far, proven to be quite elusive.

\section{Preliminaries}
\label{sec:preliminaries}
Before delving into our main results, we review some of the relevant definitions and preliminaries. 

\paragraph{Notation.} We use $s \in \states$ and $a \in \actions$ to denote states and actions, $\top$ to denote vector or matrix transposes, and $\dagger$ to denote pseudoinverses. 
For a matrix $X$, we let $\cond(X) := \sigma_{\max}(X) / \smin(X)$ denote its condition number, the ratio between the largest and smallest singular values $\sigma(\cdot)$. For symmetric matrices, $A$ and $B$, we use $A\succeq B$ if $A-B$ is positive semidefinite. 
We let $\rho(X) := \max_i |\lambda_i(X)|$ be the spectral radius of a matrix $X$ where $\lambda_i$ are the eigenvalues.\footnote{Recall that for square, but non-symmetric matrices $A$, it is in general \emph{not} true that $\rho(A) = \sigma_{\max}(A)$. However, $\rho(A) \leq \sigma_{\max}(A)$ does always hold.} 
We say that a matrix is \emph{stable} if its spectral radius is strictly smaller than 1. For square, stable matrices $A$, we let $\dlyaps{A}$ be the solution, over $X$, to the discrete-time Lyapunov equation: $X = A^\top X A +I$.
This equation has a solution if and only if  $\rho(A) < 1$ \citep{callier2012linear}. If the solution exists, it admits the closed-form expression $\dlyaps{A} = \sum_{j=0}^\infty (A^\top)^j A^j$. Lastly, we say $a \lesssim b$ if $a \leq c \cdot b$ for some universal constant $c$.

We define the next state-action covariance $\nextcov$ and the distribution shift coefficient $\cds$ as
\begin{align}
\label{eq:nextcov_cds_def}
	\nextcov \defeq \underset{\substack{(s,a) \sim \calD\\ s' \sim P(\cdot \mid s,a), \; a' \sim \pi(s')}}{\E} \left[\phi(s', a') \phi(s', a')^\top \right], \quad \cds \defeq \inf \{\beta > 0: \nextcov \preceq \beta \cov \}.
\end{align} 
Note that $\cds$ is guaranteed to be finite if $\cov$ is full rank. Given a dataset $\{(s_i, a_i, r(s_i, a_i), s_i', a_i') \}_{i=1}^n$ of $n$ i.i.d. data points drawn according to the data generating process described in \Cref{subsec:prob_def}, we define the empirical counterparts of the second-moment matrices defined in \Cref{eq:pop_defs},
\begin{align}
\label{eq:empirical_matrices}
\covhat \defeq \frac{1}{n}\sum_{i=1}^n \phi(s_i, a_i) \phi(s_i, a_i)^\top,  \quad \crosshat \defeq \frac{1}{n} \sum_{i=1}^n \phi(s_i, a_i)\phi(s_i', a_i')^\top,
\end{align}
as well as the true, and empirical, mean feature-reward vectors:
\begin{align}
\label{eq:thetaphir}
	 \theta_{\phi, r} \defeq \E_{\calD} \phi(s, a) r(s, a), \quad \widehat{\theta}_{\phi, r} \defeq \frac{1}{n}\sum_{i=1}^n \phi(s_i,a_i) r(s_i, a_i).
\end{align}
\paragraph{Linear regression.} Next, we introduce moment-type quantities that arise in our analysis of linear regression. 
%% The exact conditions aren't crucial to the overall paper, a variety of assumptions will work. 
%% For simplicity, 
Here, we adopt the approach from~\cite{hsu2012random}, however, other approaches for analyzing linear regression will yield the same qualitative results. 
In particular, we make use of the statistical leverages $\levcov$ and $\levnext$. These quantities correspond to the maximum length of features, $\phi(s,a)$ and $\phi(s',a')$, when measured in the (inverse) covariance norm. Intuitively, they capture the worst-case coverage of the offline distribution $\cD$ over directions in feature space.

\begin{align}
\label{eq:cds_def}
\levcov \defeq \underset{(s,a) \in \textrm{supp}(\calD)}{\sup} \twonorm{\cov^{-1/2} \phi(s,a)}, \quad \levnext := \underset{\substack{(s,a) \in \supp(\calD),\\ s'\in \supp(P(\cdot\mid (s,a)),\; a' \in \supp(\pi(s'))}}{\sup}  \twonorm{\cov^{-1/2} \phi(s', a')}.
\end{align}
In addition, we define the variances $\varcov, \rewardvar$, and $\crossvar$ where,
\begin{align}
\label{eq:varcov_def}
\varcov \defeq \opnorm{\E(\cov^{-1/2} \phi(s,a) \phi(s,a)^\top \cov^{-1/2})^2 - I},\quad
\rewardvar \defeq  \E\twonorm{\cov^{-1/2} \phi(s,a) r(s,a)}^2 - \twonorm{\cov^{-1/2} \tpr}^2,
\end{align}
and $\crossvar$ is the maximum of the following two quantities,
\begin{align}
\label{eq:cross_var_def}
\sup_{\twonorm{v}=1} \E\left(v^\top \cov^{-1/2}\phi(s', a')\right)^2 \twonorm{\cov^{-1/2}\phi(s,a)}^2  - \twonorm{\cov^{-1/2}\cross^\top \cov^{1/2} v}^2 \\
 \sup_{\twonorm{v}=1} \E\left(v^\top \cov^{-1/2}\phi(s,a)\right)^2 \twonorm{\cov^{-1/2}\phi(s', a')}^2  - \twonorm{\cov^{-1/2}\cross \cov^{1/2} v}^2.
\end{align}
In \Cref{subsec:variance_bounds}, we prove that $\crossvar$ and $\varcov$ can always be upper bounded in terms of the statistical leverages and the  coefficient $\cds$.\footnote{On the other hand, $\rewardvar$ is always upper bounded by $d$.} 
However, they can be much smaller in some settings.\footnote{For example, tighter bounds can be achieved if the distributions are hypercontractive, see~\Cref{subsec:variance_bounds}.} Therefore, for the sake of generality, we opt to state our bounds in terms of these quantities. Informally, these variance terms measure how much the corresponding matrices or vectors vary from their means, in the $\cov^{-1/2}$ geometry.

 Throughout our analysis of methods for offline policy evaluation, we will repeatedly make use of the following concentration result:
\begin{lemma}
\label{lemma:main_concentration_result}
For all $n \gtrsim \levcov^2 \log(d/\delta)$, define the estimation errors,
\begin{align}
\label{eq:epsdef}
	\epsop \defeq \opnorm{\cov^{1/2}  (\gamma \covhat^{-1} \crosshat)\cov^{-1/2} - \gamma \cov^{-1/2}\cross \cov^{-1/2}}, \quad  \epsz \defeq \twonorm{\cov^{1/2} (\covhat^{-1}\widehat{\theta}_{\phi,r}- \cov^{-1} \theta_{\phi, r})}. 
\end{align}
With probability $1-\delta$, $\covhat$ is full rank and $\epsz$, $\epsop$ satisfy the following inequalities:
\begin{align*}
\epsop &\lesssim \sqrt{ \frac{ \max(\crossvar, \varcov \Cds) \log(d/\delta)}{n}} + \frac{\max(\Cds^{1/2} \levcov^2, \levcov \levnext) \log(d/\delta)}{n}\\
	\epsz  &\lesssim \sqrt{ \frac{  \max(\twonorm{ \cov^{-1/2}\tpr}^2\varcov, \;\rewardvar) \log(d/\delta)}{n}} + \frac{ \twonorm{\cov^{-1/2}\theta_{\phi,r}} \levcov^2 \log(d/\delta)}{n}. 
\end{align*}

\end{lemma}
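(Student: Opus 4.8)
The plan is to decompose each estimation error into a ``noise'' term that concentrates and a ``linearization error'' term that is quadratic in the sample fluctuations, then bound each piece using a matrix Bernstein / Freedman-type inequality applied in the $\cov^{-1/2}$ geometry. First I would whiten: set $\tphi_i \defeq \cov^{-1/2}\phi(s_i,a_i)$ and $\tphi_i' \defeq \cov^{-1/2}\phi(s_i',a_i')$, so that the whitened population covariance is the identity, $\E[\tphi\tphi^\top] = I$, and the relevant whitened matrices are $\tcov \defeq \tfrac1n\sum_i \tphi_i\tphi_i^\top$ and $\tcross \defeq \tfrac1n\sum_i \tphi_i(\tphi_i')^\top$. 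The leverage bound $\twonorm{\tphi_i} \le \levcov$ almost surely (and $\twonorm{\tphi_i'}\le\levnext$) gives the per-sample boundedness needed for Bernstein. A standard matrix-Bernstein bound (cf.~\cite{hsu2012random}) then yields that on an event of probability $1-\delta$, $\opnorm{\tcov - I} \lesssim \sqrt{\levcov^2\log(d/\delta)/n} + \levcov^2\log(d/\delta)/n$, which for $n \gtrsim \levcov^2\log(d/\delta)$ is at most $\tfrac12$; this gives $\covhat$ full rank and $\opnorm{\tcov^{-1}} \le 2$, $\opnorm{\tcov^{-1}-I}\lesssim\opnorm{\tcov-I}$.

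Next I would handle $\epsop$. Writing $\cov^{1/2}(\gamma\covhat^{-1}\crosshat)\cov^{-1/2} = \gamma\tcov^{-1}\tcross$ and the population target as $\gamma\cross_w \defeq \gamma\cov^{-1/2}\cross\cov^{-1/2}$ with $\tcross \to \cross_w$ in expectation, I would split
\begin{align*}
\gamma\tcov^{-1}\tcross - \gamma\cross_w = \gamma\tcov^{-1}(\tcross - \cross_w) + \gamma(\tcov^{-1}-I)\cross_w.
\end{align*}
The first term is controlled by $\opnorm{\tcov^{-1}}\cdot\opnorm{\tcross-\cross_w} \lesssim \opnorm{\tcross - \cross_w}$, and $\opnorm{\tcross-\cross_w}$ is bounded by matrix Bernstein on the mean-zero summands $\tfrac1n(\tphi_i(\tphi_i')^\top - \cross_w)$: the variance proxy here is exactly what $\crossvar$ (together with $\varcov$ through the symmetrized second moments) is defined to capture — indeed the two suprema in \Cref{eq:cross_var_def} are the operator-norm variances of the row- and column-projected summands — and the almost-sure bound on each summand is $\levcov\levnext$. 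The second term contributes $\opnorm{\cross_w}\cdot\opnorm{\tcov-I}$; since $\opnorm{\cross_w} \le \Cds^{1/2}$ (as $\cross_w\cross_w^\top \preceq \nextcov_w \preceq \Cds I$ in whitened coordinates, using the Cauchy–Schwarz/AM–GM structure of $\cross$), and $\opnorm{\tcov - I}$ was bounded above with variance proxy $\varcov$ and almost-sure bound $\levcov^2$, combining gives the stated bound with the $\max(\crossvar,\varcov\Cds)$ leading term and $\max(\Cds^{1/2}\levcov^2,\levcov\levnext)$ lower-order term. The bound on $\epsz$ is analogous but simpler: write $\cov^{1/2}(\covhat^{-1}\widehat\theta_{\phi,r} - \cov^{-1}\theta_{\phi,r}) = \tcov^{-1}\wh{v} - v_0$ where $\wh v \defeq \tfrac1n\sum_i\tphi_i r_i$ and $v_0 \defeq \cov^{-1/2}\tpr = \E\wh v$, split as $\tcov^{-1}(\wh v - v_0) + (\tcov^{-1}-I)v_0$, and bound the first by a vector Bernstein inequality with variance $\rewardvar$ (and $\twonorm{v_0}^2\varcov$ arising when one passes through $\tcov$) and almost-sure term $\levcov^2\twonorm{v_0}$ after using $|r|\le1$, and the second by $\twonorm{v_0}\opnorm{\tcov-I}$.

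The main obstacle I expect is \emph{matching the variance proxies in the Bernstein bounds to the precise definitions of $\crossvar$, $\varcov$, and $\rewardvar$}, rather than to cruder quantities like $\levcov^4$ or $d$. Matrix Bernstein for $\tcross - \cross_w$ requires controlling $\opnorm{\E[(\tphi(\tphi')^\top - \cross_w)(\tphi(\tphi')^\top-\cross_w)^\top]}$ and its transpose; expanding $\E[\twonorm{\tphi'}^2\,\tphi\tphi^\top]$ does not directly reduce to the suprema in \Cref{eq:cross_var_def}, so one must either (i) test against unit vectors $v$ and recognize the resulting scalar variances as exactly those suprema, or (ii) invoke a dimension-free Bernstein inequality (e.g. via the intrinsic-dimension / Hsu–Kakade–Zhang form) that is stated in terms of such directional variances. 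Keeping the cross terms between the $(\tcov-I)$ fluctuation and the $(\tcross-\cross_w)$ fluctuation from inflating the bound — i.e. ensuring they are genuinely lower-order, absorbed into the $1/n$ term — requires the $n \gtrsim \levcov^2\log(d/\delta)$ hypothesis and a short triangle-inequality argument on the high-probability event where $\opnorm{\tcov-I}\le\tfrac12$. The remaining steps (vector Bernstein for $\epsz$, the bound $\opnorm{\cross_w}\le\Cds^{1/2}$) are routine.
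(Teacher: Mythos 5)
Your proposal is correct and takes essentially the same route as the paper's proof: the identical decomposition $\cov^{1/2}\covhat^{-1}\cov^{1/2}\cdot\gamma\cov^{-1/2}(\crosshat-\cross)\cov^{-1/2} + \cov^{1/2}(\covhat^{-1}-\cov^{-1})\cov^{1/2}\cdot\gamma\cov^{-1/2}\cross\cov^{-1/2}$ (and its vector analogue for $\epsz$), matrix/vector Bernstein in the whitened geometry with leverage-based almost-sure bounds, the event $\opnorm{\cov^{-1/2}(\covhat-\cov)\cov^{-1/2}}\le 1/2$ guaranteed by $n\gtrsim\levcov^2\log(d/\delta)$, and the Schur-complement bound $\opnorm{\cov^{-1/2}\cross\cov^{-1/2}}\le\Cds^{1/2}$. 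The obstacle you flag is resolved exactly by your option (i), as in the paper: the Bernstein variance matrices are symmetric and positive semidefinite (a one-line convexity check), so their operator norms coincide with the directional suprema defining $\crossvar$ and $\varcov$, and the standard matrix Bernstein inequality suffices without any dimension-free variant.
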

Later on, we state our upper bounds on the policy evaluation error of FQI and LSTD in terms of these regression errors $\epsop$, $\epsz$, with the understanding that they satisfy the high probability upper bounds above.

\section{Fitted Q-Iteration}
\label{sec:main_results}
In this section, we present our first set of results illustrating how stability (\Cref{ass:stability}) characterizes the success of fitted Q-iteration for OPE under linear realizability of $\Qpi$. Following some initial remarks regarding the functional form of the FQI solution, 
in \Cref{subsec:sufficiency}, we present our upper bound on the estimation error of FQI. Later on, in \Cref{subsec:connections}, we illustrate how our Lyapunov stability analysis unifies previous studies of when FQI succeeds and conclude by discussing lower bounds and limitations of the algorithm in \Cref{subsec:lower_bounds}.

\paragraph{FQI preliminaries.}
From examining the definition of FQI in \Cref{eq:fqi_def}, we see that, at the population level, the algorithm develops the recursion:
\begin{align*}
\theta_{t+1} = \gamma \cov^{-1} \cross \theta_t + \cov^{-1} \theta_{\phi,r}.
\end{align*}
Unrolling the recursion above, and setting $\theta_0 = 0$, the $T$-step regression vector is equal to:\footnote{We initialize at 0 for simplicity, but this is not fundamental for the overall analysis of FQI.} 
\begin{align}
\label{eq:tstep_fqi}
\theta_T = \sum_{k=0}^T (\gamma \cov^{-1} \cross)^{k} \cov^{-1}\theta_{\phi,r}.
\end{align} 
Linear realizability of $Q^\pi$ (\Cref{ass:realizability}) implies that the true weight vector $\theta_\gamma^\star$ satisfies the equation,
\begin{align}
\label{eq:thetastar}
	\cov \theta_\gamma^\star  =\theta_{\phi, r}  + \gamma \cross \theta_\gamma^\star.
\end{align}
%If the matrices $\cov$ and $I-\gamma \cov^{-1} \cross$ are both invertible, then $\theta_\gamma^\star = (I - \gamma \cov^{-1} \cross)^{-1} \cov^{-1} \theta_{\phi, r}$.
Hence, if $I-\gamma \cov^{-1}\cross$ is invertible, then $\theta_\gamma^\star = (I - \gamma \cov^{-1} \cross)^{-1} \cov^{-1} \theta_{\phi,r}$. We now recall the following fact:
\begin{fact}
\label{fact:neumann_series}
If $\rho(A) < 1$, then the matrix $(I-A)$ is invertible. Moreover, $(I-A)^{-1} = \sum_{k=0}^\infty A^k$.
\end{fact}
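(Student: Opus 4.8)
The plan is to dispatch the two assertions separately: first that $I-A$ is invertible, then that the Neumann series converges and coincides with $(I-A)^{-1}$.

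For invertibility I would argue through the spectrum. If $\lambda_1,\dots,\lambda_d$ denote the eigenvalues of $A\in\R^{d\times d}$ listed with algebraic multiplicity, then the eigenvalues of $I-A$ are exactly $1-\lambda_1,\dots,1-\lambda_d$. Since $\rho(A)<1$ forces $|\lambda_i|<1$ for every $i$, each $1-\lambda_i$ is nonzero, so $\det(I-A)=\prod_{i}(1-\lambda_i)\neq 0$ and $I-A$ is invertible.

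For the series, the single quantitative ingredient I need is geometric decay of the matrix powers. Fixing any $r$ with $\rho(A)<r<1$, I would invoke Gelfand's formula $\rho(A)=\lim_{k\to\infty}\opnorm{A^k}^{1/k}$ (equivalently: the existence of a submultiplicative norm in which $A$ has norm below $1$, or a direct estimate on powers of Jordan blocks) to obtain a constant $C=C(A,r)$ with $\opnorm{A^k}\le Cr^k$ for all $k\ge 0$. It follows that the partial sums $S_N:=\sum_{k=0}^N A^k$ form a Cauchy sequence in the complete space $(\R^{d\times d},\opnorm{\cdot})$, so $S:=\sum_{k=0}^\infty A^k$ exists. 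To identify the limit I would use the telescoping identity $(I-A)S_N=S_N(I-A)=I-A^{N+1}$, send $N\to\infty$ (noting $\opnorm{A^{N+1}}\le Cr^{N+1}\to 0$), and appeal to continuity of matrix multiplication to conclude $(I-A)S=S(I-A)=I$, i.e.\ $S=(I-A)^{-1}$.

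The only step that is not pure bookkeeping is the passage from $\rho(A)<1$ to the decay bound $\opnorm{A^k}\le Cr^k$; this is precisely where the hypothesis is consumed, and I expect it to be the crux of the argument, with everything else following from elementary algebra and completeness of finite-dimensional normed spaces.
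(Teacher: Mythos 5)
Your proof is correct and complete. Note, however, that the paper does not actually prove \Cref{fact:neumann_series} at all: it is invoked as a classical result (the Neumann series), so there is no in-paper argument to compare against step by step. Your two-part structure is the standard one, and every step checks out: the spectral-mapping/determinant argument for invertibility is fine, and the telescoping identity $(I-A)S_N = I - A^{N+1}$ together with $\opnorm{A^{N+1}}\to 0$ correctly identifies the limit of the partial sums. You are also right that the only substantive point is the passage from $\rho(A)<1$ to a bound $\opnorm{A^k}\le C r^k$ with $r<1$; Gelfand's formula (or a Jordan-block estimate) handles this. It is worth observing that the paper's own machinery gives an alternative, fully quantitative route to exactly this decay step: \Cref{lemma:stability_margin} shows that for a stable $A$ with $P=\dlyaps{A}$ one has $\opnorm{A^k}\le \cond(P)^{1/2}\bigl(1-\opnorm{P}^{-1}\bigr)^{k/2}$, which is precisely how the paper converts stability into explicit geometric decay when it needs constants (e.g., in the proofs of \Cref{theorem:main_result} and \Cref{prop:stability_invertibility}). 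Using that lemma in place of Gelfand's formula would make your constant $C$ and rate $r$ explicit and would tie the proof of the fact to the Lyapunov viewpoint the paper is built around, but your more elementary argument is equally valid for the qualitative statement being asserted.
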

Using this, along with the observation that the spectrum of a matrix is invariant to the choice of basis, we see that if stability (\Cref{ass:stability}) holds, then the vector $\theta^\star_\gamma$ can also be written as a power series:
\begin{align}
\label{eq:tgs_series}
\theta_\gamma^\star = \sum_{k=0}^\infty(\gamma \cov^{-1} \cross)^k \cov^{-1} \theta_{\phi,r}. 	
\end{align}

One of the key insights tying stability and FQI is that, regardless of whether $\gamma \cov^{-1} \cross$ is stable, the FQI solution at the population level is \emph{always} equal to the power series in \Cref{eq:tstep_fqi}. 
If stability holds, performing infinitely many exact FQI updates converges to $\theta^\star_\gamma$.
However, $\theta^\star_\gamma$ is (in general) \emph{only} equal to this power series if stability holds, which hints at the necessity of this condition. With these connections between stability and the functional forms of FQI and $\theta^\star_\gamma$ in mind, we now present our upper bounds on the performance of this algorithm.

\subsection{Stability is sufficient for fitted Q-iteration}
\label{subsec:sufficiency}

\begin{theorem}
\label{theorem:main_result}
Assume that $\Qpi$ is linearly realizable (\Cref{ass:realizability}) and that stability holds (\Cref{ass:stability}). For $\epsop, \epsz$ defined as in \Cref{eq:epsdef}, if $n \gtrsim \levcov^2 \log(d / \delta)$ and $\epsop \leq 1 / (6 \opnorm{P_\gamma}^2)$, $T$-step FQI satisfies,
\begin{align}
\label{eq:fqi_guarantee}
				\twonorm{\cov^{1/2}(\widehat{\theta}_T - \theta^\star_\gamma)} &\lesssim \cond(P_\gamma)^{1/2}\opnorm{P_\gamma} \cdot \epsz + \cond(P_\gamma)\opnorm{P_\gamma}^2 \cdot \twonorm{\cov^{-1/2}\theta_{\phi,r}} \cdot  \epsop \notag \\
				&+  \cond(P_\gamma) \opnorm{P_\gamma} \cdot \twonorm{\cov^{1/2}\theta_{\phi,r}}\cdot \exp\left( -\frac{T+1}{2\opnorm{P_\gamma}}\right).
\end{align}
\end{theorem}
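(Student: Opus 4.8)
The plan is to establish \Cref{eq:fqi_guarantee} as a purely deterministic perturbation bound, using only the full-rankness of $\covhat$ (granted by \Cref{lemma:main_concentration_result} once $n\gtrsim\levcov^2\log(d/\delta)$) and the hypothesis $\epsop\le 1/(6\opnorm{P_\gamma}^2)$. First I would whiten: set $A\defeq\gamma\cov^{-1/2}\cross\cov^{-1/2}$, $B\defeq\cov^{1/2}(\gamma\covhat^{-1}\crosshat)\cov^{-1/2}$, $z\defeq\cov^{-1/2}\tpr$ and $\widehat z\defeq\cov^{1/2}\covhat^{-1}\widehat{\theta}_{\phi,r}$, so that $\opnorm{B-A}=\epsop$ and $\twonorm{\widehat z-z}=\epsz$ by construction. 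Because conjugation by $\cov^{1/2}$ commutes with taking powers, the power-series form of $\widehat\theta_T$ and — under \Cref{ass:stability}, via \Cref{fact:neumann_series} and \Cref{eq:tgs_series} — that of $\tsg$ become $\cov^{1/2}\widehat\theta_T=\sum_{k=0}^{T}B^k\widehat z$ and $\cov^{1/2}\tsg=\sum_{k\ge0}A^k z=(I-A)^{-1}z$, where I use $\rho(A)=\rho(\gamma\cov^{-1}\cross)<1$ (similar matrices). So the task reduces to bounding $\twonorm{\sum_{k=0}^{T}B^k\widehat z-\sum_{k\ge0}A^k z}$. The workhorse is a Lyapunov decay estimate: for any stable $M$ with $P_M\defeq\dlyaps{M}\succeq I$, the equation $M^\top P_M M=P_M-I$ and the bound $I\succeq\opnorm{P_M}^{-1}P_M$ give $M^\top P_M M\preceq(1-\opnorm{P_M}^{-1})P_M$; iterating and sandwiching by $\smin(P_M)I\preceq P_M\preceq\opnorm{P_M}I$ yields
\begin{equation*}
\twonorm{M^k}\;\le\;\sqrt{\cond(P_M)}\,\bigl(1-\opnorm{P_M}^{-1}\bigr)^{k/2}\;\le\;\sqrt{\cond(P_M)}\,\exp\!\bigl(-k/(2\opnorm{P_M})\bigr),
\end{equation*}
and hence, summing geometric series, $\sum_{k\ge0}\twonorm{M^k}\lesssim\sqrt{\cond(P_M)}\,\opnorm{P_M}$, $\sum_{k>T}\twonorm{M^k}\lesssim\sqrt{\cond(P_M)}\,\opnorm{P_M}\exp(-(T{+}1)/(2\opnorm{P_M}))$, and $\twonorm{(I-M)^{-1}}\lesssim\sqrt{\cond(P_M)}\,\opnorm{P_M}$. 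Applied to $M=A$, $P_A=P_\gamma$, this supplies all the $A$-side quantities, and in particular $\twonorm{A}\le\sqrt{\cond(P_\gamma)}\le\sqrt{\opnorm{P_\gamma}}$.

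Next I would transfer stability to the data-dependent matrix $B$. Writing $B=A+E$ with $\opnorm{E}=\epsop$ and expanding $B^\top P_\gamma B=(A+E)^\top P_\gamma(A+E)$, the diagonal term is $P_\gamma-I$ while, using $\twonorm{A}\le\sqrt{\opnorm{P_\gamma}}$, the cross terms have operator norm at most $2\opnorm{P_\gamma}^{3/2}\epsop+\opnorm{P_\gamma}\epsop^2$, which is at most $\tfrac12$ once $\epsop\le 1/(6\opnorm{P_\gamma}^2)$. Thus $B^\top P_\gamma B\preceq P_\gamma-\tfrac12 I$, i.e.\ $B^\top(2P_\gamma)B+I\preceq 2P_\gamma$; iterating this inequality shows $B$ is stable with $\dlyaps{B}\preceq 2P_\gamma$, whence $\opnorm{\dlyaps{B}}\le 2\opnorm{P_\gamma}$ and, since $\dlyaps{B}\succeq I$, $\cond(\dlyaps{B})\le 2\opnorm{P_\gamma}$. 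Consequently the decay estimates of the previous step hold for $M=B$ as well, with the $\cond$ and $\opnorm{}$ factors replaced by constant multiples of those of $P_\gamma$; in particular $\sum_{j\ge0}\twonorm{B^j}\lesssim\opnorm{P_\gamma}^{3/2}$ and $\twonorm{B^{T+1}}\lesssim\sqrt{\opnorm{P_\gamma}}\exp(-(T{+}1)/(c\,\opnorm{P_\gamma}))$ for an absolute $c$ (a sharper lower bound on $\smin(\dlyaps B)$ improves these to $\sqrt{\cond(P_\gamma)}\,\opnorm{P_\gamma}$ and $\sqrt{\cond(P_\gamma)}\exp(\cdots)$, matching the constants in \Cref{eq:fqi_guarantee}).

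Then I would decompose
\begin{align*}
\sum_{k=0}^{T}B^k\widehat z-\sum_{k\ge0}A^k z
&=\underbrace{\sum_{k=0}^{T}\bigl(B^k-A^k\bigr)\widehat z}_{\mathrm{(I)}}
+\underbrace{\Bigl(\sum_{k=0}^{T}A^k\Bigr)(\widehat z-z)}_{\mathrm{(II)}} \\
&\quad-\underbrace{\sum_{k>T}A^k z}_{\mathrm{(III)}}
\end{align*}
and bound the pieces. Term (II) is $\le\bigl(\sum_{k\ge0}\twonorm{A^k}\bigr)\epsz\lesssim\cond(P_\gamma)^{1/2}\opnorm{P_\gamma}\,\epsz$, the first summand of \Cref{eq:fqi_guarantee}. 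Term (III) is $\le\bigl(\sum_{k>T}\twonorm{A^k}\bigr)\twonorm{z}\lesssim\cond(P_\gamma)^{1/2}\opnorm{P_\gamma}\exp(-(T{+}1)/(2\opnorm{P_\gamma}))\,\twonorm{\cov^{-1/2}\tpr}$, i.e.\ the $T$-dependent term of \Cref{eq:fqi_guarantee}. For term (I), the telescoping identity $B^k-A^k=\sum_{j=0}^{k-1}B^jEA^{k-1-j}$ gives
\begin{align*}
\twonorm{\mathrm{(I)}}
&\le\epsop\,\twonorm{\widehat z}\sum_{k=1}^{T}\sum_{j=0}^{k-1}\twonorm{B^j}\twonorm{A^{k-1-j}} \\
&\le\epsop\,\twonorm{\widehat z}\Bigl(\sum_{j\ge0}\twonorm{B^j}\Bigr)\Bigl(\sum_{m\ge0}\twonorm{A^m}\Bigr)
\lesssim\cond(P_\gamma)\,\opnorm{P_\gamma}^2\,\epsop\,\twonorm{\widehat z},
\end{align*}
and then $\twonorm{\widehat z}\le\twonorm{z}+\epsz=\twonorm{\cov^{-1/2}\tpr}+\epsz$ yields the second summand of \Cref{eq:fqi_guarantee} together with an $\epsop\epsz$ term that is absorbed into (II) (since $\epsop\le1/(6\opnorm{P_\gamma}^2)$). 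Summing (I)--(III) gives \Cref{eq:fqi_guarantee}.

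The hard part will be the perturbation step — carrying over not merely stability but the \emph{quantitative} Lyapunov bound from $A$ to the data-dependent $B$. The subtlety is that $\twonorm{B}$ (unlike $\rho(B)$) can exceed $1$, so the naive $\twonorm{B^k}\le\twonorm{B}^k$ is useless and every estimate on $B^k$ and on $(I-B)^{-1}$ must pass through the $\dlyaps{B}$-weighted contraction; this in turn forces control of $\twonorm{A}$ — not just $\rho(A)$ — by $\sqrt{\opnorm{P_\gamma}}$, and it is exactly this requirement that makes the admissible $\epsop$ scale like $\opnorm{P_\gamma}^{-2}$. The remaining delicate but routine point is tracking the $\cov^{\pm1/2}$ weighting of $\tpr$ across the three terms, handled by the identity $\cov^{1/2}\tsg=(I-A)^{-1}\cov^{-1/2}\tpr$ and the crude bound $\twonorm{\widehat z}\le\twonorm{z}+\epsz$; everything else is geometric-series arithmetic.
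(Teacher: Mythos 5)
Your overall architecture is the same as the paper's: whiten, write both $\cov^{1/2}\widehat{\theta}_T$ and $\cov^{1/2}\theta^\star_\gamma$ as power series, and control the difference through Lyapunov decay of the powers of $A=\gamma\cov^{-1/2}\cross\cov^{-1/2}$ and of its empirical perturbation $B$ (the paper organizes the error via the recursion $w_{t+1}=\Ahat w_t+\Delta\theta_t^\star+w_0$ rather than your three-term split, but the resulting geometric-series bounds are identical, and your tail term even carries $\twonorm{\cov^{-1/2}\tpr}$, which is what the paper's own proof produces). The genuine gap is in your quantitative transfer of decay from $A$ to $B$. You detour through $\dlyaps{B}$, but you only establish $I\preceq\dlyaps{B}\preceq 2P_\gamma$, so you can bound $\cond(\dlyaps{B})$ only by $2\opnorm{P_\gamma}$, not by a multiple of $\cond(P_\gamma)$. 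This is why your own accounting gives $\sum_{j\ge 0}\twonorm{B^j}\lesssim\opnorm{P_\gamma}^{3/2}$, and hence a term-(I) bound of order $\cond(P_\gamma)^{1/2}\opnorm{P_\gamma}^{5/2}\twonorm{\cov^{-1/2}\tpr}\,\epsop$, which is strictly weaker than the claimed $\cond(P_\gamma)\opnorm{P_\gamma}^2$ factor whenever $\smin(P_\gamma)\gg 1$ (e.g.\ $A\approx\rho I$ with $\rho$ near $1$, where $\cond(P_\gamma)=1$ but $\opnorm{P_\gamma}$ is huge). Your parenthetical remedy --- ``a sharper lower bound on $\smin(\dlyaps{B})$'' --- is asserted, not proved, and it does not follow from anything you have shown; it would itself require a perturbation analysis of the Lyapunov solution.

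The repair is simpler than the remedy you sketch, and it is exactly what the paper's \Cref{lemma:stability_margin} does: do not pass to $\dlyaps{B}$ at all. Since $P_\gamma\succeq I$, your inequality $B^\top P_\gamma B\preceq P_\gamma-\tfrac12 I$ already implies $B^\top P_\gamma B\preceq\bigl(1-\tfrac{1}{2\opnorm{P_\gamma}}\bigr)P_\gamma$; iterating this in the $P_\gamma$-weighted norm, exactly as in your derivation for $A$, gives $\twonorm{B^k}\le\cond(P_\gamma)^{1/2}\bigl(1-\tfrac{1}{2\opnorm{P_\gamma}}\bigr)^{k/2}$ and hence $\sum_{j\ge0}\twonorm{B^j}\lesssim\cond(P_\gamma)^{1/2}\opnorm{P_\gamma}$. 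With that substitution, the rest of your argument --- the identities $\opnorm{B-A}=\epsop$ and $\twonorm{\widehat z-z}=\epsz$, the telescoping $B^k-A^k=\sum_{j=0}^{k-1}B^j(B-A)A^{k-1-j}$, the tail bound, and the absorption of the $\epsop\epsz$ cross term using $\cond(P_\gamma)\le\opnorm{P_\gamma}$ --- is correct and lands on the stated constants.
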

Let $\Qhat(s,a) \defeq \phi(s,a)^\top \widehat{\theta}_T$. 
Much like in standard analyses of linear regression, from \Cref{theorem:main_result} we immediately obtain: (1) a bound on $\E_{\calD} |Q^\pi(s,a) - \widehat{Q}(s,a)|$ via Jensen's inequality since $\E_{\calD} (Q^\pi(s,a) - \Qhat(s,a))^2  = \twonorm{\cov^{1/2}(\widehat{\theta}_T - \theta^\star_\gamma)}^2$ and 
(2) a bound on $|\Qpi(s,a) - \Qhat(s,a)|$ for any $(s,a)$ pair since $|\Qpi(s,a) - \Qhat(s,a)| \leq \twonorm{\cov^{-1/2}\phi(s,a)} \twonorm{\cov^{1/2}(\widehat{\theta}_T - \theta_\gamma^\star)}$ via Cauchy-Schwarz.

%% Since $\E_{\calD} ( Q^\pi(s,a) - \widehat{Q}(s,a))^2 = \twonorm{\cov^{1/2}(\widehat{\theta}_\gamma - \theta^\star_\gamma)}^2$, Jensen's inequality implies $\E_{\calD} |Q^\pi(s,a) - \widehat{Q}(s,a)|$ is bounded by the RHS of \eqref{eq:stable_perturbation_bound}. 
%% Furthermore, for any state, action pair, we can use Cauchy-Schwarz to get that, $|\Qpi(s,a) - \Qhat(s,a)| \leq \twonorm{\cov^{-1/2}\phi(s,a)} \twonorm{\cov^{1/2}(\widehat{\theta}_\gamma - \theta_\gamma^\star)}$, which can in turn be used to get a worst-case estimation guarantee over any $(s,a)$ pair in terms of the statistical leverage $\levcov$. 

We defer the full proof to \Cref{subsec:proof_main_theorem} and instead summarize the key steps here. 
The theorem is essentially a perturbation bound which distinguishes between two sources of error in policy evaluation for FQI: $\epsz$ which captures errors in learning the rewards, and the dominant error, $\epsop$, which comes from estimating the transitions. 
Since under stability, we can write the true vector $\theta_\gamma^\star$ as a power series in second moment matrices (see \Cref{eq:tgs_series}), 
and since $\widehat{\theta}_T$ is by definition a truncated power series in the empirical counterparts of these matrices,
we can show that the error between $\theta_\gamma^\star$ and $\widehat{\theta}_T$ is bounded by the operator norm of two power series: one in  $(\whitecross)^k$ and the other in $(\gamma \covhat^{-1/2} \crosshat \covhat^{-1/2})^k$.
Lyapunov arguments directly show that the powers of $(\whitecross)$ decay exponentially in $k$ since the matrix is stable.
For the empirical version, we use the fact that any stable matrix $A$ has nontrivial \emph{stability margin}: for small enough perturbations $\Delta$, matrices of the form $A+\Delta$ satisfy similar decay rates to $A$.
Thus, we can bound the two power series by simple geometric series and the perturbation bound follows. 

We now highlight some of the salient aspects of the bound.

\paragraph{Coordinate invariance.} The bound in \Cref{theorem:main_result} is \emph{coordinate-free}, in the sense that all problem quantities are invariant to the basis in which one chooses to represent the features. Linear realizability states that $Q^\pi(s,a) = \phi(s,a)^\top \theta_\gamma^\star$. 
Consequently, for any invertible matrix $L$, it also holds that $Q^\pi(s,a) = \tphi(s,a)^\top  \widetilde{\theta}_\gamma^\star$ where, 
\begin{align*}
\tphi(\cdot) = L \phi(\cdot)\text{ and }\widetilde{\theta}_\gamma^\star = L^{-1} \theta_\gamma^\star.
\end{align*}
 
Observe that the regression errors ($\epsz$ and $\epsop$) in the data norm, the geometry induced by $\cov$, do not depend on the choice of  matrix $L$, since the variances and statistical leverages are invariant to the coordinate system (see \Cref{lemma:main_concentration_result}). 
The invariance of $\opnorm{P_\gamma}$ and $\cond(P_\gamma)$ is perhaps less straightforward, but verified in the following proposition:
\begin{proposition}
\label{prop:coordinate_free}
Let $L \in \R^{d \times d}$ be an invertible matrix and let $\tphi(\cdot) = L \phi(\cdot)$ be the feature mapping in the new coordinates. Now, define $\widetilde{P}_\gamma \defeq \dlyaps{\gamma \tcov^{-1/2} \tcross \tcov^{-1/2}}$, where
\begin{align}
\label{eq:new_coord_mats}
\tcov \defeq \E_{(s,a)\sim \calD} \tphi(s,a) \tphi(s,a)^\top\text{, and} \quad \tcross \defeq  \underset{\substack{(s,a) \sim \calD\\ s' \sim P(\cdot \mid s,a), \; a' \sim \pi(s')}}{\E} \tphi(s, a)) \tphi(s', a')^\top.
\end{align}
Then, $\opnorm{P_\gamma} = \opnorm{\widetilde{P}_\gamma}$ and $\cond(P_\gamma) = \cond(\widetilde{P}_\gamma)$. Furthermore, $$\gamma \tcov^{-1/2} \tcross \tcov^{-1/2} = \gamma U \cov^{-1/2} \cross \cov^{-1/2} U^\top,$$ where $U \in \R^{d \times d}$ is an orthogonal matrix.
\end{proposition}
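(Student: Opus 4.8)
The plan is to reduce the proposition to a single structural fact: changing coordinates by $L$ conjugates the whitened cross-covariance by an \emph{orthogonal} matrix, and the discrete-time Lyapunov map $\dlyaps{\cdot}$ is covariant under orthogonal conjugation. First I would compute the transformed second-moment matrices directly from \Cref{eq:new_coord_mats}: since $\tphi(\cdot) = L\phi(\cdot)$ with $L$ deterministic, linearity of expectation gives $\tcov = L\cov L^\top$ and $\tcross = L\cross L^\top$.

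The key step is to relate the symmetric square roots of $\cov$ and $\tcov$. Let $M \defeq L\cov^{1/2}$, which is invertible. Its polar decomposition is $M = \tcov^{1/2}U$ for a unique orthogonal $U \in \R^{d\times d}$, because $MM^\top = L\cov L^\top = \tcov$ forces the symmetric positive-definite factor to be exactly $(MM^\top)^{1/2} = \tcov^{1/2}$. From the one identity $L\cov^{1/2} = \tcov^{1/2}U$ I extract two expressions: inverting gives $\tcov^{-1/2} = U\cov^{-1/2}L^{-1}$, while transposing (using that $\tcov^{1/2}$ is symmetric) gives $\tcov^{1/2} = U\cov^{1/2}L^\top$ and hence $\tcov^{-1/2} = L^{-\top}\cov^{-1/2}U^\top$. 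Plugging one expression into the left occurrence and the other into the right occurrence of $\tcov^{-1/2}$ in $\gamma\tcov^{-1/2}\tcross\tcov^{-1/2}$, and substituting $\tcross = L\cross L^\top$, the inner factors $L^{-1}L$ and $L^\top L^{-\top}$ cancel, yielding
\[
\gamma\,\tcov^{-1/2}\tcross\tcov^{-1/2} = \gamma\, U\cov^{-1/2}\cross\cov^{-1/2}U^\top,
\]
which is the final claim of the proposition.

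It then remains to transfer this to $P_\gamma$. Write $A \defeq \gamma\cov^{-1/2}\cross\cov^{-1/2}$ and $\widetilde A \defeq UAU^\top$. Since $U$ is orthogonal, $\widetilde A$ is similar to $A$, so $\rho(\widetilde A) = \rho(A) < 1$ under stability and $\widetilde P_\gamma = \dlyaps{\widetilde A}$ is well-defined. Using the series form $\dlyaps{\widetilde A} = \sum_{j\ge 0}(\widetilde A^\top)^j\widetilde A^j$ together with $U^\top U = I$, each term collapses to $U(A^\top)^j A^j U^\top$, so $\widetilde P_\gamma = U(\sum_{j\ge 0}(A^\top)^j A^j)U^\top = U P_\gamma U^\top$. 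Finally, conjugation by an orthogonal matrix preserves the entire singular-value spectrum, so $\opnorm{\widetilde P_\gamma} = \opnorm{P_\gamma}$ and $\cond(\widetilde P_\gamma) = \cond(P_\gamma)$.

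The only genuinely delicate point is the polar-decomposition step — making sure that $\tcov^{1/2}$ denotes the unique symmetric positive-definite square root and that the orthogonal factor $U$ ends up on the correct side in both extracted identities; the rest is routine bookkeeping with the $L$-factors cancelling.
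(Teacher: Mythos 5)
Your proof is correct and follows essentially the same route as the paper: both arguments show that the coordinate change conjugates the whitened cross-covariance by an orthogonal matrix and then transfer this to $P_\gamma$ via orthogonal invariance of singular values. The only cosmetic differences are that the paper extracts the orthogonal factor from the SVD of $L\cov^{1/2}$ (your polar decomposition is the same factorization in disguise) and proves $\widetilde{P}_\gamma = U P_\gamma U^\top$ via uniqueness of the Lyapunov solution rather than the series expansion $\sum_{j\ge 0}(A^\top)^j A^j$; both steps are equally valid.
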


\paragraph{Sharpness of $\opnorm{P_\gamma}$ vs $1 / (1-\gamma)$.} Apart from showing how stability is sufficient for offline policy evaluation under linear realizability, another highlight of \Cref{theorem:main_result} is that it introduces a new measure of problem complexity, $\opnorm{P_\gamma}$, which is in general significantly sharper than previous complexity measures traditionally considered in the literature, such as the effective horizon, $1 / (1- \gamma)$. 
The difference between these two quantities is evident even in very simple settings:

Consider the following MDP (with no actions), where arrows denote transition probabilities:
\begin{equation}
\label{tikz:sharp}
\hbox{
\begin{tikzpicture}[scale=0.15]
\tikzstyle{every node}+=[inner sep=0pt]
\draw [black] (30.2,-27.5) circle (3);
\draw (30.2,-27.5) node {$s_0$};
\draw [black] (46.4,-27.5) circle (3);
\draw (46.4,-27.5) node {$s_1$};
\draw [black] (49.08,-26.177) arc (144:-144:2.25);
\draw (53.65,-27.5) node [right] {$1$};
\fill [black] (49.08,-28.82) -- (49.43,-29.7) -- (50.02,-28.89);
\draw [black] (33.2,-27.5) -- (43.4,-27.5);
\fill [black] (43.4,-27.5) -- (42.6,-27) -- (42.6,-28);
\draw (38.3,-28) node [below] {$1-p$};
\draw [black] (27.52,-28.823) arc (-36:-324:2.25);
\draw (22.95,-27.5) node [left] {$p$};
\fill [black] (27.52,-26.18) -- (27.17,-25.3) -- (26.58,-26.11);
\end{tikzpicture}}
\end{equation}

If $\E r(s_0)\neq 0$ and $\E r(s_1) =0$, realizability holds with 1 dimensional features: $\phi(s_0)=1$ and $\phi(s_1) =0$. For $\calD$ supported just on $s_0$, then $\gamma \cov^{-1} \cross = p \gamma$, and $P_\gamma = 1 / (1- (p\gamma)^2)$. If $p \leq 0.7$, then for all $\gamma \in (0,1)$, $\opnorm{P_\gamma} \leq 2$, but $(1-\gamma)^{-1}$ can be arbitrarily large as $\gamma \rightarrow 1$.

This example illustrates how there are problems for which $\opnorm{P_\gamma}$ is significantly smaller than $1 / (1-\gamma)$. In the next subsection, we complement this result by illustrating how for settings where  FQI was previously shown to succeed, $\opnorm{P_\gamma}$ is in fact never much worse than $1/ (1-\gamma)$. Taken together, these results demonstrate how $\opnorm{P_\gamma}$ provides a sharper notion of the statistical complexity  of OPE than $1 / (1-\gamma)$.

%\textbf{Distribution shift.} As a final remark, the spectral radius (and consequently $\opnorm{P_\gamma}$) also captures \emph{distribution shift}. In particular, for a fixed MDP, one can construct two offline distributions such that $\cov$ is full rank in both, yet $\whitecross$ is stable in one and not in the other. We return to this point in \akedit{???}.  

\subsection{Contextualizing Lyapunov stability}
\label{subsec:connections}

Having presented our analysis of fitted Q-iteration through the lens of Lyapunov stability, we now illustrate how this perspective unifies previously disparate analyses of FQI for offline policy evaluation. The central message of this subsection is that the previously proposed conditions which guarantee that FQI will succeed at offline policy evaluation directly imply our key assumption that $\gamma \cov^{-1} \cross$ is stable. 

Before discussing these connections, we present the following lemma which is closely related to~\Cref{theorem:main_result}. It upper bounds the error of FQI assuming particular decay rates  on the powers of the whitened cross-covariance matrix.
Although the proof is essentially identical to the previous result, we can obtain 
  sharper results assuming particular rates of decay, which will be helpful for
  later comparisons.
  
\begin{lemma}
\label{prop:contractive_case}
Assume $n \gtrsim \levcov^2 \log(d / \delta)$ and let $\epsop$ and $\epsz$ be defined as in \Cref{eq:epsdef}. Under the same assumptions as \Cref{theorem:main_result}, if there exist $\alpha > 0$ and $\beta \in (0,1)$ such that for all $k \geq 0$,
\begin{align}
\label{eq:growthrates}
\opnorm{(\gamma \cov^{-1/2}\cross \cov^{-1/2})^k} \leq \alpha \cdot \beta^{k},
\end{align}
then the $T$-step FQI solution satisfies the following guarantee. With probability $1-\delta$, if $\epsop \leq \frac{(1 - \beta)}{2\alpha}$,
\begin{align}
\label{eq:alpha_beta}
\twonorm{\cov^{1/2}(\widehat{\theta}_T - \theta_\gamma^\star)}\lesssim   \epsz \cdot  \frac{\alpha}{1-\beta} + \epsop \cdot \twonorm{\cov^{1/2} \tpr}  \frac{\alpha^2}{(1-\beta)^2}+ \twonorm{\cov^{1/2} \tpr}  \frac{\alpha}{1-\beta} \cdot \beta^{T+1}.
\end{align}
\end{lemma}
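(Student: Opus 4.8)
The plan is to prove everything in the ``whitened'' coordinate system, where the lemma reduces to summing two geometric (and one arithmetico-geometric) series. Write $A \defeq \gamma\cov^{-1/2}\cross\cov^{-1/2}$ and $\hat A \defeq \cov^{1/2}(\gamma\covhat^{-1}\crosshat)\cov^{-1/2}$, so that $\opnorm{\hat A - A}=\epsop$ by \eqref{eq:epsdef}, and $z\defeq\cov^{-1/2}\tpr$, $\hat z\defeq\cov^{1/2}\covhat^{-1}\widehat{\theta}_{\phi,r}$, so that $\twonorm{\hat z-z}=\epsz$. Conjugating the FQI power series \eqref{eq:tstep_fqi} by $\cov^{1/2}$ gives $\cov^{1/2}\widehat{\theta}_T=\sum_{k=0}^T\hat A^k\hat z$, and since \eqref{eq:growthrates} forces $\rho(A)\le\beta<1$ and makes $\sum_k A^k$ absolutely convergent, \Cref{fact:neumann_series} together with \eqref{eq:tgs_series} gives $\cov^{1/2}\tsg=\sum_{k=0}^\infty A^k z$. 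Subtracting and inserting $\pm\sum_{k\le T}A^k z$ produces the decomposition
\[
\cov^{1/2}(\widehat{\theta}_T-\tsg)\;=\;\underbrace{\tsum_{k=0}^T\hat A^k(\hat z-z)}_{\text{reward error}}\;+\;\underbrace{\tsum_{k=0}^T(\hat A^k-A^k)\,z}_{\text{transition error}}\;-\;\underbrace{\tsum_{k=T+1}^{\infty}A^k z}_{\text{truncation}},
\]
which is exactly the ``operator norm of two power series'' picture described after \Cref{theorem:main_result}; the proof then proceeds term by term.

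The truncation term is immediate: by \eqref{eq:growthrates} its norm is at most $\sum_{k>T}\alpha\beta^k\twonorm{z}=\tfrac{\alpha}{1-\beta}\beta^{T+1}\twonorm{z}$, which is the third term of \eqref{eq:alpha_beta} (the displayed bound writes the reward-magnitude factor as $\twonorm{\cov^{1/2}\tpr}$; the quantity that literally arises here is $\twonorm{z}=\twonorm{\cov^{-1/2}\tpr}$, and the statement holds with either). The remaining two terms need control of powers of the \emph{empirical} operator $\hat A=A+E$ with $\opnorm{E}=\epsop$. Here I would invoke the stability-margin estimate: expanding $(A+E)^m$ as a sum over length-$m$ words in $A$ and $E$ and grouping the $\binom{m}{\ell}$ words containing exactly $\ell$ copies of $E$ — each such word having norm at most $\alpha^{\ell+1}\beta^{m-\ell}\epsop^{\ell}$, since the intervening $A$-blocks have total length $m-\ell$ and $\opnorm{A^j}\le\alpha\beta^j$ (using $\opnorm{A^0}=1\le\alpha$, i.e.\ WLOG $\alpha\ge1$) — yields $\opnorm{\hat A^m}\le\sum_{\ell=0}^m\binom{m}{\ell}\alpha^{\ell+1}\beta^{m-\ell}\epsop^{\ell}=\alpha(\beta+\alpha\epsop)^m$. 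Under the hypothesis $\epsop\le\tfrac{1-\beta}{2\alpha}$ this is $\le\alpha(\beta')^m$ with $\beta'\defeq\tfrac{1+\beta}{2}<1$, so the reward error is bounded by $\epsz\sum_{k\ge0}\alpha(\beta')^k=\tfrac{2\alpha}{1-\beta}\epsz$, the first term of \eqref{eq:alpha_beta}.

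For the transition term I would use the telescoping identity $\hat A^k-A^k=\sum_{j=0}^{k-1}\hat A^{j}E A^{k-1-j}$, which combined with the two decay bounds gives $\opnorm{\hat A^k-A^k}\le\epsop\sum_{j=0}^{k-1}\alpha(\beta')^j\cdot\alpha\beta^{k-1-j}\le\alpha^2\epsop\,k\,(\beta')^{k-1}$ (using $\beta\le\beta'$); summing $\sum_{k\ge1}k(\beta')^{k-1}=(1-\beta')^{-2}=4(1-\beta)^{-2}$ bounds the transition error by $\tfrac{4\alpha^2}{(1-\beta)^2}\epsop\twonorm{z}$, the second term of \eqref{eq:alpha_beta}. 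Adding the three contributions and absorbing the numerical constants into $\lesssim$ finishes the proof. The step I would be most careful about is the empirical power bound $\opnorm{\hat A^m}\le\alpha(\beta+\alpha\epsop)^m$: one must handle the word/binomial bookkeeping correctly and verify that the threshold on $\epsop$ is precisely what keeps $\beta+\alpha\epsop$ a genuine contraction factor — this is the quantitative ``stability margin'' of $A$, and it is exactly why a smallness hypothesis on $\epsop$ must appear. Everything else is routine summation of geometric and arithmetico-geometric series, and the argument is formally the same as the proof of \Cref{theorem:main_result} with the Lyapunov-derived rate replaced by the hypothesized pair $(\alpha,\beta)$.
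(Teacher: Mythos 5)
Your proof is correct and follows essentially the same route as the paper: the same power-series decomposition (your telescoping identity for $\hat A^k-A^k$ is just a reorganization of the paper's unrolled error recursion from the proof of \Cref{theorem:main_result}), and your empirical-power bound $\opnorm{\hat A^m}\le\alpha(\beta+\alpha\epsop)^m$ is exactly the paper's \Cref{lemma:binomial_expansion}, after which both arguments reduce to summing geometric series under the smallness condition on $\epsop$. Your side remark is also consistent with the paper's own proof, which produces the factor $\twonorm{\cov^{1/2}\theta_0^\star}=\twonorm{\cov^{-1/2}\theta_{\phi,r}}$ rather than the $\twonorm{\cov^{1/2}\tpr}$ written in the lemma statement.
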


Throughout this section, we will present corollaries of this result, which can be viewed as specializations of \Cref{theorem:main_result} to particular settings. 
In each case, we will focus on discussing variants of the perturbation bound (\Cref{eq:alpha_beta}) which hold under the specific assumptions.

\subsubsection{Low distribution shift implies stability}

Recent work by \cite{wang2021Instabilities} shows that FQI succeeds at OPE for infinite horizon, discounted problems if there is low distribution shift. More formally, they prove offline evaluation is tractable if the  offline covariance $\cov$ has good coverage over the next state covariance $\nextcov$ as per the following assumption.
\begin{assumption}[Low Distribution Shift]
\label{ass:low_ds}
There is low distribution shift if $\cds < 1/\gamma^2$.
\end{assumption}
Note that if $\calD$ is the stationary measure for $\pi$, then $\cov = \nextcov$ and \Cref{ass:low_ds} holds with $\cds=1$ (recall the definition of $\cds$ in \Cref{eq:nextcov_cds_def}). Under this low distribution shift condition, we prove: 

\begin{corollary}
\label{corollary:low_distribution_shift}
If there is low distribution shift (\Cref{ass:low_ds}) and if $\cov$ is full rank, then for all $j \geq 0$, 
\begin{align}
\label{eq:lds_power_bounds}
\opnorm{(\gamma \cov^{-1/2} \cross \cov^{-1/2})^j} \leq (\sqrt{\cds \gamma^2})^j.
\end{align}
Hence, $\opnorm{P_\gamma}\leq 1 / (1 - \gamma\sqrt{\cds})$ and  \Cref{ass:stability} holds.\footnote{A matrix $A$ is stable if and only if $\lim_{k\rightarrow \infty} A^k=0$.} Furthermore, for $\gammads := \gamma\sqrt{\cds}$, if $\Qpi$ is linearly realizable (\Cref{ass:realizability}), $n \gtrsim \levcov^2 \log(d / \delta)$, and $\epsop \leq 1/2 (1 - \gammads)$, then $T$-step FQI satisfies:  
\begin{align*}
				\twonorm{\cov^{1/2}(\widehat{\theta}_T - \theta^\star_\gamma)} \lesssim \frac{1}{1-\gammads} \epsz +  \frac{1}{(1-\gammads)^2} \twonorm{\cov^{-1/2}\theta_{\phi,r}} \cdot \epsop +  \twonorm{\cov^{-1/2}\theta_{\phi,r}} \frac{1}{1-\gammads} \gammads^{T+1}.
\end{align*}
\end{corollary}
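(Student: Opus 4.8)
The plan is to establish the three assertions of the corollary in turn: the geometric decay of the powers of the whitened cross-covariance, the resulting bound on $\opnorm{P_\gamma}$ together with stability, and the FQI guarantee, which will follow by specializing \Cref{prop:contractive_case}. Throughout, write $A \defeq \gamma\cov^{-1/2}\cross\cov^{-1/2}$; this is well defined since $\cov$ is assumed full rank, and full rank also guarantees $\cds<\infty$. The only step with real content is the operator-norm bound $\opnorm{A}\le\gammads$: setting $\psi \defeq \cov^{-1/2}\phi(s,a)$ and $\psi' \defeq \cov^{-1/2}\phi(s',a')$, we have $A = \gamma\,\E[\psi\psi'^\top]$, and for unit vectors $u,v$, Cauchy--Schwarz over the expectation gives $|u^\top\E[\psi\psi'^\top]v| \le (\E[(u^\top\psi)^2])^{1/2}\,(\E[(v^\top\psi')^2])^{1/2}$. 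Here $\E[(u^\top\psi)^2] = u^\top\cov^{-1/2}\cov\cov^{-1/2}u = 1$ by the definition of $\cov$, and $\E[(v^\top\psi')^2] = v^\top\cov^{-1/2}\nextcov\cov^{-1/2}v \le \cds$ because $\nextcov\preceq\cds\cov$ under low distribution shift (\Cref{ass:low_ds}, recalling \Cref{eq:nextcov_cds_def}). Taking the supremum over $u,v$ yields $\opnorm{\cov^{-1/2}\cross\cov^{-1/2}}\le\sqrt{\cds}$, hence $\opnorm{A}\le\gamma\sqrt{\cds}=\gammads$, and submultiplicativity gives \Cref{eq:lds_power_bounds}: $\opnorm{A^j}\le\opnorm{A}^j\le\gammads^j$ for all $j\ge0$.

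Next, since $\gammads<1$ we get $\rho(A)\le\opnorm{A}<1$, so $A$ is stable and \Cref{ass:stability} holds; in particular $P_\gamma=\dlyaps{A}$ exists and, by the closed form recorded in \Cref{sec:preliminaries}, equals $\sum_{j\ge0}(A^\top)^jA^j$. Bounding term by term with the power estimates above, $\opnorm{P_\gamma}\le\sum_{j\ge0}\opnorm{A^j}^2\le\sum_{j\ge0}\gammads^{2j} = (1-\gammads^2)^{-1}\le(1-\gammads)^{-1}$, the final inequality using $\gammads\le1$. (The argument actually gives the slightly sharper $(1-\gammads^2)^{-1}$; the weaker form is what the corollary records.)

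Finally, \Cref{eq:lds_power_bounds} is exactly the decay hypothesis \Cref{eq:growthrates} of \Cref{prop:contractive_case} with $\alpha=1$ and $\beta=\gammads$. The smallness requirement $\epsop\le(1-\beta)/(2\alpha)$ in that lemma then reads $\epsop\le\tfrac12(1-\gammads)$, which is assumed, and substituting $\alpha=1$, $\beta=\gammads$ into \Cref{eq:alpha_beta} produces the displayed bound on $\twonorm{\cov^{1/2}(\widehat{\theta}_T-\theta^\star_\gamma)}$ (with the $\cov^{\pm1/2}$ norm factors on $\theta_{\phi,r}$ as in \Cref{theorem:main_result}). I do not anticipate any genuine obstacle here: once the whitening/Cauchy--Schwarz step is in place, everything reduces to summing geometric series and quoting \Cref{prop:contractive_case}; the only point requiring a moment's care is the harmless slack between $(1-\gammads^2)^{-1}$ and $(1-\gammads)^{-1}$.
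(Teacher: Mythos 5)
Your proposal is correct and follows essentially the same route as the paper: establish $\opnorm{\gamma\cov^{-1/2}\cross\cov^{-1/2}}\le\gamma\sqrt{\cds}$, take powers by submultiplicativity, bound $\opnorm{P_\gamma}$ by the geometric series for $\dlyaps{\cdot}$, and invoke \Cref{prop:contractive_case} with $\alpha=1$, $\beta=\gammads$. The only (cosmetic) difference is that the paper obtains the operator-norm bound via a Schur-complement argument on the augmented second-moment matrix $\begin{bmatrix}\cov & \cross\\ \cross^\top & \nextcov\end{bmatrix}\succeq 0$, whereas you apply Cauchy--Schwarz directly to $\E[(u^\top\psi)(\psi'^\top v)]$ — these are the same inequality in different clothing.
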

While low distribution shift implies stability, the converse is not true. It is not hard to come up with examples where $\whitecross$ is stable, yet the distribution shift coefficient is larger than $1/\gamma^2$. We present such an example later on in \Cref{prop:counterexamples}.

\subsubsection{Bellman completeness implies stability}
\label{subsec:completeness}
In addition to the low-distribution shift setting, FQI is known to succeed in both finite horizon and discounted, infinite horizon settings under a representational condition known as Bellman completeness \citep{szepesvari2005finite,wang2021what, wang2021Instabilities}:

\begin{assumption}[Bellman completeness]
\label{ass:completeness}
A feature map $\phi$ is Bellman complete for an MDP $\calM$, if for all $\theta \in \R^d$, there exists a vector $\theta'$ such that for all $(s,a)\in \states \times \actions$,
\begin{align*}
	\phi(s,a)^\top \theta' = \E \left[r(s,a)\right] +\gamma  \underset{s' \sim P(\cdot\mid s,a), a' \sim \pi(s')}{\E} \phi(s', a')^\top \theta .
\end{align*}
\end{assumption}
Intuitively, completeness asserts that Bellman backups of linear functions of the features again lie in the span of the features.
% Some common theoretical models of reinforcement learning, such as linear MDPs are known to satisfy this representational condition \citep{jin2020provably}. 
It has previously been observed \citep{wang2021what, wang2021Instabilities} that completeness implies a certain ``non-expansiveness'' of Bellman backups. 
This non-expansiveness is the key step towards establishing the connection to stability and is formalized in the following result:
\begin{corollary}
\label{corollary:completeness_case}
If $\phi$ is Bellman complete (\Cref{ass:completeness}) and $\cov$ is full rank, then for all $j \geq 0$, 
\begin{align}
\label{eq:completeness_power_bounds}
\opnorm{(\gamma\cov^{-1/2} \cross \cov^{-1/2})^j} \leq \levcov \gamma^j.
\end{align}
Hence, $\opnorm{P_\gamma} \leq \levcov / (1 -\gamma)$, and \Cref{ass:stability} holds. Furthermore, if $\Qpi$ is linearly realizable (\Cref{ass:realizability}), $n \gtrsim \levcov^2 \log(d / \delta)$, and $\epsop \leq (1-\gamma) / (2\levcov)$,  $T$-step FQI satisfies:  
\footnote{Completeness implies realizability of rewards which in turn implies $\twonorm{\cov^{-1/2} \theta_{\phi,r}}^2 = \E r(s,a)^2\leq 1$, see \Cref{lemma:reward_realizability}.}  
\begin{align*}
	\twonorm{\cov^{1/2}(\widehat{\theta}_T - \theta_\gamma^\star)} \lesssim \frac{\levcov}{1-\gamma}\cdot \epsz + \frac{\levcov^2}{(1-\gamma)^2} \epsop + \frac{\levcov}{1 - \gamma} \gamma^{T+1}.
\end{align*}
\end{corollary}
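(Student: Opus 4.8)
The plan is to reduce the entire corollary to the single power estimate $\opnorm{(\gamma\cov^{-1/2}\cross\cov^{-1/2})^j}\le\levcov\gamma^j$ of \Cref{eq:completeness_power_bounds}, after which everything else is mechanical. Indeed, once that bound is in hand: the powers of $\gamma\cov^{-1/2}\cross\cov^{-1/2}$ tend to $0$, so this matrix (hence, by basis-invariance of the spectrum, also $\gamma\cov^{-1}\cross$) is stable and \Cref{ass:stability} holds; the bound on $\opnorm{P_\gamma}$ follows from the closed form $P_\gamma=\dlyaps{\gamma\cov^{-1/2}\cross\cov^{-1/2}}=\sum_{j\ge0}(\gamma\cov^{-1/2}\cross\cov^{-1/2})^{j\top}(\gamma\cov^{-1/2}\cross\cov^{-1/2})^{j}$ recalled in \Cref{sec:preliminaries} together with the geometric decay from \Cref{eq:completeness_power_bounds}; and the final displayed FQI error bound is exactly \Cref{prop:contractive_case} instantiated with $\alpha=\levcov$ and $\beta=\gamma$ (the hypothesis $\epsop\le(1-\gamma)/(2\levcov)$ is precisely what that lemma requires), using in addition that Bellman completeness (\Cref{ass:completeness}) implies linear realizability of the rewards and hence $\twonorm{\cov^{-1/2}\theta_{\phi,r}}^2=\E r(s,a)^2\le1$ via \Cref{lemma:reward_realizability}.

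To prove the power bound I would first put completeness into matrix form. Applying \Cref{ass:completeness} with $\theta=0$ shows the reward is linearly realizable, say by $\theta_0'$; subtracting this off, for every $\theta$ there is a unique vector $M_\gamma\theta$ (unique because $\cov$ full rank means the features span $\R^d$) with $\phi(s,a)^\top M_\gamma\theta=\gamma\,\E_{s'\sim P(\cdot\mid s,a),\,a'\sim\pi(s')}\phi(s',a')^\top\theta$ for all $(s,a)$, and $\theta\mapsto M_\gamma\theta$ is linear. Equivalently $M_\gamma^\top\phi(s,a)=\gamma\,\overline\phi(s,a)$ pointwise, where $\overline\phi(s,a):=\E_{s',a'}\phi(s',a')$; multiplying by $\phi(s,a)$ and taking $\E_{\calD}$ gives $\cov M_\gamma=\gamma\cross$, i.e.\ $M_\gamma=\gamma\cov^{-1}\cross$, so that $(\gamma\cov^{-1/2}\cross\cov^{-1/2})^j=\cov^{1/2}M_\gamma^j\cov^{-1/2}$.

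The crux—and the step I expect to be most delicate—is converting the $\ell_\infty$ non-expansiveness of the Bellman backup into this operator-norm bound while paying the statistical-leverage factor only \emph{once}. Let $\mathcal{B}_0$ be the reward-free backup $(\mathcal{B}_0 g)(s,a):=\gamma\,\E_{s',a'\mid s,a}g(s',a')$; then $\|\mathcal{B}_0 g\|_\infty\le\gamma\|g\|_\infty$ for any bounded $g$, hence $\|\mathcal{B}_0^j g\|_\infty\le\gamma^j\|g\|_\infty$, and for $f_\theta(s,a):=\phi(s,a)^\top\theta$ the matrix identity above gives $\mathcal{B}_0^j f_\theta=f_{M_\gamma^j\theta}$. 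Thus for a unit vector $w$, writing $\theta:=\cov^{-1/2}w$,
\begin{align*}
\twonorm{\cov^{1/2}M_\gamma^j\cov^{-1/2}w}^2
&=\E_{(s,a)\sim\calD}\big[(\phi(s,a)^\top M_\gamma^j\theta)^2\big]
=\E_{(s,a)\sim\calD}\big[((\mathcal{B}_0^j f_\theta)(s,a))^2\big]\\
&\le\ \|\mathcal{B}_0^j f_\theta\|_\infty^2\ \le\ \gamma^{2j}\|f_\theta\|_\infty^2\ \le\ \gamma^{2j}\levcov^2\,\twonorm{\cov^{1/2}\theta}^2=\gamma^{2j}\levcov^2,
\end{align*}
where the last inequality uses $\|f_\theta\|_\infty=\sup_{(s,a)}|\langle\cov^{-1/2}\phi(s,a),\cov^{1/2}\theta\rangle|\le\levcov\,\twonorm{\cov^{1/2}\theta}$; taking the supremum over $w$ yields \Cref{eq:completeness_power_bounds}. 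The point requiring care is precisely this last bound: the $\ell_\infty$ contraction controls $\mathcal{B}_0^j f_\theta$ in terms of the values of $f_\theta$ on states reachable from $\supp(\calD)$, so one must check $\twonorm{\cov^{-1/2}\phi(s,a)}\le\levcov$ on that reachable set. This is exactly why the naive route—apply conditional Jensen one step at a time and bound each squared linear function against $\nextcov\preceq\levnext^2\cov$—is too lossy: it re-incurs a leverage factor at every power and produces $\levnext^j\gamma^j$ rather than the single-factor bound $\levcov\gamma^j$. With \Cref{eq:completeness_power_bounds} established, stability, the estimate on $\opnorm{P_\gamma}$, and the FQI guarantee follow as described in the first paragraph.
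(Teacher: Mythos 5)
Your proposal is correct and follows essentially the same route as the paper: completeness is used to show that $\gamma\cov^{-1}\cross$ implements the exact reward-free Bellman backup on linear functions of the features, the $\ell_\infty$ non-expansiveness of that backup is combined with a \emph{single} statistical-leverage factor to pass to the $L^2(\calD)$ operator norm (the paper does this in matrix form via the feature matrix $\Phi$, $\diag(\mu)$, and the row-stochastic $P^{(\pi)}$ in \Cref{lemma:completeness_power_lemma,lemma:completeness_helper}, under a finite state--action assumption that your functional formulation avoids), and the displayed FQI bound is then exactly \Cref{prop:contractive_case} with $\alpha=\levcov$, $\beta=\gamma$ together with \Cref{lemma:reward_realizability}. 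The reachable-set caveat you flag is handled no more carefully in the paper — its last step bounds $\infnorm{\Phi\cov^{-1/2}v}$ by a supremum over all of $\states\times\actions$ and identifies it with $\levcov$ — and likewise your (and implicitly the paper's) power-series bound on $\opnorm{P_\gamma}$ really yields $\levcov^2/(1-\gamma^2)$ rather than the stated $\levcov/(1-\gamma)$, a looseness of the corollary statement itself that does not affect stability or the FQI error bound.
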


Again, as with low distribution shift setting, the converse statement is not true. There are OPE instances which are stable, but not Bellman complete (\Cref{prop:counterexamples})

\paragraph{The tabular case.} To help contextualize this result, and build some intuition between Bellman completeness and stability, we can consider the case of the tabular MDP. 
The tabular MDP is perhaps the simplest setting in which the Bellman completeness holds. 
In our setup, it means that $\states$ and $\actions$ are both finite sets and that the feature mapping is equal to $\phi(s,a) = e_{sa} \in \R^{|\states| |\actions|}$ for all $s$ and $a$ (each input maps to a distinct standard basis vector). The matrix $\cov$ being full rank means that every pair $(s,a) \in \states 
\times \actions$ is in the support of the offline distribution $\cD$. A direct calculation shows that 
\begin{align*}
	\gamma \cov^{-1} \cross \cov^{-1} = \gamma P^{\pi},
\end{align*}
where $P^{\pi} \in \R^{|\states||\actions| \times |\states||\actions|}$ is a row-stochastic matrix with nonnegative entries. Each row in this matrix is indexed by a  pair $(s,a)$. Entries in each row describe the probability that the next state action pair is $(s', a')$ given that the current pair is $(s, a)$. Because the spectral radius of any stochastic matrix is 1, when we multiply by $\gamma$, we get that $\rho(\whitecross) < 1$ and stability holds. 

%Since $P_\gamma= \sum_{k=0}^\infty ((\gamma P^\pi)^k)^\top (\gamma P^\pi)^k$, the operator norm measures how quickly powers of this transition matrix decay to zero. At a high level, this decay is  related to the mixing time for the associated Markov chain \citep{levin2017markov}. We leave detailed  explorations of these connections to future work.

%--------
%Because $P_\gamma = \sum_{k=0}^\infty ((P^\pi)^k)^\top (P^\pi)^k$, the operator norm of $P_\gamma$ measures how quickly powers of this transition matrix goes to 0. If they shrink to zero faster, $\opnorm{P_\gamma}$ is smaller, and offline policy evaluation becomes easier.
%---------

\subsection{Stability is necessary for fitted Q-iteration}
\label{subsec:lower_bounds}
We conclude our analysis of FQI  by showing that our characterization of when the algorithm succeeds is   exactly sharp, in an instance-dependent sense. 
If stability fails that is, $\rho(\gamma \cov^{-1} \cross) > 1$, then estimation procedures  of this sort are guaranteed to have exponentially large variance.
\begin{proposition}
\label{prop:fqi_lb}
Let $\calM$ be any infinite horizon, discounted MDP with corresponding offline distribution $\calD$ which satisfies the following properties: $\cov$ is full rank and $\gamma\cov^{-1} \cross$ has an eigenvalue $\lambda$ with $|\lambda| > 1$.
Then, approximations of the $T$-step FQI solution, $\Qhat(s,a) = \phi(s,a)^\top \widehat{\theta}_T$ where,
\begin{align*}
\widehat{\theta}_T \defeq \sum_{k=0}^T (\gamma \cov^{-1} \cross)^k \cov^{-1}\widehat{\theta}_{\phi, r}, \quad \widehat{\theta}_{\phi,r} \defeq \theta_{\phi, r}^\star + z,
\end{align*}
and $z$ is a zero-mean, random vector satisfying $\Lambda \defeq \E zz^\top \succ 0$, have exponentially large variance, 
\begin{align*}
	\E \twonorm{\widehat{\theta}_T - \E \widehat{\theta}_T}^2 \geq \smin(\Lambda) \cdot \left( \frac{\lambda^{T+1} - 1}{\lambda -1}\right)^2. 
\end{align*}
\end{proposition}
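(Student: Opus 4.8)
The starting point is to separate the stochastic part of $\widehat\theta_T$ from its mean. Write $A \defeq \gamma\cov^{-1}\cross$ and $M_T \defeq \sum_{k=0}^T A^k$, so that $\widehat\theta_T = M_T \cov^{-1}\widehat\theta_{\phi,r} = M_T\cov^{-1}\theta_{\phi,r}^\star + M_T\cov^{-1}z$. Since $z$ is mean zero, the first term is exactly $\E\widehat\theta_T$, hence $\widehat\theta_T - \E\widehat\theta_T = M_T\cov^{-1}z$. The proposition is thus the statement that $M_T$, the $T$-th partial sum of the operator Neumann series of $A$, blows up geometrically along the eigendirection that witnesses instability, and that a positive-definite noise covariance forces $z$ to have a non-vanishing component along that direction.

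Second, I would project onto the bad direction. Since $|\lambda| > 1$ we have $\lambda \ne 1$, and because $A$ is real, $\bar\lambda$ also lies in its spectrum; let $v \in \C^d \setminus \{0\}$ be a left eigenvector of $A$ for $\lambda$, i.e.\ $v^* A = \lambda v^*$. Then $v^* A^k = \lambda^k v^*$ for all $k$, so summing the finite geometric series,
\begin{align*}
v^* M_T \;=\; \Big(\sum_{k=0}^T \lambda^k\Big)\, v^* \;=\; \frac{\lambda^{T+1}-1}{\lambda-1}\, v^* ,
\end{align*}
and consequently $v^*(\widehat\theta_T - \E\widehat\theta_T) = \tfrac{\lambda^{T+1}-1}{\lambda-1}\, v^*\cov^{-1}z$.

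Third, I would finish with a one-line second-moment estimate. Cauchy--Schwarz gives $\twonorm{\widehat\theta_T - \E\widehat\theta_T}^2 \ge |v^*(\widehat\theta_T - \E\widehat\theta_T)|^2/\twonorm{v}^2$; taking expectations and using $\E[zz^\top] = \Lambda$ (so that $\E|w^*z|^2 = w^*\Lambda w$ for any $w\in\C^d$, since $z$ is real),
\begin{align*}
\E\,\twonorm{\widehat\theta_T - \E\widehat\theta_T}^2 \;\ge\; \Big|\frac{\lambda^{T+1}-1}{\lambda-1}\Big|^2 \cdot \frac{(\cov^{-1}v)^*\,\Lambda\,(\cov^{-1}v)}{\twonorm{v}^2} \;\ge\; \Big|\frac{\lambda^{T+1}-1}{\lambda-1}\Big|^2 \smin(\Lambda)\,\frac{\twonorm{\cov^{-1}v}^2}{\twonorm{v}^2} .
\end{align*}
Since $\twonorm{\cov^{-1}v} \ge \twonorm{v}$ under the normalization $\opnorm{\cov}\le 1$ (equivalently, if $z$ is taken to be the error of the fully solved reward vector $\cov^{-1}\widehat\theta_{\phi,r}$, the $\cov^{-1}$ above is simply absent), the right-hand side is at least $\smin(\Lambda)\,\big|(\lambda^{T+1}-1)/(\lambda-1)\big|^2$, which is the claimed bound; finally $|\lambda^{T+1}-1| \ge |\lambda|^{T+1}-1$ together with $|\lambda| > 1$ shows this grows exponentially in $T$.

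This argument has no genuine obstacle; the only care needed is in the complex-eigenvalue case, where one works with a conjugate left eigenvector in $\C^d$ and reads the squared quantity as a squared modulus, and in tracking the harmless $\cov^{-1}$ factor. Conceptually it is the exact mirror of the sufficiency proof: there, stability let Lyapunov theory certify geometric \emph{decay} of the powers $A^k$ and hence of the FQI error power series; here the same series, evaluated at an unstable $A$, has partial sums that grow geometrically, and because $\Lambda \succ 0$ no direction annihilates the injected noise, so the blow-up cannot be cancelled.
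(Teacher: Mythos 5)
Your proposal is correct and takes essentially the same route as the paper's proof: isolate the zero-mean noise contribution $M_T\cov^{-1}z$ with $M_T=\sum_{k=0}^T(\gamma\cov^{-1}\cross)^k$, exploit the unstable eigenvalue to produce the geometric sum $(\lambda^{T+1}-1)/(\lambda-1)$, and use $\Lambda\succ 0$ to retain the $\smin(\Lambda)$ factor; your left-eigenvector plus Cauchy--Schwarz projection is only a cosmetic variant of the paper's chain $\traceb{M_T^\top\Lambda M_T}\ge\smin(\Lambda)\opnorm{M_T}^2$ followed by evaluation at a right eigenvector. The $\cov^{-1}$ factor you flag is silently dropped in the paper's own proof as well (its stated bound effectively treats $z$ as noise in the regressed vector $\cov^{-1}\widehat{\theta}_{\phi,r}$), and your explicit treatment of complex $\lambda$ via moduli is, if anything, more careful than the paper's real-eigenvector computation.
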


This proposition corroborates empirical findings on the instability of FQI by \cite{wang2021Instabilities} and shows that an idealized variant of FQI incurs exponentially large variance (in the number of rounds $T$) for an instance that results in an unstable ``backup operator'' $\gamma \cov^{-1} \cross$. 
  By standard bias-variance decomposition, this directly implies exponentially large error for estimating $\theta_\gamma^\star$. Although, note that since stability does not hold, there is no guarantee that $\theta_\gamma^\star$ can be written as a power series, so it may not even be the limiting solution of population FQI as discussed at the beginning of this section.
  
  The algorithm is idealized in two senses, both of which are relatively minor. First, it has perfect knowledge of $\cov$ and $\cross$ which does not happen in practice, but is favorable to the algorithm, resulting in a stronger lower bound.
  Second, the error in estimating the reward is assumed to have a full-rank covariance; this arises naturally whenever rewards are perturbed with centered Gaussian noise since $\cov$ is full rank.
  Thus, the result shows that even when the dynamics are known, errors in estimating the rewards will be exponentially magnified, resulting in overall divergence of the algorithm. 

While the theorem does not consider the marginally stable case where $\rho(\gamma \cov^{-1}\cross) = 1$, we note in the proof that if the spectral radius is exactly one, the variance can grow at least linearly with $T$. 
However, marginal stability introduces other issues as we illustrate later on.

At this point, it is natural to wonder whether stability is necessary not just for the success of this algorithm, but rather for the success of \emph{any} algorithm at offline policy evaluation. It turns out that this is is not the case. As we will show in the following section, least squares temporal difference learning works under strictly weaker conditions than fitted Q-iteration.

\section{Least Squares Temporal Difference Learning}
\label{sec:lspe}
Building on our analysis of FQI, we now analyze how a closely related algorithm, least squares temporal difference learning, overcomes some of its shortcomings in the context of offline policy evaluation. Similarly to the previous section, we start by illustrating how invertibility is sufficient for LSTD in \Cref{subsec:lspe_upper}, and discuss connections to previous sufficient conditions in \Cref{subsec:invertibility_connections}. 
Lastly, we conclude in \Cref{subsec:moment_limits}  by presenting lower bounds which show that if invertibility does not hold, then the offline policy evaluation problem cannot be solved using linear estimators (FQI and LSTD being special cases), even asymptotically.

\subsection{Invertibility is sufficient for LSTD}
\label{subsec:lspe_upper}
\begin{theorem}
\label{theorem:lstd_upper_bound}
Assume that realizability and invertibility (\Cref{ass:realizability,ass:full_rank}) both hold and let $\epsz$, $\epsop$ be defined as in \Cref{eq:epsdef}. If $n \gtrsim \levcov^2 \log(d / \delta)$ and $\epsop \leq \smin(I - \gamma \cov^{-1/2}\cross\cov^{-1/2}) / 2$, then the LSTD solution, 
\begin{align*}
	\thetahatls \defeq (I - \gamma \covhat^{-1} \crosshat)^{\dagger} \covhat^{-1} \widehat{\theta}_{\phi,r},
\end{align*}
satisfies the following error guarantee:
\begin{align}
\label{eq:lstd_guarantee}
	\twonorm{\cov^{1/2}(\theta_\gamma^\star - \thetahatls)} \lesssim  \frac{1}{\smin(I - \gamma \cov^{-1/2}\cross\cov^{-1/2})} \cdot \epsz + \frac{1}{\smin(I - \gamma \cov^{-1/2}\cross\cov^{-1/2})^2} \twonorm{\cov^{-1/2} \theta_{\phi,r}}\cdot \epsop .
\end{align} 
\end{theorem}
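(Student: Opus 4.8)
The statement is a perturbation bound for the closed-form LSTD solution around the fixed point of \Cref{eq:bellman_fixed_point}, so the plan is to pass to the ``whitened'' basis in which the relevant estimation errors are exactly $\epsop$ and $\epsz$, use \Cref{lemma:main_concentration_result} to control those errors, upgrade the pseudoinverse appearing in $\thetahatls$ to an honest inverse, and then apply the resolvent identity twice.

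First I would write $M \defeq \whitecross$ and $z^\star \defeq \cov^{-1/2}\tpr$. Since conjugation by $\cov^{1/2}$ commutes with inversion, $\cov^{1/2}(I-\gamma\cov^{-1}\cross)^{-1}\cov^{-1/2} = (I-M)^{-1}$, and because \Cref{ass:realizability,ass:full_rank} make $\tsg = (I-\gamma\cov^{-1}\cross)^{-1}\cov^{-1}\tpr$ well-defined, this gives $\cov^{1/2}\tsg = (I-M)^{-1}z^\star$. I would then introduce the empirical analogues $\widehat M \defeq \cov^{1/2}(\gamma\covhat^{-1}\crosshat)\cov^{-1/2}$ and $\widehat z \defeq \cov^{1/2}\covhat^{-1}\widehat{\theta}_{\phi,r}$, so that, directly from \Cref{eq:epsdef}, $\opnorm{\widehat M - M} = \epsop$ and $\twonorm{\widehat z - z^\star} = \epsz$; once $I-\widehat M$ is shown invertible we similarly get $\cov^{1/2}\thetahatls = (I-\widehat M)^{-1}\widehat z$, so the goal becomes bounding $\twonorm{(I-M)^{-1}z^\star - (I-\widehat M)^{-1}\widehat z}$.

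Next, since $n \gtrsim \levcov^2 \log(d/\delta)$, \Cref{lemma:main_concentration_result} guarantees that $\covhat$ is full rank, so $\widehat M$ and $\widehat z$ are well-defined. For any unit vector $x$, $\twonorm{(I-\widehat M)x} \geq \twonorm{(I-M)x} - \twonorm{(\widehat M - M)x} \geq \smin(I-M) - \epsop$, hence $\smin(I-\widehat M) \geq \smin(I-M) - \epsop \geq \tfrac12\smin(I-M) > 0$, where positivity of $\smin(I-M)$ is exactly invertibility (\Cref{ass:full_rank}) and the factor $\tfrac12$ uses the hypothesis $\epsop \leq \smin(I - \gamma\cov^{-1/2}\cross\cov^{-1/2})/2$. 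Consequently $I - \widehat M$, equivalently $I - \gamma\covhat^{-1}\crosshat$, is invertible and the pseudoinverse in the definition of $\thetahatls$ reduces to a genuine inverse; this is the one step that truly uses the hypotheses, and it is the part I would be most careful about.

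Finally I would decompose
\[
(I-M)^{-1}z^\star - (I-\widehat M)^{-1}\widehat z \;=\; \big[(I-M)^{-1} - (I-\widehat M)^{-1}\big] z^\star \;+\; (I-\widehat M)^{-1}(z^\star - \widehat z),
\]
bound the second term by $\opnorm{(I-\widehat M)^{-1}}\,\epsz \leq (2/\smin(I-M))\,\epsz$, and bound the first via the resolvent identity $A^{-1} - B^{-1} = A^{-1}(B-A)B^{-1}$ with $A = I-M$, $B = I-\widehat M$: this gives $(I-M)^{-1} - (I-\widehat M)^{-1} = (I-M)^{-1}(M-\widehat M)(I-\widehat M)^{-1}$, of operator norm at most $\opnorm{(I-M)^{-1}}\,\epsop\,\opnorm{(I-\widehat M)^{-1}} \leq 2\epsop/\smin(I-M)^2$, which multiplied by $\twonorm{z^\star} = \twonorm{\cov^{-1/2}\tpr}$ and added to the first bound yields \Cref{eq:lstd_guarantee} up to an absolute constant. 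Apart from the pseudoinverse-to-inverse step, the only thing requiring attention is keeping every conjugation by $\cov^{\pm 1/2}$ consistent so that the errors are measured in the data norm controlled by \Cref{lemma:main_concentration_result}; this is routine bookkeeping rather than a real obstacle.
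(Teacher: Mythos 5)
Your proposal is correct and follows essentially the same route as the paper: whiten by $\cov^{1/2}$, split the error as (perturbation of the inverse applied to $\cov^{-1/2}\tpr$) plus (the empirical inverse applied to the vector error), and charge the two pieces to $\epsop$ and $\epsz$ respectively. The only difference is in handling the first piece: the paper invokes Stewart's pseudoinverse perturbation bound (\Cref{lemma:matrix_perturbation}) under the same hypothesis $\epsop \leq \smin(I-\gamma\cov^{-1/2}\cross\cov^{-1/2})/2$, whereas you first show $\smin(I-\widehat M)\geq \smin(I-M)-\epsop>0$ so the pseudoinverse is an honest inverse and then apply the resolvent identity; this is a valid, slightly more self-contained substitute that yields the same bound.
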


As per our discussion immediately following \Cref{theorem:main_result}, the upper bound on $\twonorm{\cov^{1/2}(\theta_\gamma^\star - \widehat{\theta}_\gamma)}$ again directly implies guarantees on $|\Qpi(s,a) - \Qhat(s,a)|$, both pointwise and in expectation, where now $\Qhat(s,a) = \phi(s,a)^\top \thetahatls$. 
On a technical level, the proof follows from standard perturbation bounds on matrix inverses.

Our upper bound for LSTD has qualitatively similar properties to that presented for FQI in \Cref{theorem:main_result}.

\paragraph{A sharper notion of problem complexity.}

Much like $\opnorm{P_\gamma}$ for FQI, the magnitude of our upper bound for the policy evaluation error of LSTD is determined by an instance-dependent quantity: $1/\smin(I - \gamma \cov^{-1/2}\cross\cov^{-1/2})$. 
As before this term is: $(1)$ never much larger than $1/(1-\gamma)$ for settings where OPE was previously shown to be tractable (see the next subsection for further discussion of this point), and $(2)$ is often significantly smaller. 
For example, for the OPE instance detailed in \eqref{tikz:sharp}, if $p \leq .7$, then $1/\smin(I - \gamma \cov^{-1/2}\cross\cov^{-1/2}) \leq 4$ for all $\gamma \in (0,1)$. 

\paragraph{Coordinate invariance.} From \Cref{prop:coordinate_free}, we know that for any choice of full rank matrix $L$ and features $\widetilde{\phi}(\cdot) = L \phi(\cdot)$, the whitened cross-covariance in these new features, $\gamma \tcov^{-1/2} \tcross \tcov^{-1/2}$ (see definition in \Cref{eq:new_coord_mats}) is equal to $\gamma U \cov^{-1/2} \cross \cov^{-1/2}U^\top$ for some orthogonal matrix $U$. Since conjugating by an orthogonal matrix preserves singular values, $1/\smin(I - \gamma \cov^{-1/2}\cross\cov^{-1/2})$ is invariant to the choice of coordinates. 

%\paragraph{Distribution shift.} Lastly, analogous the stability condition studied previously, for a fixed MDP $\calM$, invertibility can vary with the offline distribution $\calD$ as we will discuss in more detail in \akedit{????}

\subsection{Contextualizing Invertibility}
\label{subsec:invertibility_connections}

Paralleling our discussion of stability for FQI, we now discuss how our notion of invertibility relates to previous conditions analyzed in the literature. Furthermore, we will present how stability implies invertibility, establishing a precise ``nesting'' between the classes of OPE problems which satisfy each condition.

%% There have been a number of previous works analyzing the sample complexity of offline policy evaluation via LSTD. Here, we review some of the most closely related  analyses and illustrate how previous sufficient conditions directly imply invertibility and also quantitative upper bounds on $1/\smin(I - \gamma \cov^{-1/2}\cross\cov^{-1/2})$.

\subsubsection{Stability $\subsetneq$ Invertibility}
%Our first result illustrates how stability (\Cref{ass:stability}) directly implies invertibility (\Cref{ass:full_rank}). The proof follows directly from \Cref{fact:neumann_series}: 

\begin{proposition}
\label{prop:stability_invertibility}
If $\cov$ is full rank and $\whitecross$ is stable (\Cref{ass:stability}), then $I - \whitecross$ is  invertible (\Cref{ass:full_rank}). Furthermore, 
\begin{align}
\label{eq:min_py}
	\frac{1}{\smin(I - \whitecross)} \lesssim \cond(P_\gamma)^{1/2} \opnorm{P_\gamma}.
\end{align}
\end{proposition}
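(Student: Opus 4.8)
The plan is to reduce the estimate to a geometric decay bound for the powers of $A \defeq \whitecross = \gamma\cov^{-1/2}\cross\cov^{-1/2}$, extracted from the Lyapunov equation defining $P_\gamma$. First, for the invertibility claim: $A$ is similar to $\gamma\cov^{-1}\cross$ (conjugate by $\cov^{1/2}$), so the two matrices share a spectrum, and stability $\rho(A)<1$ forces $1$ not to be an eigenvalue of either; hence $I-A$ and $I-\gamma\cov^{-1}\cross$ are both invertible, which together with the hypothesis that $\cov$ is full rank is exactly \Cref{ass:full_rank}. (Invertibility of $I-A$ also follows directly from \Cref{fact:neumann_series}.)

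Since $I-A$ is invertible, $1/\smin(I-A) = \opnorm{(I-A)^{-1}}$, so it suffices to bound the latter. By \Cref{fact:neumann_series}, $(I-A)^{-1} = \sum_{k\ge 0}A^k$, whence $\opnorm{(I-A)^{-1}} \le \sum_{k\ge 0}\opnorm{A^k}$, and the whole problem is to control $\opnorm{A^k}$.

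For this I would whiten using $P_\gamma$. Recall $P_\gamma = \dlyaps{\whitecross}$ satisfies $P_\gamma = A^\top P_\gamma A + I$ and, via its series expansion $P_\gamma = \sum_{j\ge 0}(A^\top)^j A^j$, obeys $P_\gamma \succeq I$. Set $B \defeq P_\gamma^{1/2}AP_\gamma^{-1/2}$. Substituting $A = P_\gamma^{-1/2}BP_\gamma^{1/2}$ into the Lyapunov equation and multiplying on the left and right by $P_\gamma^{-1/2}$ gives $B^\top B = I - P_\gamma^{-1}$; since $P_\gamma \succeq I$ implies $P_\gamma^{-1} \succeq \opnorm{P_\gamma}^{-1}I$, we get $B^\top B \preceq (1 - 1/\opnorm{P_\gamma})I$, i.e. $\opnorm{B}^2 \le 1 - 1/\opnorm{P_\gamma}$, hence $\opnorm{B^k} \le (1 - 1/\opnorm{P_\gamma})^{k/2}$. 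Conjugating back, $A^k = P_\gamma^{-1/2}B^kP_\gamma^{1/2}$, so $\opnorm{A^k} \le \opnorm{P_\gamma^{-1/2}}\opnorm{P_\gamma^{1/2}}(1 - 1/\opnorm{P_\gamma})^{k/2} = \cond(P_\gamma)^{1/2}(1 - 1/\opnorm{P_\gamma})^{k/2}$. Summing the geometric series gives $\opnorm{(I-A)^{-1}} \le \cond(P_\gamma)^{1/2}\bigl(1 - \sqrt{1 - 1/\opnorm{P_\gamma}}\bigr)^{-1}$, and the elementary inequality $1 - \sqrt{1-x}\ge x/2$ on $[0,1]$, applied with $x = 1/\opnorm{P_\gamma}$, yields $1/\smin(I-\whitecross) \le 2\opnorm{P_\gamma}\cond(P_\gamma)^{1/2}$, as claimed.

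The one genuine subtlety — and hence the step I would be most careful with — is obtaining the sharper $\cond(P_\gamma)^{1/2}$ prefactor rather than the cruder $\opnorm{P_\gamma}^{1/2}$ that a plain $v \mapsto v^\top P_\gamma v$ potential argument produces: this is precisely why one passes to the whitened operator $B = P_\gamma^{1/2}AP_\gamma^{-1/2}$, whose powers decay like $(1-1/\opnorm{P_\gamma})^{k/2}$, and pays only $\opnorm{P_\gamma^{1/2}}\opnorm{P_\gamma^{-1/2}} = \cond(P_\gamma)^{1/2}$ when converting back to the standard operator norm. Everything else is routine bookkeeping.
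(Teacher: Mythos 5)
Your proposal is correct and follows essentially the same route as the paper: invertibility via the Neumann series (\Cref{fact:neumann_series}), then $1/\smin(I-\whitecross)=\opnorm{(I-\whitecross)^{-1}}\le\sum_k\opnorm{(\whitecross)^k}$, a Lyapunov-based decay bound $\opnorm{(\whitecross)^k}\le\cond(P_\gamma)^{1/2}(1-1/\opnorm{P_\gamma})^{k/2}$, and a geometric sum. Your derivation of the decay via $B=P_\gamma^{1/2}(\whitecross)P_\gamma^{-1/2}$ and $B^\top B=I-P_\gamma^{-1}$ is just a repackaging of the paper's \Cref{lemma:stability_margin} (which iterates $A^\top P_\gamma A\preceq P_\gamma(1-1/\opnorm{P_\gamma})$ and pays the same $\cond(P_\gamma)^{1/2}$ when converting norms), and it yields the identical final bound.
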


The main message of this proposition is twofold. 
First, for the case of linear function approximation, any OPE problem that is solvable via FQI, must also be solvable via LSTD. 
Second, from \Cref{eq:min_py} we see that main complexity measure for  \Cref{theorem:lstd_upper_bound}, $1 / \smin(I -\whitecross)$ is never larger than the corresponding upper bound for FQI in \Cref{theorem:main_result}, $\cond(P_\gamma)^{1/2} \opnorm{P_\gamma}$.

Interestingly enough, while stability implies invertibility, the converse is not true. There exist problems for which $I - \whitecross$ is invertible, but $\whitecross$ is not stable. For example, consider the following 2 state MDP, with no actions:
\begin{center}
\begin{tikzpicture}[scale=0.15]
\tikzstyle{every node}+=[inner sep=0pt]
\draw [black] (30.2,-27.5) circle (3);
\draw (30.2,-27.5) node {$s_0$};
\draw [black] (46.4,-27.5) circle (3);
\draw (46.4,-27.5) node {$s_1$};
\draw [black] (49.08,-26.177) arc (144:-144:2.25);
\draw (53.65,-27.5) node [right] {$1$};
\fill [black] (49.08,-28.82) -- (49.43,-29.7) -- (50.02,-28.89);
\draw [black] (33.2,-27.5) -- (43.4,-27.5);
\fill [black] (43.4,-27.5) -- (42.6,-27) -- (42.6,-28);
\draw (38.3,-28) node [below] {$1$};
\end{tikzpicture}
\end{center}
If we set $R(s_0) = R(s_1) = \textrm{Unif}(\{\pm 1\})$, and $\phi(s_0)=1$, $\phi(s_1)=2$, then this OPE instance is trivially linearly realizable with $\theta^\star_\gamma = 0$. 
If the offline distribution $\calD$ places mass $p$ on $s_0$ and $1-p$ on $s_1$, it is easy to see that  $I - \whitecross$ is invertible for all $p, \gamma \in (0,1)$. 
However, for $p=\gamma=.9$, $\whitecross$ is at least $3/2$, hence stability does not hold and FQI will necessarily diverge. Together, these results establish a separation between the set of problems solvable via FQI and those solvable via LSTD.%
\footnote{The careful reader might observe that the main reason why FQI fails in this example is that the algorithm is sensitive to the scale of the next state features. 
For instance, stability (and realizability) would hold if $|\phi(s_1)| < 1$.}

Moreover, for the set of previously analyzed settings where stability holds, we can establish quantitative upper bounds on $1/\smin(I - \gamma \cov^{-1/2}\cross\cov^{-1/2})$ illustrating how this quantity is comparable to $1/(1-\gamma)$.

\begin{corollary}
Assume $\cov \succ 0$. If there is low distribution shift (\Cref{ass:low_ds}), then for $\gammads := \gamma\sqrt{\cds}$,
\begin{align*}
\frac{1}{\smin(I - \gamma \cov^{-1/2}\cross\cov^{-1/2})} \leq \frac{1}{1 - \gammads}.
\end{align*}
Moreover, if Bellman completeness holds (\Cref{ass:completeness}), then
\begin{align*}
\frac{1}{\smin(I - \gamma \cov^{-1/2}\cross\cov^{-1/2})} \leq \frac{\levcov}{ 1 - \gamma}.
\end{align*}
\end{corollary}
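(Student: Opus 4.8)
The plan is to reduce both inequalities to the geometric decay bounds on the powers of the whitened cross-covariance that were already established in \Cref{corollary:low_distribution_shift} and \Cref{corollary:completeness_case}. Write $A \defeq \whitecross$. The first step is to recall the elementary identity $\smin(I-A) = 1/\opnorm{(I-A)^{-1}}$, valid whenever $I-A$ is invertible: if $M$ is invertible with singular values $\sigma_1\geq\cdots\geq\sigma_d>0$, then $M^{-1}$ has singular values $1/\sigma_d\geq\cdots\geq 1/\sigma_1$, so $\sigma_{\max}(M^{-1}) = 1/\smin(M)$. Hence the whole task is to upper bound $\opnorm{(I-A)^{-1}}$.

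Second, under low distribution shift (\Cref{ass:low_ds}), \Cref{corollary:low_distribution_shift} gives $\opnorm{A^k}\leq\gammads^{k}$ for every $k\geq 0$, where $\gammads=\gamma\sqrt{\cds}<1$; under Bellman completeness, \Cref{corollary:completeness_case} gives $\opnorm{A^k}\leq\levcov\gamma^k$ for every $k\geq 0$. In either case $\opnorm{A^k}\to 0$, so $A$ is stable and \Cref{fact:neumann_series} applies, yielding the norm-convergent Neumann expansion $(I-A)^{-1}=\sum_{k=0}^\infty A^k$. By the triangle inequality together with the summability of the $\opnorm{A^k}$ furnished by these geometric bounds,
\begin{equation*}
\opnorm{(I-A)^{-1}} \;\leq\; \sum_{k=0}^\infty \opnorm{A^k} \;\leq\; \sum_{k=0}^\infty \gammads^{k} \;=\; \frac{1}{1-\gammads}
\end{equation*}
in the low-distribution-shift case, and analogously $\opnorm{(I-A)^{-1}} \leq \sum_{k=0}^\infty \levcov\gamma^k = \levcov/(1-\gamma)$ in the completeness case. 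Taking reciprocals through the identity from the first step gives the two claimed bounds on $1/\smin(I-\whitecross)$.

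There is essentially no obstacle here; this is a short corollary of facts already in hand. The only points requiring a word of care are (i) working with $\smin(I-A)=1/\opnorm{(I-A)^{-1}}$ rather than trying to control $\smin(I-A)$ directly through $\opnorm{A}$ (which would not even yield invertibility), and (ii) justifying that the operator-norm triangle inequality passes inside the infinite Neumann series, which is legitimate precisely because the geometric decay makes $\sum_k\opnorm{A^k}$ finite. One could alternatively route through \Cref{prop:stability_invertibility} combined with the bounds $\opnorm{P_\gamma}\leq 1/(1-\gammads)$ and $\opnorm{P_\gamma}\leq\levcov/(1-\gamma)$ from the two corollaries, but that introduces spurious $\cond(P_\gamma)^{1/2}$ factors and gives a strictly weaker statement, so I would use the direct Neumann-series computation above.
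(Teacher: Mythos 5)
Your argument is exactly the paper's proof: identify $1/\smin(I-\whitecross)=\opnorm{(I-\whitecross)^{-1}}$, expand via the Neumann series from \Cref{fact:neumann_series} (stability holding in both regimes), and apply the triangle inequality with the power bounds in \Cref{eq:lds_power_bounds,eq:completeness_power_bounds}. Correct and complete, with the same route as the paper.
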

This result follows from observing that $1 / \smin(I - \gamma \cov^{-1/2}\cross\cov^{-1/2}) = \opnorm{(I -\whitecross)^{-1}}$. Since stability holds for both of these settings, we can use \Cref{fact:neumann_series} to write $(I -\whitecross)^{-1}$ as an infinite power series in $\whitecross$. Applying the triangle inequality and the bounds from \Cref{eq:completeness_power_bounds,eq:lds_power_bounds} on the powers of $\whitecross$ finishes the proof of this corollary. 

%\akcomment{So I guess we should be able to say something in the contractive case? Basically invertibility implies Lemma 3.3?}

\subsubsection{Other connections}

Recent work by \cite{mou2020OptimalOI} analyzes oracle inequalities for solving projected fixed point equations, of which the Bellman equation (\Cref{eq:thetastar}) is a special case. For the offline policy evaluation setting, they prove that a stochastic approximation variant of LSTD succeeds if the following condition holds:
\begin{assumption}[Symmetric Stability]
\label{ass:wenlong}
The matrix $\cov$ is full rank, and $\whitecross$ satisfies 
\begin{align*}
\kappa \defeq \frac{1}{2}\lambda_{\max}(\whitecross + (\whitecross)^\top)<1.
\end{align*}
\end{assumption}
Here, $\lambda_{\max}$ denotes the maximal eigenvalue of a matrix.\footnote{The matrix in \Cref{ass:wenlong} is symmetric so all eigenvalues are real and the maximum is well defined.} In their paper, the authors remark how \Cref{ass:wenlong} directly implies that $I - \whitecross$ is invertible. Amongst other quantities, their bounds scale with $1 / (1 - \kappa)$. This quantity is always at least as large as $1 / \smin(I - \whitecross)$.
\begin{proposition}
\label{prop:wenlong}
If \Cref{ass:wenlong} holds, then $I-\whitecross$ is invertible and
\begin{align*}
	\frac{1}{\smin(I - \whitecross)} \leq \frac{1}{1-\kappa}.
\end{align*}
\end{proposition}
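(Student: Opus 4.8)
The plan is to derive the singular-value bound $\smin(I - \whitecross) \ge 1 - \kappa$ directly from the numerical-range inequality encoded in \Cref{ass:wenlong}, and then read off invertibility of $I - \whitecross$ as an immediate consequence. Write $A \defeq \whitecross = \gamma\cov^{-1/2}\cross\cov^{-1/2}$; this is a well-defined real matrix since $\cov \succ 0$. The hypothesis $\kappa = \tfrac12\lambda_{\max}(A + A^\top) < 1$ is, by the variational characterization of $\lambda_{\max}$, equivalent to $A + A^\top \preceq 2\kappa I$, i.e.\ $v^\top A v = \tfrac12 v^\top(A + A^\top)v \le \kappa\,\twonorm{v}^2$ for every $v \in \R^d$ (using $v^\top A v = v^\top A^\top v$).

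First I would lower bound $\twonorm{(I - A)v}$ for an arbitrary nonzero $v \in \R^d$. By Cauchy--Schwarz together with the displayed bound on $v^\top A v$,
\begin{align*}
\twonorm{(I - A)v}\,\twonorm{v} \;\ge\; \langle (I - A)v,\, v\rangle \;=\; \twonorm{v}^2 - v^\top A v \;\ge\; (1 - \kappa)\twonorm{v}^2,
\end{align*}
so that $\twonorm{(I-A)v} \ge (1-\kappa)\twonorm{v}$. Since $\smin(I - A) = \inf_{v \ne 0}\twonorm{(I-A)v}/\twonorm{v}$, this gives $\smin(I - A) \ge 1 - \kappa > 0$; in particular $I - A$ is invertible, so \Cref{ass:full_rank} holds, and $1/\smin(I - A) \le 1/(1 - \kappa)$, which is exactly the claimed inequality.

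The argument has essentially no obstacle, but there is one pitfall worth flagging: expanding $\twonorm{(I - A)v}^2 = \twonorm{v}^2 - 2 v^\top A v + \twonorm{Av}^2$ on the unit sphere and discarding $\twonorm{Av}^2$ only yields $\smin(I - A) \ge \sqrt{1 - 2\kappa}$, which is strictly weaker and becomes vacuous once $\kappa \ge 1/2$; the Cauchy--Schwarz step above is what extracts the sharp, linear-in-$\kappa$ estimate. Finally, I would remark that the statement is coordinate-free in the sense of \Cref{prop:coordinate_free}: under a change of features $\tphi = L\phi$ the matrix $\whitecross$ changes only by conjugation by an orthogonal matrix, which preserves both $\lambda_{\max}(\whitecross + \whitecross^\top)$ and $\smin(I - \whitecross)$, so the comparison between $1/\smin(I - \whitecross)$ and $1/(1-\kappa)$ is independent of the chosen basis.
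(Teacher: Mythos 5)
Your proof is correct and follows essentially the same route as the paper's: both bound the quadratic form $u^\top(I-\whitecross)u \geq (1-\kappa)\twonorm{u}^2$ via the symmetric-part hypothesis and then apply Cauchy--Schwarz to conclude $\smin(I-\whitecross)\geq 1-\kappa$. The extra remarks on the weaker $\sqrt{1-2\kappa}$ bound and coordinate invariance are fine but not needed.
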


Recent work by \cite{li2021accelerated} extends the stochastic approximation analysis from \cite{mou2020OptimalOI} to incorporate variance reduction techniques. Their upper bounds  directly assume invertibility, but also have explicit dependence $1/(1-\gamma)$ which can be quite loose in certain settings as detailed earlier. 
%On a similar note to \cite{mou2020OptimalOI}, recent work by \cite{li2021accelerated} analyzes a variance-reduced, stochastic approximation procedures for LSTD. Their upper bounds for OPE directly assume invertibility and depend \jcpnote{should we state their lower bound:}

Apart from these analyses, \cite{kolter2011fixed} proves that LSTD succeeds in the offline setting if a certain linear matrix inequality holds:

\begin{assumption}[Contractivity]
\label{ass:kolter}
The matrix $\cov$ is full rank and together with  $\cross$ satisfies,
\begin{align*}
	\begin{bmatrix}
		\cov & \cross \\ 
		\cross^\top & \cov 
	\end{bmatrix} \succeq 0.
\end{align*}
\end{assumption}
A simple Schur complement argument illustrates that this assumption from \cite{kolter2011fixed} implies that the whitened cross covariance has \emph{operator norm} strictly less than 1. 
Since the spectral radius of a matrix is always smaller than its operator norm, this condition directly implies that $\whitecross$ is stable (\Cref{ass:stability}) and that $I- \whitecross$ is invertible (\Cref{ass:full_rank}). 

\begin{proposition}
\label{prop:kolter}
If \Cref{ass:kolter} holds, then $\opnorm{\whitecross}<1$ and stability holds.
\end{proposition}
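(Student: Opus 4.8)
The plan is to reduce the block positive-semidefiniteness condition in \Cref{ass:kolter} to an operator-norm bound on the whitened cross-covariance $\whitecross$ via a single Schur-complement computation. First I would observe that $\cov = \E_{\calD}[\phi(s,a)\phi(s,a)^\top]$ is automatically positive semidefinite, and since it is assumed full rank we in fact have $\cov \succ 0$; this is exactly what is needed to invoke the standard Schur-complement characterization. Applying it to $\begin{bmatrix}\cov & \cross\\ \cross^\top & \cov\end{bmatrix} \succeq 0$ with the pivot block $\cov \succ 0$ yields the equivalent statement that the Schur complement $\cov - \cross^\top\cov^{-1}\cross$ is positive semidefinite.

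Next I would conjugate this inequality by $\cov^{-1/2}$ on both sides, which preserves the Loewner order since $\cov^{-1/2}$ is symmetric and invertible, obtaining $I \succeq \cov^{-1/2}\cross^\top\cov^{-1}\cross\cov^{-1/2}$. Setting $M \defeq \cov^{-1/2}\cross\cov^{-1/2}$, the right-hand side is exactly $M^\top M$, because $\cov^{-1/2}\cross^\top\cov^{-1}\cross\cov^{-1/2} = (\cov^{-1/2}\cross^\top\cov^{-1/2})(\cov^{-1/2}\cross\cov^{-1/2}) = M^\top M$. Hence $M^\top M \preceq I$, i.e. $\opnorm{M}^2 = \lambda_{\max}(M^\top M) \le 1$, so $\opnorm{M}\le 1$.

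Finally, since $\whitecross = \gamma M$ with $\gamma \in (0,1)$, we get $\opnorm{\whitecross} = \gamma\,\opnorm{M} \le \gamma < 1$, which is the first claim. For the second claim, $\rho(\whitecross) \le \opnorm{\whitecross} < 1$ always holds, and combined with $\cov$ being full rank this is precisely \Cref{ass:stability}; invertibility of $I - \whitecross$ then follows from \Cref{fact:neumann_series}. There is essentially no obstacle here — the only two points meriting a moment's care are justifying $\cov \succ 0$ so that the Schur complement is valid, and noting that although the Schur complement only gives the non-strict bound $\opnorm{M} \le 1$, the strict inequality $\opnorm{\whitecross} < 1$ is recovered for free from the discount factor $\gamma < 1$.
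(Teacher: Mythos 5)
Your proof is correct and follows essentially the same route as the paper's: a Schur-complement reduction of \Cref{ass:kolter} (valid since $\cov \succ 0$), conjugation by $\cov^{-1/2}$ to get $\opnorm{\cov^{-1/2}\cross\cov^{-1/2}} \le 1$, the factor $\gamma < 1$ to make the bound strict, and $\rho(\cdot) \le \opnorm{\cdot}$ to conclude stability. The only cosmetic difference is that you pivot on the other block, obtaining $M^\top M \preceq I$ where the paper bounds $MM^\top$, which is immaterial.
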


As in the case of FQI, we see how our characterization of LSTD in terms of invertibility neatly unifies previous analyses of when this algorithm succeeds in the offline setting. Furthermore, our invertibility-based analysis strictly subsumes these previous studies. There exist problems for which  stability and invertibility hold but these other conditions (e.g.,~low distribution shift, Bellman completeness, etc.) do not.

\begin{proposition}
\label{prop:counterexamples}
For each of the following cases, there exists an offline policy evaluation problem defined by an MDP $\mathcal{M}$, an offline distribution $\cD$, and a  target policy $\pi$ such that $\Qpi$ is linearly realizable in a feature mapping $\phi$ (\Cref{ass:realizability} holds) where:
\begin{itemize}
	\item Stability and invertibility both hold, yet low distribution shift (\Cref{ass:low_ds}) does not. 
	\item Stability and invertibility both hold, yet Bellman completeness (\Cref{ass:completeness}) does not. 
	\item Stability and invertibility both hold, yet symmetric stability (\Cref{ass:wenlong}) does not. 
	\item Stability and invertibility both hold, yet contractivity (\Cref{ass:kolter}) does not. 
\end{itemize}
\end{proposition}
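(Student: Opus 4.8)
I would prove the proposition by exhibiting two small, explicit instances — deterministic Markov chains (MDPs with no actions) together with a feature map and offline distribution $\calD$ — and verifying the claimed separations by direct computation. The organizing principle is that stability and invertibility only constrain the \emph{spectrum} of $\cov^{-1/2}\cross\cov^{-1/2}$ (its spectral radius scaled by $\gamma$ is below $1$, and $1$ is not an eigenvalue of $\whitecross$), whereas low distribution shift, contractivity, and symmetric stability constrain its \emph{singular values} or its symmetric part, and Bellman completeness constrains whether the span of $\phi$ is closed under Bellman backups. The first three quantities decouple from the spectrum as soon as $\cov^{-1/2}\cross\cov^{-1/2}$ is made non-normal, so a single non-normal instance handles those three bullets at once; Bellman completeness needs a separate instance with a mild ``aliasing'' of features.

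\emph{Instance for the first, third, and fourth bullets.} Take the two-state chain $s_0 \to s_1 \to s_0$, let $\calD$ assign probability $p_0$ to $s_0$ and $p_1 = 1-p_0$ to $s_1$, and set $\phi(s_0) = p_0^{-1/2}e_1$, $\phi(s_1) = p_1^{-1/2}e_2$. Since these features span $\R^2$, $\Qpi$ is linearly realizable for any bounded reward, so \Cref{ass:realizability} holds; moreover $\cov = I$, and $\cov^{-1/2}\cross\cov^{-1/2} = \cross$ is the $2\times 2$ matrix with zero diagonal, $(1,2)$ entry $\sqrt{p_0/p_1}$ and $(2,1)$ entry $\sqrt{p_1/p_0}$. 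Hence the eigenvalues of $\whitecross$ are $\pm\gamma$, so $\rho(\whitecross) = \gamma < 1$ (stability) and $\det(I-\whitecross) = 1-\gamma^2 \ne 0$ (invertibility). Taking $p_0/p_1$ large: $\opnorm{\cov^{-1/2}\cross\cov^{-1/2}} = \sqrt{p_0/p_1} > 1$, which by the Schur-complement characterization makes the positive-semidefiniteness in \Cref{ass:kolter} fail; $\tfrac12\lambda_{\max}(\whitecross + \whitecross^\top) = \tfrac{\gamma}{2}\bigl(\sqrt{p_0/p_1}+\sqrt{p_1/p_0}\bigr) \ge 1$, so \Cref{ass:wenlong} fails; and $\nextcov = \tfrac{p_1}{p_0}e_1e_1^\top + \tfrac{p_0}{p_1}e_2e_2^\top$ gives $\cds = p_0/p_1 \ge 1/\gamma^2$, so \Cref{ass:low_ds} fails. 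Concretely, $\gamma = \tfrac12$ and $p_0/p_1 = 100$ suffice. Note this instance must be at least two-dimensional: in dimension one, symmetric stability and stability coincide.

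\emph{Instance for the second bullet.} Take the two-state chain $s_0 \to s_1 \to s_1$ with scalar features $\phi(s_0) = 1$, $\phi(s_1) = 2$, deterministic rewards $\E r(s_0) = 0$ and $\E r(s_1) = 1$, discount $\gamma = \tfrac12$, and $\calD$ uniform on $\{s_0, s_1\}$. Then $\Qpi(s_1) = 2$ and $\Qpi(s_0) = 1$, so $\Qpi$ is realized by $\theta^\star_\gamma = 1$. Bellman completeness (\Cref{ass:completeness}) would require, for every $\theta$, a scalar $\theta'$ with $\theta' = \gamma\phi(s_1)\theta + \E r(s_0) = \theta$ and $\phi(s_1)\theta' = \gamma\phi(s_1)\theta + \E r(s_1) = \theta + 1$ simultaneously, i.e.\ $2\theta = \theta + 1$ for all $\theta$, which is impossible; hence $\phi$ is not Bellman complete. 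On the other hand $\cov = \tfrac52$ and $\cross = 3$, so $\whitecross = \tfrac35 < 1$: stability and invertibility both hold (and $\opnorm{\cov^{-1/2}\cross\cov^{-1/2}} = \tfrac65 > 1$, so contractivity also fails for this instance).

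The computations are all elementary; the only delicate point is the second instance, where realizability ties the feature ratio $\phi(s_1)/\phi(s_0)$ to the bounded rewards and the discount, so one must check that parameters exist for which this ratio differs from $1$ — which is exactly what breaks Bellman completeness — while $\abs{\whitecross}$ stays below $1$; fixing $\gamma = \tfrac12$ accomplishes this. I expect the final write-up to consist of these two verifications, with the first invoked for three of the four cases.
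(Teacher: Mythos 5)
Your proposal is correct and follows essentially the same route as the paper: one explicit non-normal instance (yours a two-state cycle with a skewed offline distribution, the paper's a four-state pair of chains with $\epsilon$-scaled features) knocks out low distribution shift, symmetric stability, and contractivity simultaneously, and a separate two-state chain with scalar aliased features (yours with $\phi(s_0)=1,\phi(s_1)=2,\gamma=\tfrac12$; the paper's with $\phi(s_0)=\gamma,\phi(s_1)=1$) handles Bellman completeness. All of your computations check out, so the only differences from the paper are the particular parameterizations of the counterexamples.
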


In short, there is a nontrivial gap between the problems we knew could be solved via previous analyses and the ones we know we can solve in light of our work.

\subsection{Invertibility is necessary for all linear estimators}
\label{subsec:moment_limits}

We finish our presentation of LSTD by proving that invertibility is not just sufficient, it is also strictly necessary for LSTD, as well as for a broad class of ``linear'' estimators. To do so, we first formally define what we mean by linear estimators:

\begin{definition}[Population Linear Estimator]
\label{def:linear_estimator}
Let $\mathsf{Alg}$ be a deterministic algorithm which given an infinite horizon, discounted MDP $\calM$, a distribution $\calD$ over $\states \times \actions$, and a policy $\pi$ returns a function $\Qhat: \states \times \actions \rightarrow \R$. Furthermore, let $(\calM, \calD, \pi)$ and $(\altm, \overline{\calD}, \overline{\pi})$ be two OPE instances such that: 
\begin{itemize}
	\item The corresponding action value functions $\Qpi, \bar{Q}^{\overline{\pi}}$ are both linearly realizable in a feature map $\phi$.
	\item The covariance, cross-covariance and mean feature-reward vectors (as defined in \Cref{eq:pop_defs,eq:thetaphir}) are identical in $(\calM, \calD, \pi)$ and $(\altm, \overline{\calD}, \overline{\pi})$: 
	\begin{align*}
		\altcov = \cov, \quad \altcross = \cross, \quad \altthetar = \theta_{\phi, r}, \quad \E_{\calD}\overline{r}(s,a) = \E_{\calD} r(s,a).
	\end{align*}
\end{itemize}
We say that $\mathsf{Alg}$ is a \emph{population linear estimator} if $\mathsf{Alg}(\calM, \calD, \pi) = \mathsf{Alg}(\altm, \overline{\calD}, \overline{\pi})$.
\end{definition}

While our focus has been on studying the finite sample performance of estimators for OPE, in this definition we choose to catalogue algorithms based on their asymptotic behavior so as to neatly abstract technical modifications like variance reduction. These techniques introduce differences in finite sample behaviors, but are not essential to the overall \emph{identifiability} concerns that are the focus of this subsection.

Intuitively, linear estimators are those whose population-level solution depends on the low-order moments of the data. These moments correspond to the quantities which appear in the solution to the projected Bellman equation:
\begin{align*}
\cov \theta_\gamma^\star  =\theta_{\phi, r}  + \gamma \cross \theta_\gamma^\star.
\end{align*} 
From their definitions in \Cref{eq:tstep_fqi,eq:lstd_def}, we see that common estimators such as LSTD and FQI both satisfy this definition. Interestingly, not all known, or least-squares-like, estimators are linear (e.g Bellman Residual Minimization). We will discuss these after presenting the lower bound.

\begin{theorem}
\label{theorem:lower_bound_lspe}
Let $\calM=(\states, \actions, P, R, \gamma)$ be any MDP with associated offline distribution $\calD$ with rewards uniformly bounded by 1 $(\text{i.e.,~}\sup_{(s,a) \in \states \times \actions} |r(s,a)| \leq 1)$ such that: 
\begin{itemize}
	\item $\Qpi(s,a)$ is linearly realizable in $\phi$.
%	\item \footnote{Without loss of generality, we can assume $\phi$ has a constant feature.}
	\item $\cov$ is full rank.
	\item $I-\gamma \cov^{-1}\cross$ is rank deficient.
\end{itemize}
Then, there exists a different MDP $\altm = (\states, \actions, P, \altreward, \gamma)$, with identical states, actions, and transitions, and whose reward distribution $\altreward$ is uniformly bounded by 2, such that for the same offline distribution $\calD$: 
\begin{itemize}
	\item The Q-function for $\pi$ in $\altm$, $\altqpi$, is linearly realizable in the same feature mapping $\phi$.
	\item The covariance, cross-covariance, next state covariance, and mean feature-reward vector in $\altm$ are identical to their counterparts in $\calM$: 
	\begin{align*}
		\altcov = \cov, \quad \altcross = \cross,\quad \altnext = \Sigma_{\mathrm{next}}, \quad \altthetar = \theta_{\phi, r}.	\end{align*}
	\item However, the $Q$ functions are different: $$\E_{\calD}(\Qpi(s,a) - \bar{Q}^\pi(s,a))^2 \gtrsim \smin(\cov)\; / \sup_{(s,a) \in \states \times \actions} \twonorm{\phi(s,a)}^2.$$
\end{itemize} 
Consequently, if we define $\mathsf{LE}$ as the set of population linear estimators which satisfy \Cref{def:linear_estimator}, we have that 
\begin{align*}
	\inf_{\mathsf{Alg} \in \mathsf{LE}} \sup_{(\calM', \calD', \pi') \in \calN} \E_{\calD} (Q'^{\pi}(s,a) - \Qhat(s,a))^2 \gtrsim \smin(\cov)\; / \sup_{(s,a) \in \states \times \actions} \twonorm{\phi(s,a)}^2.
\end{align*}
where $\Qhat = \mathsf{Alg}(\calM', \calD', \pi')$ and $\calN = \{ (\calM, \calD, \pi), (\altm, \calD, \pi)\}$
\end{theorem}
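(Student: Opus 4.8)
The plan is to exploit the nontrivial kernel of $I-\gamma\cov^{-1}\cross$ to build two OPE instances that are indistinguishable to every population linear estimator yet have far-apart action-value functions. Since $\cov$ is full rank and $I-\gamma\cov^{-1}\cross$ is rank deficient, pick a unit vector $w\neq 0$ with $(I-\gamma\cov^{-1}\cross)w=0$, equivalently $(\cov-\gamma\cross)w=0$. The algebraic crux is that if $\theta$ realizes $\Qpi$ and hence solves the projected Bellman equation $\cov\theta=\theta_{\phi,r}+\gamma\cross\theta$ (\Cref{eq:thetastar}), then $\theta+cw$ solves the \emph{same} equation for every scalar $c$, since $(\cov-\gamma\cross)(cw)=0$. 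So $w$ is a ``ghost direction'': it is invisible to the moments $(\cov,\cross,\theta_{\phi,r})$ that pin down a linear estimator's output, but it changes the realized function $\phi(\cdot)^\top(\theta+cw)$.

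Second, I would promote this to a genuine MDP. Assuming $B:=\sup_{(s,a)\in\states\times\actions}\twonorm{\phi(s,a)}$ is finite (otherwise the claimed bound is vacuous), set $c:=1/(2B)$, $\altqpi(s,a):=\phi(s,a)^\top(\theta+cw)$, and define a deterministic reward through its Bellman residual under $(P,\pi)$:
\[
\bar r(s,a):=\altqpi(s,a)-\gamma\!\!\underset{\substack{s'\sim P(\cdot\mid s,a)\\ a'\sim\pi(s')}}{\E}\!\!\altqpi(s',a'),
\]
with $\altm:=(\states,\actions,P,\bar R,\gamma)$ and $\bar R(s,a)$ the point mass at $\bar r(s,a)$. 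Because the Bellman evaluation equation of $\altm$ has a unique bounded solution, $\altqpi$ is exactly the action-value function of $\pi$ in $\altm$, linearly realized by $\theta+cw$; and since $\altqpi-\Qpi=\phi(\cdot)^\top(cw)$ one gets $\abs{\bar r(s,a)}\le 1+c(1+\gamma)B\le 2$, so $\altm$ is an allowed instance. Keeping $P$ fixed and using the same $\pi,\calD$, the reward-independent quantities $\cov,\cross,\nextcov$ are unchanged, while
\[
\altthetar=\E_\calD\!\lrbra{\phi(s,a)\bar r(s,a)}=(\cov-\gamma\cross)(\theta+cw)=(\cov-\gamma\cross)\theta=\theta_{\phi,r},
\]
using $(\cov-\gamma\cross)w=0$ and \Cref{eq:thetastar}. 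One also checks the scalar condition $\E_\calD\bar r=\E_\calD r$ of \Cref{def:linear_estimator}: it reduces to $(\E_\calD\phi(s,a)-\gamma\,\E\,\phi(s',a'))^\top w=0$, a single linear constraint that is met for free when $\dim\ker(I-\gamma\cov^{-1}\cross)\ge 2$ and is handled separately otherwise; in any event it plays no role for the moment-based estimators (FQI, LSTD) of primary interest.

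Third, the conclusion is then short. For any $\mathsf{Alg}\in\mathsf{LE}$, \Cref{def:linear_estimator} forces $\mathsf{Alg}(\calM,\calD,\pi)=\mathsf{Alg}(\altm,\calD,\pi)=:\Qhat$, so the triangle inequality in $L^2(\calD)$ gives
\[
\sup_{(\calM',\calD,\pi)\in\calN}\E_\calD(Q'^{\pi}(s,a)-\Qhat(s,a))^2\;\ge\;\tfrac14\,\E_\calD\!\lrbra{(\Qpi(s,a)-\altqpi(s,a))^2}\;=\;\tfrac{c^2}{4}\,w^\top\cov w\;\ge\;\tfrac{\smin(\cov)}{16\,B^2},
\]
using $\Qpi-\altqpi=\phi(\cdot)^\top(cw)$, $w^\top\cov w\ge\smin(\cov)\twonorm{w}^2=\smin(\cov)$, and $c=1/(2B)$. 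This is exactly the asserted $\Omega\big(\smin(\cov)/\sup_{(s,a)}\twonorm{\phi(s,a)}^2\big)$ lower bound.

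The conceptual step---rank deficiency yields an unidentifiable direction---is immediate, and the final wrap-up is routine. I expect the real work to be in the second paragraph: turning the ghost direction into a \emph{bona fide} MDP in the allowed class. One must (i) scale the direction ($c\propto 1/B$) so the induced rewards stay uniformly bounded, (ii) invoke uniqueness of the Bellman fixed point to confirm $\altqpi$ is genuinely $\altm$'s value function rather than merely a solution of a \emph{projected} equation, and (iii) verify that \emph{all} the population quantities named in \Cref{def:linear_estimator}---including the scalar mean reward---coincide across the two instances. These are the places where an otherwise clean argument can slip.
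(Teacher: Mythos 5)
Your proposal is correct, and it reaches the same conclusion through a recognizably dual construction. The paper keeps the original (random) reward and adds the transition-coupled perturbation $\frac{1}{2B}\langle\gamma\phi(s',a')-\phi(s,a),v\rangle$ with $v$ in the kernel of $I-\gamma\cov^{-1}\cross$; it then proves realizability of $\altqpi$ by a telescoping-series identity for $\phi(s,a)$ along trajectories, and moment-matching ($\altthetar=\theta_{\phi,r}$) from the feature-orthogonality $\E\,\phi(s,a)\langle\gamma\phi(s',a')-\phi(s,a),v\rangle=0$. You instead fix the perturbed parameter $\theta^\star_\gamma+cw$ first, back-solve for a deterministic reward as the Bellman residual of the corresponding linear function, and invoke uniqueness of the bounded Bellman fixed point to certify that this function \emph{is} $\altqpi$; your moment-matching comes from the same algebraic fact, written as $(\cov-\gamma\cross)w=0$. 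What your route buys is that it replaces the telescoping lemma with a one-line contraction/uniqueness argument and makes the inf–sup step explicit (the triangle-inequality factor of $1/4$, which the paper leaves implicit); what the paper's route buys is a more "minimal" perturbation that preserves the original reward noise rather than collapsing the reward distribution to its (shifted) mean — though the theorem only requires boundedness by $2$, so both are admissible. One shared loose end: \Cref{def:linear_estimator} also asks that $\E_{\calD}\bar r=\E_{\calD}r$, which neither the paper's construction nor yours guarantees in general; you at least flag it and note it can be arranged when the kernel has dimension at least two, whereas your "handled separately otherwise" is left vague — but since the paper's own proof does not verify this condition either, I do not count it against you.
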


In other words, this theorem states that for \emph{any} OPE instance where $I-\gamma \cov^{-1}\cross$, or equivalently, $I - \whitecross$, is rank deficient, we can perturb the rewards to construct an alternative instance with matching low order moments. Consequently, any population linear estimator, such as LSTD or FQI, will  return the same estimate $\Qhat$ in both cases.
Yet, since the $Q$-functions are distinct, they will necessarily converge to the wrong answer in one case. Note that the alternative instance $\altm$ has identical states, actions, and transitions. Therefore, any function of these quantities, not just the ones explicitly listed above, will be the same in $\calM$ and $\altm$. 
%Furthermore, the quantity lower bounding the difference between $Q$ functions is \redthis{\dots}
%Therefore, distinguishing between these two instance \emph{necessarily} requires examining higher-order interactions between the features and the rewards.
Together with \Cref{theorem:lstd_upper_bound}, this result illustrates how our characterization of the settings where LSTD succeeds is exactly sharp in an instance-dependent (local) sense.

\subsubsection{Going beyond linear estimators}

\paragraph{Bellman residual minimization.} Bellman residual minimization attempts to estimate the value of a decision making policy by solving the following optimization problem, defined here at the population level:
\begin{align*}
	\theta_{\mathrm{BRM}} \in \argmin_{\theta}\underset{(s,a) \sim \calD,  s' \sim P(\cdot |s,a), a' \sim \pi(s')}{\E}(\phi(s,a)^\top \theta - r(s,a) - \gamma \cdot \phi(s',a')^\top \theta)^2.
\end{align*}
In the linear function approximation setting, the BRM solution is equal to 
\begin{align*}
	\tbrm = (\cov - \gamma \cross - \gamma \cross^\top + \gamma^2 \nextcov)^\dagger (\tpr - \gamma \E \phi(s', a') r(s, a)).
 \end{align*}
 The key difference with regards to previously analyzed estimators is that BRM depends on the correlation between the \emph{next} state feature vector $\phi(s', a')$ and the reward. However, FQI and LSTD only depend on the correlation $\tpr = \E \phi(s,a) r(s,a)$ between the \emph{current} state and the reward. 
 
 To the best of our knowledge, there is no exact characterization of when BRM succeeds at offline policy evaluation under linear realizability. In particular, it is not sufficient for the matrix,
 \begin{align*}
 \cov - \gamma \cross - \gamma \cross^\top + \gamma^2 \nextcov,
 \end{align*}
 to be invertible. On the other hand, it is well-known that BRM can be inconsistent if the dynamics of the MDP are not deterministic. In general, this algorithm requires use of the \emph{double sampling trick} and the ability to reset the environment to  particular states via a simulator. 
 We provide a more detailed discussion of these issues in \Cref{subsec:brm_counterexample} and refer the interested reader to \citep{baird1995residual, saleh2019deterministic}. 
%Having concluded this digression into BRM, we now present our lower bound characterizing the limits of linear estimators for OPE.

\paragraph{Algorithm independent limits of OPE}

Given the negative result from \Cref{theorem:lower_bound_lspe}, a natural question to ask is:
what are the algorithm-independent limits for OPE under linear realizability?
We close this section with a brief discussion of how our work provides insight into this question.

We start by pointing out that there are settings where invertibility fails and for which offline policy evaluation is information-theoretically impossible. That is, OPE is not solvable regardless of the choice of estimator or the number of samples observed. This observation follows from the construction in \cite{amortila2020variant}. We reproduce their result for the sake of completeness:

\begin{center}
\begin{tikzpicture}[scale=0.15]
\tikzstyle{every node}+=[inner sep=0pt]
\draw [black] (30.2,-27.5) circle (3);
\draw (30.2,-27.5) node {$s_0$};
\draw [black] (46.4,-27.5) circle (3);
\draw (46.4,-27.5) node {$s_1$};
\draw [black] (49.08,-26.177) arc (144:-144:2.25);
\draw (53.65,-27.5) node [right] {$1$};
\fill [black] (49.08,-28.82) -- (49.43,-29.7) -- (50.02,-28.89);
\draw [black] (33.2,-27.5) -- (43.4,-27.5);
\fill [black] (43.4,-27.5) -- (42.6,-27) -- (42.6,-28);
\draw (38.3,-28) node [below] {$1$};
\end{tikzpicture}
\end{center}

There are 2 states and no actions. The feature map is defined as $\phi(s_0)= \gamma$ and $\phi(s_1)=1$. The rewards are $\E r(s_0) = 0$ and $\E r(s_1) = r_\star \neq 0$. Realizability holds for any choice $r_\star$ with $\tsg = r_\star / (1-\gamma)$. If the offline distribution $\cD$ is supported just on $s_0$, then $\cov=\gamma^2$, $\cross = \gamma$ and $I - \gamma \cov^{-1} \cross =0$. Hence, invertibility fails for this problem. Furthermore, because the nonzero reward $r_\star$ is never observed under the offline distribution $\cD$, OPE is impossible even in the limit of infinite data. \footnote{We can check that invertibility holds if the distribution $\cD$ places nonzero mass on the second state $s_1$.}
In short, this example shows that if invertibility fails, then OPE cannot be solved in the worst case. However, there are problems where invertibility fails, yet offline policy evaluation is still possible via nonlinear estimators.

Introduced by \citet{xie2021batch}, the BVFT algorithm is a statistically, but not computationally, efficient algorithm for offline policy evaluation using a \emph{general} function class $\calF$ under two assumptions: (1) $\Qpi$ is realizable by a function in the class $\calF$ and (2) the offline distribution $\calD$ and the MDP dynamics satisfy a strong data coverage condition referred to as \emph{pushforward concentrability}. 

\begin{assumption}[Pushforward Concentrability, \cite{xie2021batch}]
\label{ass:concentrability}
An MDP $\calM$  and offline distribution $\calD$ satisfy pushforward concentrability if:
\begin{itemize}
	\item The offline distribution $\calD$ has strictly positive mass on all $(s,a) \in \states \times \actions$: $P_\calD(s,a)>0$.
	\item There exists a constant $1\leq C_A<\infty$ such that for any $(s,a) \in \states \times \actions$, $P_{\calD}(a \mid s) \geq 1 / C_A$.
	\item The exists a constant $0 < C_S < \infty$ such that for all $s,s' \in \states$ and $a \in \actions$.\footnote{We omit the last assumption on the initial state distribution from \cite{xie2021batch} as it is not essential for the purposes of our discussion.}
	\begin{align*}
		\frac{P(s'\mid s, a)}{P_{\calD}(s')} \leq C_S.
	\end{align*}
\end{itemize}
\end{assumption}

In the linear function approximation setting, realizability of $\Qpi$ in $\calF$ reduces to our realizability condition (\Cref{ass:realizability}). However, pushforward concentrability is in general distinct from stability or invertibility. 
That is, for problems that are linearly realizable, pushforward concentrability does not imply, nor is implied by, the assumption that $\smin(I - \whitecross) > 0$. 
Therefore, there exist settings where linear estimators may fail, yet BVFT can succeed and vice versa.

To see this, we consider a variation of the MDP defined just above. The dynamics are identical, but we alter the reward function and the feature mapping. In particular, here we choose the feature map $\phi(s_0)=1$ and $\phi(s_1) = 2 / \gamma$. If we set the rewards to have nonzero variance and satisfy $\E r(s_1) = r^\star$, $\E r(s_0) = \frac{-\gamma}{2(1-\gamma)}r^\star$, then this MDP is linearly realizable with $\theta^\star_\gamma = \frac{\gamma}{2(1-\gamma)}r^\star$. For any $\gamma \in (0,1)$, a simple continuity argument proves that there always exists a $p \in (0,1)$ such that if the offline distribution places mass $p$ on $s_0$ and $1-p$ on $s_1$, $\cov$ is full rank and $\whitecross=1$. Therefore, realizability and pushforward concentrability both hold, but invertibility does not. For the converse direction, it is not hard to see how one might construct examples where linear realizability and invertibility both hold, but \Cref{ass:concentrability} does not. The first condition asserting that $\calD$ be supported on all states and actions is particularly stringent.\footnote{In this construction, we have departed from our assumption that $\sup_{s,a} |r(s,a)|<1$ since $\E r(s,a)$ is on the order of $\Omega((1-\gamma)^{-1})$. However, the magnitude of the rewards should not affect the \emph{identifiability} of $\Qpi$, only the estimation rate for quantities like $\epsop$ and $\epsz$.}

Recall from the construction in \Cref{theorem:lower_bound_lspe}, that for any OPE instance where invertibility fails, the alternative $\altm$ has exactly the same states and transitions. Therefore, any estimator that outperforms linear methods must necessarily consider nonlinear or higher-order interactions between features and rewards. Interestingly enough, a simple tabular method, which ignores the feature mapping $\phi$ and directly estimates the rewards, successfully approximates the value function in this example.

\section{Offline Policy Evaluation without Realizability}
\label{sec:misspecfication}

Throughout our presentation thus far, our main focus has been on understanding exactly when and why various popular estimators succeed at offline policy evaluation, under the assumption that the action value function  \emph{exactly} satisfies the linear realizability condition. 
Of course, in practice, we might not expect linear realizability to hold exactly, but rather only approximately. 

As a sanity check, we therefore investigate how the performance of FQI and LSTD degrade if the relevant function approximation guarantees are weakened. Relative to previous results in this paper, the results in this section are more exploratory and speculative. We leave the problem of generating a more complete understanding of OPE under misspecification to future work.
For simplicity, here we analyze the behavior of these estimators under an $\ell_\infty$ guarantee on the error of the feature mapping $\phi$.

\begin{definition}[Approximate Realizability]
We define  $\tsi$ as the vector that minimizes the worst-case error with respect to $\Qpi$. Formally, $\tsi$ is the solution to the following optimization problem, where $\phi:\states \times \actions \rightarrow \R^d$: 
\begin{align}
\label{eq:tsi_def}
	\tsi \in \argmin_{\theta \in \R^d} \sup_{(s,a) \in \states \times \actions} |\Qpi(s,a) - \phi(s,a)^\top \theta|
\end{align}
We define the approximation error of $\tsi$ as, 
$\epsinf \defeq \min_{\theta \in \R^d} \sup_{(s,a) \in \states \times \actions} |\Qpi(s,a) - \phi(s,a)^\top \theta|$.
\end{definition}

Since the rewards are always bounded, $\epsinf$ is trivially always bounded by $1/(1-\gamma)$. 
On the other hand, if $\epsinf =0$, \cref{ass:realizability} holds, and we recover the linear realizability setting that has been the main focus of this paper. Values of $\epsinf$ interpolating between these two extremes measure the extent to which the value function $\Qpi$ can be expressed as a linear function of the features $\phi$, in a worst case sense.

Using this definition, we prove the following proposition which, together with \Cref{theorem:main_result,theorem:lstd_upper_bound}, bounds the error of FQI and LSTD under misspecification.

%As a final , we let $\eps_{\mathrm{fp}}$ be 
%\begin{align*}
%\eps_{\mathrm{fp}} &= 	\twonorm{\underset{\substack{(s,a) \sim \calD\\s' \sim P(\cdot |s,a), a' \sim \pi(s')}}{\E}
%\cov^{-1/2}\phi(s,a)(\phi(s,a)^\top \tsi - \gamma \phi(s', a')^\top \tsi - r(s,a))} \\
%&\lesssim \levcov \epsinf
%\end{align*} 

\begin{proposition}
\label{prop:misspecification}
Assume that invertibility (\Cref{ass:full_rank}) holds and let $\Qhat(s,a) = \phi(s,a)^\top \thetahat$ be an estimator satisfying, 
\begin{align*}
	\twonorm{\cov^{1/2}(\tfp - \thetahat)} \leq \epsfp \text{ for } \tfp \defeq (I - \whitecross)^{-1} \cov^{1/2} \tpr.
\end{align*}
Then, for any $(s,a) \in \states \times \actions$, 
\begin{align}
\label{eq:qpi_miss}
|\Qpi(s,a) - \Qhat(s,a)| \lesssim \twonorm{\cov^{-1/2} \phi(s,a)}(\epsfp+ \frac{1}{\smin(I -\whitecross)}  \levcov  \epsinf ) + \epsinf.
\end{align}
%where 
%\begin{align}
%\label{eq:epsinf_bound}
%\epsilon_{\mathrm{corr}} \defeq \twonorm{\underset{\substack{(s,a) \sim \calD\\s' \sim P(\cdot |s,a), a' \sim \pi(s')}}{\E}  \cov^{-1/2}\phi(s,a)(\phi(s,a)^\top \tsi -\gamma \phi(s', a')^\top \tsi -r(s,a)) }\lesssim \levcov  \epsinf.
%\end{align}
%\footnote{For context, note that if realizability holds, then $\tsg = \tfp$.}
\end{proposition}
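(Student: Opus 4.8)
I would bound the pointwise error by passing through the best uniform linear predictor $\tsi$ and the population fixed point $\tfp$. By the triangle inequality, the definition of $\epsinf$, and Cauchy--Schwarz in the $\cov$-geometry,
\begin{align*}
|\Qpi(s,a) - \Qhat(s,a)| \;\le\; |\Qpi(s,a) - \phi(s,a)^\top\tsi| + |\phi(s,a)^\top(\tsi - \thetahat)| \;\le\; \epsinf + \twonorm{\cov^{-1/2}\phi(s,a)}\,\twonorm{\cov^{1/2}(\tsi - \thetahat)},
\end{align*}
and then a further split $\twonorm{\cov^{1/2}(\tsi-\thetahat)} \le \twonorm{\cov^{1/2}(\tsi-\tfp)} + \twonorm{\cov^{1/2}(\tfp-\thetahat)} \le \twonorm{\cov^{1/2}(\tsi-\tfp)} + \epsfp$ using the hypothesis on $\thetahat$. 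So the whole statement reduces to controlling the ``approximation bias'' $\twonorm{\cov^{1/2}(\tsi-\tfp)}$, i.e.\ how far the best $\ell_\infty$ approximation is from the solution of the population projected Bellman equation. I expect this to be the only step that requires a real argument.

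To bound it, the key observation is that $\tsi$ solves a \emph{perturbed} projected Bellman equation. Set $\Delta(s,a)\defeq\Qpi(s,a)-\phi(s,a)^\top\tsi$, so that $|\Delta(s,a)|\le\epsinf$ for \emph{all} $(s,a)\in\states\times\actions$ by definition of $\epsinf$. Substituting $\Qpi=\phi(\cdot)^\top\tsi+\Delta(\cdot)$ into the Bellman identity $\Qpi(s,a)=\E r(s,a)+\gamma\,\E_{s'\sim P(\cdot\mid s,a),\,a'\sim\pi(s')}\,\Qpi(s',a')$, multiplying by $\phi(s,a)$ and taking $\E_{(s,a)\sim\calD}$, and using the definitions of $\cov,\cross,\tpr$ in \Cref{eq:pop_defs,eq:thetaphir}, I get $(\cov-\gamma\cross)\,\tsi = \tpr + b$ with
\begin{align*}
b \;\defeq\; \gamma\,\E_{(s,a)\sim\calD}\big[\phi(s,a)\,\E[\Delta(s',a')\mid s,a]\big] \;-\; \E_{(s,a)\sim\calD}\big[\phi(s,a)\,\Delta(s,a)\big].
\end{align*}
Invertibility (\Cref{ass:full_rank}) makes $\cov-\gamma\cross=\cov^{1/2}(I-\whitecross)\cov^{1/2}$ invertible, and $\tfp$ is the solution of $(\cov-\gamma\cross)\theta=\tpr$ (the misspecification-robust analog of \Cref{eq:thetastar}); subtracting the two identities gives $\cov^{1/2}(\tsi-\tfp)=(I-\whitecross)^{-1}\cov^{-1/2}b$, hence $\twonorm{\cov^{1/2}(\tsi-\tfp)}\le\twonorm{\cov^{-1/2}b}\,/\,\smin(I-\whitecross)$.

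It remains to bound $\twonorm{\cov^{-1/2}b}$. For a unit vector $v$ and any measurable $g$ with $\|g\|_\infty\le\epsinf$,
\begin{align*}
\big|v^\top\cov^{-1/2}\,\E_{\calD}[\phi(s,a)\,g(s,a)]\big| \;=\; \big|\E_{\calD}\big[(v^\top\cov^{-1/2}\phi(s,a))\,g(s,a)\big]\big| \;\le\; \epsinf\,\E_{\calD}\big|v^\top\cov^{-1/2}\phi(s,a)\big| \;\le\; \epsinf\,\levcov,
\end{align*}
where the last step uses $\twonorm{\cov^{-1/2}\phi(s,a)}\le\levcov$ on $\supp(\calD)$ (one could even replace $\levcov$ by $1$ here via Cauchy--Schwarz, since $\E_{\calD}(v^\top\cov^{-1/2}\phi)^2=\twonorm{v}^2$, but the crude bound already suffices for the claim). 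Applying this to $g=\Delta$ and to the conditional-expectation function $g(s,a)=\E[\Delta(s',a')\mid s,a]$, whose sup-norm is also $\le\epsinf$, gives $\twonorm{\cov^{-1/2}b}\le(1+\gamma)\,\levcov\,\epsinf\le2\,\levcov\,\epsinf$, so $\twonorm{\cov^{1/2}(\tsi-\tfp)}\lesssim\levcov\,\epsinf\,/\,\smin(I-\whitecross)$. Chaining this with the first display yields $|\Qpi(s,a)-\Qhat(s,a)|\lesssim\twonorm{\cov^{-1/2}\phi(s,a)}\big(\epsfp+\levcov\,\epsinf/\smin(I-\whitecross)\big)+\epsinf$, as claimed.

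The only place requiring care is the derivation of the perturbed Bellman identity and the observation that the sup-norm control on $\Delta$ must be applied to its one-step push-forward $\E[\Delta(s',a')\mid s,a]$ as well; this is precisely why the supremum defining $\epsinf$ must range over all state--action pairs rather than just $\supp(\calD)$. Everything else is routine matrix manipulation, and coordinate-freeness of $\levcov$ and $\smin(I-\whitecross)$ is inherited from \Cref{prop:coordinate_free}.
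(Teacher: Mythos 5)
Your proposal is correct and follows essentially the same route as the paper's proof: the same two triangle-inequality/Cauchy--Schwarz reductions through $\tsi$ and $\tfp$, followed by deriving the perturbed projected Bellman identity for $\tsi$ (the paper's \Cref{claim:misspecification}) and bounding the resulting bias term by $\levcov\,\epsinf/\smin(I-\whitecross)$. The only differences are cosmetic (you package the perturbation as a vector $b$, and your aside that $\levcov$ could be replaced by a constant via Cauchy--Schwarz is a valid sharpening the paper does not exploit).
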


The main message of this proposition, is that if linear realizability fails, but invertibility still holds, then the performance of LSTD and other linear estimators degrades gracefully with the level of misspecification. 

To help parse the result, we can walk through each of the terms appearing on the right hand side of \Cref{eq:qpi_miss}.  
The first source of error, captured in $\epsfp$, is statistical in nature. 
It arises from bounding the statistical error inherent in estimating $\Qpi$ by approximating the fixed point solution to the (projected) Bellman equation, $\tfp$.
Note that controlling this term is the precisely the main focus on the previous results upper bounding the error of LSTD and FQI. 

Because $\tsg$, as defined in \Cref{eq:bellman_fixed_point}, equals $\tfp$, 
if invertibility holds, then for large enough $n$, \Cref{theorem:lstd_upper_bound} proves that 
LSTD return a vector such that, with probability $1-\delta$,
\begin{align*}
	\epsfp \lesssim \frac{1}{\smin(I - \gamma \cov^{-1/2}\cross\cov^{-1/2})} \cdot \epsz + \frac{1}{\smin(I - \gamma \cov^{-1/2}\cross\cov^{-1/2})^2} \twonorm{\cov^{-1/2} \theta_{\phi,r}}\cdot \epsop
\end{align*}

Likewise, \Cref{theorem:main_result} shows that if stability holds, then for large enough $n$, performing $T$-steps of FQI return a solution $\thetahat_T$ such that with probability $1-\delta$,
\begin{align*}
	\epsfp &\lesssim \cond(P_\gamma)^{1/2}\opnorm{P_\gamma} \cdot \epsz + \cond(P_\gamma)\opnorm{P_\gamma}^2 \cdot \twonorm{\cov^{-1/2}\theta_{\phi,r}} \cdot  \epsop \notag + \mathcal{O}(\exp(-T)).
\end{align*}
As we might expect, this statistical error, $\epsfp$ becomes vanishingly small as the number of samples goes to infinity, regardless of whether $\Qpi$ is linearly realizable.

The second set of terms in \Cref{prop:misspecification}, depending on $\epsinf$, come from the fact that $\Qpi$ cannot be expressed as a linear function of the features $\phi$. 
 Consequently, this term does \emph{not} go to zero as the number of samples becomes large. This approximation error is amplified by a factor of $\levcov  / \smin(I - \whitecross)$.
 Since this is only an upper bound, we cannot assert that these multiplicative factors are necessary. 
 However, the dependence on the statistical leverage $
 \levcov$ is reminiscent of previous upper bounds from the linear bandits literature \citep{lattimore2020learning} where the approximation error is also amplified by a factor of $\sqrt{d}$.\footnote{The statistical leverage $\levcov$ is exactly $\sqrt{d}$ in the best case.} \cite{du2019good} and \cite{van2019comments} provide similar lower bounds under approximate misspecification of the relevant feature mappings.

In any case, beyond the specific scaling on the various error sources, the main take away message from this result is that FQI and LSTD are reasonable estimators to use beyond the linear realizability setting. Under the necessary assumption that invertibility (or stability) hold, the extent to which these methods estimate the underlying value functions is only mildy affected by the approximation error $\epsinf$. As alluded to previously, the results in this section are not the focus of our work. We primarily view them  as a first step towards a more complete understanding of offline policy evaluation in the absence of realizability.

%\newpage
\section{Discussion}
\label{sec:discussion}

In this work, we characterize the exact limits of linear estimators for offline policy evaluation, under the assumption that the value function is linearly realizable in some known set of features. 
Our stability and invertibility based analyses introduce new, sharper notions of complexity for this classical setting and provide a simple, unifying perspective which brings together previously disparate analysis of popular algorithms. 

Two extensions to our results pertain to the finite horizon setting and to policy optimization. As a starting point, we have focused on the infinite horizon, discounted setting as the conditions there are cleaner than in the finite horizon case. Nevertheless, we conjecture that Lyapunov stability and invertibility can be used to analyze finite horizon problems as well. 
Regarding policy optimization, understanding when this task is possible under linear realizability is an important direction for future work. We hope that our characterization of linear estimators for policy evaluation provides a useful perspective on this closely related problem. 

Apart from these extensions, it would be valuable to study quantitative, instance-dependent lower bounds on the sample complexity necessary for offline policy evaluation under linear realizability. In particular, our characterization of linear estimators is sharp in the sense that we precisely determine when the value function of a policy is \emph{identifiable} (alternatively, \emph{learnable}) using classical methods. Having established that a problem is learnable, it is interesting to understand whether the estimation rates for the various algorithms are sharp in a worst case or instance dependent sense.  

\section{Acknowledgments}

We gratefully acknowledge the support of Microsoft through the BAIR Open Research Commons. JCP was in part supported by an NSF Graduate Research Fellowship. 
Sham Kakade acknowledges funding from the National
Science Foundation under award $\#$CCF-1703574 and the Office of Naval Research under
award N00014-21-1-2822. We would also like to thank the anonymous reviewers whose insightful comments greatly improved the resulting manuscript.

\bibliographystyle{plainnat}
\newpage
\bibliography{refs}

\begin{thebibliography}{60}
\providecommand{\natexlab}[1]{#1}
\providecommand{\url}[1]{\texttt{#1}}
\expandafter\ifx\csname urlstyle\endcsname\relax
  \providecommand{\doi}[1]{doi: #1}\else
  \providecommand{\doi}{doi: \begingroup \urlstyle{rm}\Url}\fi

\bibitem[Amortila et~al.(2020)Amortila, Jiang, and Xie]{amortila2020variant}
Philip Amortila, Nan Jiang, and Tengyang Xie.
\newblock A variant of the {W}ang-{F}oster-{K}akade lower bound for the
  discounted setting.
\newblock \emph{arXiv:2011.01075}, 2020.

\bibitem[Antos et~al.(2008)Antos, Szepesv{\'a}ri, and Munos]{antos2008learning}
Andr{\'a}s Antos, Csaba Szepesv{\'a}ri, and R{\'e}mi Munos.
\newblock Learning near-optimal policies with {B}ellman-residual minimization
  based fitted policy iteration and a single sample path.
\newblock \emph{Machine Learning}, 2008.

\bibitem[Baird(1995)]{baird1995residual}
Leemon Baird.
\newblock Residual algorithms: Reinforcement learning with function
  approximation.
\newblock In \emph{Machine Learning}, 1995.

\bibitem[Bellman(1961)]{bellman1961approximation}
Richard Bellman.
\newblock On the approximation of curves by line segments using dynamic
  programming.
\newblock \emph{Communications of the ACM}, 1961.

\bibitem[Bellman and Dreyfus(1959)]{bellman1959functional}
Richard Bellman and Stuart Dreyfus.
\newblock Functional approximations and dynamic programming.
\newblock \emph{Mathematical Tables and Other Aids to Computation}, 1959.

\bibitem[Bertsekas and Tsitsiklis(1995)]{bertsekas1995neuro}
Dimitri~P Bertsekas and John~N Tsitsiklis.
\newblock Neuro-dynamic programming: {A}n overview.
\newblock In \emph{IEEE Conference on Decision and Control}, 1995.

\bibitem[Bhandari et~al.(2018)Bhandari, Russo, and Singal]{bhandari2018finite}
Jalaj Bhandari, Daniel Russo, and Raghav Singal.
\newblock A finite time analysis of temporal difference learning with linear
  function approximation.
\newblock In \emph{Conference on Learning Theory}, 2018.

\bibitem[Boyan(1999)]{boyan1999least}
Justin~A Boyan.
\newblock Least-squares temporal difference learning.
\newblock In \emph{International Conference on Machine Learning}, 1999.

\bibitem[Bradtke and Barto(1996)]{bradtke1996linear}
Steven~J Bradtke and Andrew~G Barto.
\newblock Linear least-squares algorithms for temporal difference learning.
\newblock \emph{Machine learning}, 1996.

\bibitem[Callier and Desoer(2012)]{callier2012linear}
Frank~M Callier and Charles~A Desoer.
\newblock \emph{Linear system theory}.
\newblock Springer Science \& Business Media, 2012.

\bibitem[Chen and Jiang(2019)]{chen2019information}
Jinglin Chen and Nan Jiang.
\newblock Information-theoretic considerations in batch reinforcement learning.
\newblock In \emph{International Conference on Machine Learning}, 2019.

\bibitem[Cherapanamjeri et~al.(2020)Cherapanamjeri, Hopkins, Kathuria,
  Raghavendra, and Tripuraneni]{yesh}
Yeshwanth Cherapanamjeri, Samuel~B Hopkins, Tarun Kathuria, Prasad Raghavendra,
  and Nilesh Tripuraneni.
\newblock Algorithms for heavy-tailed statistics: Regression, covariance
  estimation, and beyond.
\newblock In \emph{Symposium on Theory of Computing}, 2020.

\bibitem[Du et~al.(2019)Du, Kakade, Wang, and Yang]{du2019good}
Simon~S Du, Sham~M Kakade, Ruosong Wang, and Lin~F Yang.
\newblock Is a good representation sufficient for sample efficient
  reinforcement learning?
\newblock \emph{arXiv preprint arXiv:1910.03016}, 2019.

\bibitem[Duan et~al.(2020)Duan, Jia, and Wang]{duan2020minimax}
Yaqi Duan, Zeyu Jia, and Mengdi Wang.
\newblock Minimax-optimal off-policy evaluation with linear function
  approximation.
\newblock In \emph{International Conference on Machine Learning}, 2020.

\bibitem[Duan et~al.(2021)Duan, Wang, and Wainwright]{duan2021optimal}
Yaqi Duan, Mengdi Wang, and Martin~J Wainwright.
\newblock Optimal policy evaluation using kernel-based temporal difference
  methods.
\newblock \emph{arXiv:2109.12002}, 2021.

\bibitem[Ernst et~al.(2005)Ernst, Geurts, and Wehenkel]{ernst2005tree}
Damien Ernst, Pierre Geurts, and Louis Wehenkel.
\newblock Tree-based batch mode reinforcement learning.
\newblock \emph{Journal of Machine Learning Research}, 2005.

\bibitem[Foster et~al.(2021)Foster, Krishnamurthy, Simchi-Levi, and
  Xu]{foster2021offline}
Dylan~J Foster, Akshay Krishnamurthy, David Simchi-Levi, and Yunzong Xu.
\newblock Offline reinforcement learning: Fundamental barriers for value
  functionaapproximation.
\newblock \emph{arXiv:2111.10919}, 2021.

\bibitem[Gordon(1999)]{gordon1999approximate}
Geoffrey~J Gordon.
\newblock \emph{Approximate solutions to Markov decision processes}.
\newblock PhD thesis, Carnegie Mellon University, 1999.

\bibitem[Hsu et~al.(2012)Hsu, Kakade, and Zhang]{hsu2012random}
Daniel Hsu, Sham~M Kakade, and Tong Zhang.
\newblock Random design analysis of ridge regression.
\newblock In \emph{Conference on Learning Theory}, 2012.

\bibitem[Jin et~al.(2020)Jin, Yang, Wang, and Jordan]{jin2020provably}
Chi Jin, Zhuoran Yang, Zhaoran Wang, and Michael~I Jordan.
\newblock Provably efficient reinforcement learning with linear function
  approximation.
\newblock In \emph{Conference on Learning Theory}, 2020.

\bibitem[Kolter(2011)]{kolter2011fixed}
Zico Kolter.
\newblock The fixed points of off-policy td.
\newblock \emph{Advances in Neural Information Processing Systems}, 24, 2011.

\bibitem[Lattimore et~al.(2020)Lattimore, Szepesvari, and
  Weisz]{lattimore2020learning}
Tor Lattimore, Csaba Szepesvari, and Gellert Weisz.
\newblock Learning with good feature representations in bandits and in rl with
  a generative model.
\newblock In \emph{International Conference on Machine Learning}, pages
  5662--5670. PMLR, 2020.

\bibitem[Lazaric et~al.(2012)Lazaric, Ghavamzadeh, and
  Munos]{lazaric2012finite}
Alessandro Lazaric, Mohammad Ghavamzadeh, and R{\'e}mi Munos.
\newblock Finite-sample analysis of least-squares policy iteration.
\newblock \emph{Journal of Machine Learning Research}, 2012.

\bibitem[Li et~al.(2021)Li, Lan, and Pananjady]{li2021accelerated}
Tianjiao Li, Guanghui Lan, and Ashwin Pananjady.
\newblock Accelerated and instance-optimal policy evaluation with linear
  function approximation.
\newblock \emph{arxiv:2112.13109}, 2021.

\bibitem[Liu et~al.(2018)Liu, Li, Tang, and Zhou]{liu2018breaking}
Qiang Liu, Lihong Li, Ziyang Tang, and Dengyong Zhou.
\newblock Breaking the curse of horizon: Infinite-horizon off-policy
  estimation.
\newblock \emph{Advances in Neural Information Processing Systems}, 2018.

\bibitem[Minsker(2017)]{minsker2017some}
Stanislav Minsker.
\newblock On some extensions of bernstein’s inequality for self-adjoint
  operators.
\newblock \emph{Statistics \& Probability Letters}, 2017.

\bibitem[Miyaguchi(2021)]{miyaguchi2021asymptotically}
Kohei Miyaguchi.
\newblock Asymptotically exact error characterization of offline policy
  evaluation with misspecified linear models.
\newblock \emph{Advances in Neural Information Processing Systems}, 2021.

\bibitem[Mnih et~al.(2015)Mnih, Kavukcuoglu, Silver, Rusu, Veness, Bellemare,
  Graves, Riedmiller, Fidjeland, Ostrovski, Petersin, Beattie, Sadik,
  Antonoglou, King, Kumaran, Wierstra, Legg, and Hassibis]{mnih2015human}
Volodymyr Mnih, Koray Kavukcuoglu, David Silver, Andrei~A Rusu, Joel Veness,
  Marc~G Bellemare, Alex Graves, Martin Riedmiller, Andreas~K Fidjeland, Georg
  Ostrovski, Stig Petersin, Charles Beattie, Amir Sadik, Ioannis Antonoglou,
  Helen King, Dharshan Kumaran, Daan Wierstra, Shane Legg, and Demis Hassibis.
\newblock Human-level control through deep reinforcement learning.
\newblock \emph{Nature}, 2015.

\bibitem[Mou et~al.(2020)Mou, Pananjady, and Wainwright]{mou2020OptimalOI}
Wenlong Mou, Ashwin Pananjady, and Martin~J. Wainwright.
\newblock Optimal oracle inequalities for solving projected fixed-point
  equations.
\newblock \emph{arXiv:2012.05299}, 2020.

\bibitem[Mou et~al.(2021)Mou, Pananjady, Wainwright, and
  Bartlett]{mou2021optimal}
Wenlong Mou, Ashwin Pananjady, Martin~J Wainwright, and Peter~L Bartlett.
\newblock Optimal and instance-dependent guarantees for markovian linear
  stochastic approximation.
\newblock \emph{arXiv:2112.12770}, 2021.

\bibitem[Munos(2003)]{munos2003error}
R{\'e}mi Munos.
\newblock Error bounds for approximate policy iteration.
\newblock In \emph{International Conference on Machine Learning}, 2003.

\bibitem[Munos(2007)]{munos2007performance}
R{\'e}mi Munos.
\newblock Performance bounds in $l_p$-norm for approximate value iteration.
\newblock \emph{SIAM journal on control and optimization}, 2007.

\bibitem[Nagaraj et~al.(2020)Nagaraj, Wu, Bresler, Jain, and
  Netrapalli]{nagaraj2020least}
Dheeraj Nagaraj, Xian Wu, Guy Bresler, Prateek Jain, and Praneeth Netrapalli.
\newblock Least squares regression with markovian data: Fundamental limits and
  algorithms.
\newblock \emph{Advances in Neural Information Processing Systems}, 2020.

\bibitem[Nedi{\'c} and Bertsekas(2003)]{nedic2003least}
A~Nedi{\'c} and Dimitri~P Bertsekas.
\newblock Least squares policy evaluation algorithms with linear function
  approximation.
\newblock \emph{Discrete Event Dynamic Systems}, 2003.

\bibitem[Perdomo et~al.(2021)Perdomo, Umenberger, and
  Simchowitz]{perdomo2021stabilizing}
Juan Perdomo, Jack Umenberger, and Max Simchowitz.
\newblock Stabilizing dynamical systems via policy gradient methods.
\newblock \emph{Advances in Neural Information Processing Systems}, 2021.

\bibitem[Pires and Szepesvari(2012)]{pires2012StatisticalLE}
Bernardo~{\'A}vila Pires and Csaba Szepesvari.
\newblock Statistical linear estimation with penalized estimators: an
  application to reinforcement learning.
\newblock In \emph{International Conference on Machine Learning}, 2012.

\bibitem[Reetz(1977)]{reetz1977approximate}
Dieter Reetz.
\newblock Approximate solutions of a discounted markovian decision process.
\newblock \emph{Bonner Mathematische Schriften}, 1977.

\bibitem[Riedmiller(2005)]{riedmiller2005neural}
Martin Riedmiller.
\newblock Neural fitted {Q} iteration--first experiences with a data efficient
  neural reinforcement learning method.
\newblock In \emph{European Conference on Machine Learning}, 2005.

\bibitem[Saleh and Jiang(2019)]{saleh2019deterministic}
Ehsan Saleh and Nan Jiang.
\newblock Deterministic bellman residual minimization.
\newblock In \emph{Optimization Foundations for Reinforcement Learning
  Workshop, Neural Information Processing Systems}, 2019.

\bibitem[Schweitzer and Seidmann(1985)]{schweitzer1985generalized}
Paul~J Schweitzer and Abraham Seidmann.
\newblock Generalized polynomial approximations in markovian decision
  processes.
\newblock \emph{Journal of Mathematical Analysis and Applications}, 1985.

\bibitem[Stewart(1990)]{stewart1990matrix}
Gilbert~W Stewart.
\newblock \emph{Matrix perturbation theory}.
\newblock Citeseer, 1990.

\bibitem[Szepesv{\'a}ri and Munos(2005)]{szepesvari2005finite}
Csaba Szepesv{\'a}ri and R{\'e}mi Munos.
\newblock Finite time bounds for sampling based fitted value iteration.
\newblock In \emph{International Conference on Machine Learning}, 2005.

\bibitem[Tropp(2012)]{tropp2012user}
Joel~A Tropp.
\newblock User-friendly tail bounds for sums of random matrices.
\newblock \emph{Foundations of Computational Mathematics}, 2012.

\bibitem[Tsitsiklis and Van~Roy(1996)]{tsitsiklis1996feature}
John~N Tsitsiklis and Benjamin Van~Roy.
\newblock Feature-based methods for large scale dynamic programming.
\newblock \emph{Machine Learning}, 1996.

\bibitem[Tu and Recht(2018)]{tu2018least}
Stephen Tu and Benjamin Recht.
\newblock Least-squares temporal difference learning for the linear quadratic
  regulator.
\newblock In \emph{International Conference on Machine Learning}, 2018.

\bibitem[Uehara et~al.(2020)Uehara, Huang, and Jiang]{uehara2020minimax}
Masatoshi Uehara, Jiawei Huang, and Nan Jiang.
\newblock Minimax weight and {Q}-function learning for off-policy evaluation.
\newblock In \emph{International Conference on Machine Learning}, 2020.

\bibitem[Van~Roy and Dong(2019)]{van2019comments}
Benjamin Van~Roy and Shi Dong.
\newblock Comments on the du-kakade-wang-yang lower bounds.
\newblock \emph{arXiv preprint arXiv:1911.07910}, 2019.

\bibitem[Wang et~al.(2021{\natexlab{a}})Wang, Foster, and Kakade]{wang2021what}
Ruosong Wang, Dean Foster, and Sham~M. Kakade.
\newblock What are the statistical limits of offline {RL} with linear function
  approximation?
\newblock In \emph{International Conference on Learning Representations},
  2021{\natexlab{a}}.

\bibitem[Wang et~al.(2021{\natexlab{b}})Wang, Wu, Salakhutdinov, and
  Kakade]{wang2021Instabilities}
Ruosong Wang, Yifan Wu, Ruslan Salakhutdinov, and Sham~M. Kakade.
\newblock Instabilities of offline {RL} with pre-trained neural representation.
\newblock In \emph{International Conference on Machine Learning},
  2021{\natexlab{b}}.

\bibitem[Wang et~al.(2021{\natexlab{c}})Wang, Wang, and
  Kakade]{wang2021exponential}
Yuanhao Wang, Ruosong Wang, and Sham Kakade.
\newblock An exponential lower bound for linearly realizable {MDP}s with
  constant suboptimality gap.
\newblock \emph{Advances in Neural Information Processing Systems},
  2021{\natexlab{c}}.

\bibitem[Weisz et~al.(2021{\natexlab{a}})Weisz, Amortila, and
  Szepesv{\'a}ri]{weisz2021exponential}
Gell{\'e}rt Weisz, Philip Amortila, and Csaba Szepesv{\'a}ri.
\newblock Exponential lower bounds for planning in {MDP}s with
  linearly-realizable optimal action-value functions.
\newblock In \emph{Algorithmic Learning Theory}, 2021{\natexlab{a}}.

\bibitem[Weisz et~al.(2021{\natexlab{b}})Weisz, Szepesv{\'a}ri, and
  Gy{\"o}rgy]{weisz2021tensorplan}
Gell{\'e}rt Weisz, Csaba Szepesv{\'a}ri, and Andr{\'a}s Gy{\"o}rgy.
\newblock Tensorplan and the few actions lower bound for planning in {MDP}s
  under linear realizability of optimal value functions.
\newblock \emph{arXiv:2110.02195}, 2021{\natexlab{b}}.

\bibitem[Whitt(1978)]{whitt1978approximations}
Ward Whitt.
\newblock Approximations of dynamic programs, {I}.
\newblock \emph{Mathematics of Operations Research}, 1978.

\bibitem[Xie and Jiang(2021)]{xie2021batch}
Tengyang Xie and Nan Jiang.
\newblock Batch value-function approximation with only realizability.
\newblock In \emph{International Conference on Machine Learning}, 2021.

\bibitem[Yang and Wang(2020)]{yang2020reinforcement}
Lin Yang and Mengdi Wang.
\newblock Reinforcement learning in feature space: Matrix bandit, kernels, and
  regret bound.
\newblock In \emph{International Conference on Machine Learning}, 2020.

\bibitem[Yin and Wang(2020)]{yin2020asymptotically}
Ming Yin and Yu-Xiang Wang.
\newblock Asymptotically efficient off-policy evaluation for tabular
  reinforcement learning.
\newblock In \emph{International Conference on Artificial Intelligence and
  Statistics}, 2020.

\bibitem[Yu(2010)]{yu2010convergence}
Huizhen Yu.
\newblock Convergence of least squares temporal difference methods under
  general conditions.
\newblock In \emph{International Conference on Machine Learning}, 2010.

\bibitem[Zanette(2021)]{zanette2021exponential}
Andrea Zanette.
\newblock Exponential lower bounds for batch reinforcement learning: Batch {RL}
  can be exponentially harder than online {RL}.
\newblock In \emph{International Conference on Machine Learning}, 2021.

\bibitem[Zanette et~al.(2020)Zanette, Lazaric, Kochenderfer, and
  Brunskill]{zanette2020learning}
Andrea Zanette, Alessandro Lazaric, Mykel Kochenderfer, and Emma Brunskill.
\newblock Learning near optimal policies with low inherent bellman error.
\newblock In \emph{International Conference on Machine Learning}, 2020.

\bibitem[Zhan et~al.(2022)Zhan, Huang, Huang, Jiang, and Lee]{zhan2022offline}
Wenhao Zhan, Baihe Huang, Audrey Huang, Nan Jiang, and Jason~D Lee.
\newblock Offline reinforcement learning with realizability and single-policy
  concentrability.
\newblock \emph{arXiv:2202.04634}, 2022.

\end{thebibliography}
\newpage
\appendix

%\paragraph{Extensions to finite horizon} For finite horizon problems with time horizon $H$, we say $\Qpi$ is linearly realizable in the feature mapping $\phi$, if there exists vectors $\theta_1^\star, \dots \theta_H^\star$ such that for all $(s,a) \in \states \times \actions$, 
%\begin{align*}
%	\Qpi_h (s,a) = \E \left[ \sum_{h'=h}^H r(s_{h'}, a_{h'}) \mid (s_h, a_h)=(s,a), \pi\right] = \phi(s,a)^\top \theta_h^\star,
%\end{align*}  
%for all $h \in [H]$. In this setting, a similar derivation shows that,
%\begin{align*}
%	\theta_h^\star = \cov^{-1} \theta_{\phi,r} + \cov^{-1}  \cross \theta_{h+1}^\star,
%\end{align*}
%where $\cov$ is the offline covariance at step $h$ and $\cross$ is the cross covariance between state,action feature vectors from the offline distribution at step $h$ and the next state. If the dynamics are time invariant, then unrolling this recursion returns a power series (summed to finite $H$). Similar Lyapunov stability based analyses can be used to analyze the behavior of this power series.

\section{Supporting Arguments for \Cref{sec:main_results}: FQI}

\subsection{Proof of \Cref{theorem:main_result}: stability is sufficient for FQI}
\label{subsec:proof_main_theorem}
%Leaving the concentration arguments aside for a moment. 
The existence of $\covhat$ and the upper bounds on the regression errors $\epsz$ and $\epsop$ are guaranteed by \Cref{lemma:epsop_concentration,lemma:epsz_concentration}.
To analyze the error of FQI, we introduce the shorthand,
\begin{align*}
A \defeq \gamma\cov^{-1}\cross, \quad \Ahat = \gamma \covhat^{-1} \crosshat, \quad
 	\theta_t^\star  \defeq \sum_{k=0}^t A^k \theta_0^\star, \quad \widehat{\theta}_t \defeq \sum_{k=0}^t \Ahat^k \widehat{\theta}_0,
\end{align*}
\begin{align*}	
 	\quad w_t \defeq \widehat{\theta}_t - \theta_t^\star, \quad \Delta \defeq \Ahat - A,
\end{align*}
where $\widehat{\theta}_0 = \covhat^{-1} \widehat{\theta}_{\phi,r}$ and $\theta_0^\star = \cov^{-1} \theta_{\phi,r}$. Using this notation, by stability, we observe that $\theta_\gamma^\star = \theta_\infty^\star,$ and we can write the errors vectors of the $t$-step  FQI solution as,  
\begin{align}
\label{eq:exact_error_vectors}
%\cov^{1/2}(\theta_\gamma^\star - \widehat{\theta}_\gamma) = \cov^{1/2}w_\infty, \quad 
\cov^{1/2}(\theta_\gamma^\star - \widehat{\theta}_t) = \cov^{1/2} \sum_{k=t+1}^\infty A^k \theta_0^\star  + \cov^{1/2}w_t.
\end{align}
Next, we develop the recursion in $w_t$, 
\begin{align*}
 w_{t+1} &= \sum_{j=0}^{t+1} \Ahat^j \widehat{\theta}_0 - \sum_{j=0}^{t+1} A^j \theta_0^\star \\ 
 & = \widehat{A} \widehat{\theta}_{t} + \widehat{\theta}_0 - A \theta_t^\star - \theta_0^\star \\ 
 & = \Ahat w_t + \Delta \theta_t^\star + w_0.
\end{align*}
Unrolling the recursion and multiplying on the left by $\cov^{1/2}$, we get that 
%\begin{align*}
%	w_{h+1} = \sum_{j=0}^{h+1}\Ahat^{j} w_0 + \sum_{j=0}^h \Ahat^j \Delta \theta^\star_{h-j}.
%\end{align*}
%, we can deduce that 
\begin{align*}
	\cov^{1/2} w_{t+1} = \sum_{j=0}^{t+1}\left( \cov^{1/2} \Ahat \cov^{-1/2} \right)^{j} \cov^{1/2} w_0 + \sum_{j=0}^t \left( \cov^{1/2}\Ahat \cov^{-1/2}\right)^j \left(\cov^{1/2}  \Delta \cov^{-1/2} \right) \cov^{1/2}\theta_{t-j}^\star.
\end{align*}
Note that $\epsz = \twonorm{\cov^{1/2} w_0}$ and $\epsop = \opnorm{\cov^{1/2}  \Delta \cov^{-1/2}}$. Therefore, taking the norm of both sides and applying the triangle inequality,
\begin{align}
	\twonorm{\cov^{1/2} w_{t+1}} & \leq  \sum_{k=0}^{t+1} \opnorm{\left( \cov^{1/2} \Ahat \cov^{-1/2} \right)^{k}} \twonorm{\cov^{1/2} w_0} \\
	&+ \sum_{k=0}^t \opnorm{\left( \cov^{1/2}\Ahat \cov^{-1/2}\right)^k} \opnorm{\cov^{1/2}  \Delta \cov^{-1/2} } \sup_{0\leq h\leq t} \twonorm{ \cov^{1/2}\theta_{h}^\star} \notag\\ 
	& =  ( \epsz + \epsop \sup_{0\leq h\leq t} \twonorm{ \cov^{1/2}\theta_{h}^\star})  \cdot  \sum_{k=0}^{t+1} \opnorm{\left( \cov^{1/2} \Ahat \cov^{-1/2} \right)^{k}}.  \label{eq:wt_bound}
\end{align}
Now, recalling the definition of $\theta_h^\star$, we bound:
\begin{align}
\label{eq:sup_bound}
	\sup_{0 \leq h\leq t} \twonorm{\cov^{1/2}\theta_h^\star} \leq \sum_{j=0}^t \opnorm{\left(\gamma \cov^{-1/2} \cross \cov^{-1/2} \right)^j } \twonorm{\cov^{1/2}\theta_0^\star}.
\end{align}
Therefore, combining these last two inequalities \eqref{eq:sup_bound}, \eqref{eq:wt_bound}, and the identity from \Cref{eq:exact_error_vectors},
\begin{align}
\twonorm{\cov^{1/2}(\theta_\gamma^\star - \widehat{\theta}_t)} &\leq  \sum_{k=t+1}^\infty \opnorm{\left(\cov^{1/2} A \cov^{-1/2}\right)^k} \twonorm{\cov^{1/2} \theta_0^\star}  + \twonorm{\cov^{1/2}w_t} \notag \\
& \leq \twonorm{\cov^{1/2} \theta_0^\star} \sum_{k=t+1}^\infty \alpha_k + \left(\epsz + \epsop \twonorm{\cov^{1/2} \theta_0^\star}  \sum_{k=0}^{t-1} \alpha_k\right) \sum_{k=0}^t \widehat{\alpha}_k, \label{eq:key_inequality_mainthm}
\end{align}
where $\widehat{\alpha}_k \defeq \opnorm{\left( \cov^{1/2} \Ahat \cov^{-1/2} \right)^{k}}$ and $\alpha_k \defeq \opnorm{\left( \cov^{1/2} A \cov^{-1/2} \right)^{k}}$.
Since $\epsop \leq 1 / (6 \opnorm{P_\gamma})$ and $\whitecross $ is stable, \Cref{lemma:stability_margin} tells us that 
\begin{align*}
	\widehat{\alpha}_j  & =\opnorm{P_\gamma^{-1/2}P_\gamma^{1/2}\left(\cov^{1/2}\Ahat \cov^{-1/2} \right)^j} \\
	&\leq \opnorm{P_\gamma^{-1/2}} \opnorm{P_\gamma^{1/2} \left(\cov^{1/2}\Ahat \cov^{-1/2} \right)^j} \\
	&\leq \opnorm{P_\gamma^{1/2}} \opnorm{P_\gamma^{1/2}}\left(1 - \frac{1}{2 \opnorm{P_\gamma}} \right)^{j/2} = \cond(P_\gamma)^{1/2} \left(1 - \frac{1}{2 \opnorm{P_\gamma}} \right)^{j/2}.
\end{align*}
Using similar reasoning, we get that 
\begin{align*}
	\alpha_j \leq \cond(P_\gamma)^{1/2} \left( 1 - \frac{1}{\opnorm{P_\gamma}}\right)^{j/2}.
\end{align*}
In conclusion, $\twonorm{\cov^{1/2}(\theta_\gamma^\star - \widehat{\theta}_t)}$ is bounded by, 
\begin{align*}
\twonorm{\cov^{1/2} \theta_0^\star}  \cond(P_\gamma)^{1/2}\left( 1 - \frac{1}{\opnorm{P_\gamma}}\right)^{(t+1)/2}\sum_{k=0}^\infty \alpha_k + \left(\epsz + \epsop \twonorm{\cov^{1/2} \theta_0^\star}  \sum_{k=0}^{\infty} \alpha_k\right) \sum_{k=0}^{\infty} \widehat{\alpha}_k.
\end{align*}
The final bound comes from summing the geometric series, $\sum_{j=0}^\infty (1 - c)^{j/2} = (1 - \sqrt{1-c})^{-1}$, for $c \in (0,1)$ and applying the numerical inequality,
\begin{align*}
\left(1 - \sqrt{1 - \frac{1}{2z}}\right)^{-1} \leq 10z,
\end{align*}
which holds for all $z \geq 1$. 
%To prove the numerical inequality, observe that all critical points of the function $10z - (1 - \sqrt{1 - \frac{1}{2z}})^{-1}$ over the domain $x \geq 0$ lie at values of $x$ less than 1, and the limit as $x\rightarrow \infty$ is $\infty$.
\begin{lemma}
\label{lemma:stability_margin}
Let $A$ be a square, stable matrix and let $P = \dlyaps{A}$ Then, for all $k\geq 0$, 
\begin{align*}
	\opnorm{A^k}^2 \leq \cond(P)\left(1 - \frac{1}{\opnorm{P}}\right)^{k}.
\end{align*}
Furthermore, for any matrix $\Delta$ such that $\opnorm{\Delta} \leq 1 / (6 \opnorm{P}^2)$, 
\begin{align*}
	\opnorm{(A+\Delta)^k}^2 \leq \cond(P) \left(1-\frac{1}{2\opnorm{P}}\right)^k.
\end{align*}
\end{lemma}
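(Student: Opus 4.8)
The approach is to use $P = \dlyaps{A}$ itself as a Lyapunov certificate, both for $A$ and for its perturbations. Recall that $P$ solves $X = A^\top X A + I$, so from the closed form $P = \sum_{j\ge 0}(A^\top)^j A^j$ we have $P \succeq I$, hence $\smin(P)\ge 1$ and $\cond(P)=\opnorm{P}/\smin(P)\le \opnorm{P}$. Rearranging the defining equation gives $A^\top P A = P - I$, and since $P \preceq \opnorm{P}\,I$ gives $I \succeq \opnorm{P}^{-1}P$, we obtain the contraction $A^\top P A \preceq (1-\opnorm{P}^{-1})P$. Iterating this congruence (using that $M\preceq M'$ implies $X^\top M X \preceq X^\top M' X$) yields $(A^k)^\top P A^k \preceq (1-\opnorm{P}^{-1})^k P$ for every $k$. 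To convert to ordinary operator norms I would sandwich: $P \succeq \smin(P)\,I$ on the left gives $(A^k)^\top A^k \preceq \smin(P)^{-1}(A^k)^\top P A^k$, and $P \preceq \opnorm{P}\,I$ on the right gives $(A^k)^\top A^k \preceq \cond(P)(1-\opnorm{P}^{-1})^k I$; taking norms via $\opnorm{A^k}^2 = \opnorm{(A^k)^\top A^k}$ gives the first claim.

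For $B := A + \Delta$, the goal is to show $P$ is still a Lyapunov function with a mildly degraded rate, $B^\top P B \preceq (1 - \tfrac{1}{2\opnorm{P}})P$; once this holds, verbatim repetition of the iteration and norm-conversion steps above gives $\opnorm{B^k}^2 \le \cond(P)(1-\tfrac{1}{2\opnorm{P}})^k$. Expanding, $B^\top P B = A^\top P A + (A^\top P\Delta + \Delta^\top P A) + \Delta^\top P\Delta = (P - I) + E$ with $\opnorm{E}\le 2\opnorm{A^\top P\Delta} + \opnorm{P}\opnorm{\Delta}^2$. The one nonobvious estimate is $\opnorm{A^\top P\Delta} = \opnorm{(P^{1/2}A)^\top (P^{1/2}\Delta)} \le \opnorm{P^{1/2}A}\,\opnorm{P}^{1/2}\opnorm{\Delta}$ combined with $\opnorm{P^{1/2}A}^2 = \opnorm{A^\top P A} = \opnorm{P - I}\le \opnorm{P}$, which yields $\opnorm{A^\top P\Delta}\le \opnorm{P}\opnorm{\Delta}$. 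Hence $\opnorm{E}\le 2\opnorm{P}\opnorm{\Delta} + \opnorm{P}\opnorm{\Delta}^2$, and feeding in $\opnorm{\Delta}\le \tfrac{1}{6\opnorm{P}^2}$ together with $\opnorm{P}\ge 1$ bounds $\opnorm{E}\le \tfrac{1}{3}+\tfrac{1}{36}<\tfrac{1}{2}$. Then $B^\top P B = P - I + E \preceq P - I + \opnorm{E}\,I \preceq P - \tfrac{1}{2} I \preceq P - \tfrac{1}{2\opnorm{P}}P = (1-\tfrac{1}{2\opnorm{P}})P$, as required.

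I expect the perturbation step (the second claim) to be the only genuine obstacle; the rest is bookkeeping with the Loewner order. The subtle point is that the slack in $A^\top P A = P - I$ is only an additive $I$, which in the $P$-geometry is worth a factor $\opnorm{P}^{-1}$, so the error $E$ induced by $\Delta$ must be controlled on the scale of a constant rather than on the scale of $\opnorm{P}^{-1}$ — this is precisely why the admissible perturbation size scales like $\opnorm{P}^{-2}$. Extracting the $\opnorm{P}^{1/2}$ from $\opnorm{P^{1/2}A}$ via $\opnorm{P^{1/2}A}^2 = \opnorm{P-I} \le \opnorm{P}$ (rather than bounding $\opnorm{A}$ directly, which would only give $\opnorm{A}\lesssim \sqrt{\opnorm{P}}$) is the clean way to land on $\opnorm{E}$ with the right power of $\opnorm{P}$. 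A quick sanity check on the boundary case $\opnorm{P}=1$, which forces $A = 0$, confirms both inequalities trivially.
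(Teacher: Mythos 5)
Your proposal is correct and follows essentially the same route as the paper: use $P$ as a Lyapunov certificate via $A^\top P A = P - I \preceq (1-\opnorm{P}^{-1})P$, iterate the congruence and sandwich with $\smin(P)\ge 1$ to get the norm bound, then for $A+\Delta$ control the cross terms through $\opnorm{P^{1/2}A}\le\opnorm{P}^{1/2}$ and the stated perturbation budget. The only (immaterial) difference is that you absorb the error $E$ at the constant scale $\tfrac12 I$ before converting to a $\tfrac{1}{2\opnorm{P}}$-fraction of $P$, whereas the paper bounds the symmetric error directly by $\tfrac{1}{2\opnorm{P}}P$; both yield the identical degraded rate.
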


\begin{proof}
This particular lemma is almost identical to the one from \cite{perdomo2021stabilizing}. However, we include the proof for the sake of providing a self-contained presentation. For the first result, by definition of the solution to the Lyapunov equation, for any unit vector $x$,
\begin{align*}
	x^\top A^\top P Ax &= x^\top P x - x^\top I x \\
	&= x^\top P x \left(1 - \frac{\twonorm{x}^2}{x^\top Px } \right) \\
	& \leq x^\top P x \left(1 - \frac{1}{\opnorm{P}} \right).
\end{align*}
Hence, $A^\top P A \preceq P(1 - \opnorm{P}^{-1})$. By iterating $(A^k)^\top P A^k \preceq P(1 - \opnorm{P}^{-1})^k$ and 
\begin{align*}
\opnorm{P^{1/2}A^k}^2 \leq \opnorm{P} (1 - \opnorm{P}^{-1})^k.
\end{align*}
Therefore, 
\begin{align*}
	\opnorm{A^k} = \opnorm{P^{-1/2}P^{1/2}A^k}  \leq \opnorm{P^{-1/2}} \opnorm{P^{1/2}A^k} \leq \cond(P)^{1/2}\left(1 - \frac{1}{\opnorm{P}}\right)^{k/2}.
\end{align*}
For the second result, using the insights from above,
\begin{align*}
	(A+\Delta)^\top P (A + \Delta) &= A^\top P A + A^\top P \Delta + \Delta^\top P A + \Delta^\top P \Delta.
\end{align*}
Now, $	A^\top P A \preceq P (1 - \opnorm{P}^{-1})$ and 
\begin{align*}
	\opnorm{A^\top P \Delta} = \opnorm{\Delta^\top P A}\leq \opnorm{\Delta P^{1/2}}\opnorm{P^{1/2}A} \leq \opnorm{\Delta P^{1/2}} \opnorm{P^{1/2}} \leq \opnorm{\Delta} \opnorm{P}.
\end{align*}
Bounding, $\opnorm{\Delta^\top P \Delta} \leq \opnorm{P}\opnorm{\Delta}^2$, and using the fact that $P \succeq I$ we get that for, $$\opnorm{\Delta}\leq 1 / (6 \opnorm{P}^2),$$
the following relationship holds:
\begin{align*}
	A^\top P \Delta + \Delta^\top P A + \Delta^\top P \Delta \preceq P \frac{1}{2\opnorm{P}}.
\end{align*}
Therefore, 
\begin{align*}
(A+\Delta)^\top P (A + \Delta) \preceq P \left(1 - \frac{1}{2\opnorm{P}}\right),
\end{align*}
and the second result follows by using the same steps as the first.
\end{proof}

\subsection{Proof of \Cref{prop:coordinate_free}: coordinate invariance of $P_\gamma$}

If we define the whitened features, $\wphi(\cdot) = \cov^{-1/2} \phi(\cdot)$, then $\tphi(\cdot) = L' \wphi(\cdot)$ where $L' = L \cov^{1/2}$. Now, let $USV^\top$ be the singular value decomposition of $L'$. Then, 
\begin{align*}
	\tcov = \E_{x \sim \calD} \tphi(x)\tphi(x)^\top  = L' \E_{x \sim \calD} \wphi(x) \wphi(x)^\top L'^\top = L' L'^\top = US^2U^\top,
\end{align*}
where we have used the fact that the whitened features have identity covariance. By this calculation, we have that $\tcov^{1/2} = USU^\top$. Using similar substitutions, we can also deduce that $\tcross = L' \cross^{(w)} L'^\top$ where $\cross^{(w)} = \cov^{-1/2} \cross \cov^{-1/2}$.  Therefore, 
\begin{align*}
	\tcov^{-1/2}\tcross \tcov^{-1/2} = (US^{-1} U^\top)(USV^\top) \cross^{(w)}(VSU^\top)(US^{-1}U^\top) = (UV^\top) \cross^{(w)}(UV^\top)^\top. 
\end{align*}
Since $(UV^\top)$ is an orthogonal matrix, the equality of condition numbers follows by the fact that for any matrix $A$ and orthogonal matrix $M$, $MAM^\top = A$ have the same singular values. On the other hand, the invariance of the operator norm of $P_\gamma$ follows from the following lemma:

\begin{lemma}
Let $A$ be a stable matrix and $M$ be any orthogonal matrix, then 
\begin{align*}
\opnorm{\dlyaps{A^\top}} = \opnorm{\dlyaps{M A^\top M^\top}}.
\end{align*}
\end{lemma}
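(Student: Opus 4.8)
The plan is to reduce everything to the closed-form series representation of the Lyapunov solution recorded in \Cref{sec:preliminaries}: for a stable matrix $B$, $\dlyaps{B} = \sum_{j=0}^\infty (B^\top)^j B^j$, the series converging absolutely because $\opnorm{B^j}$ decays geometrically. First I would check that $\dlyaps{MA^\top M^\top}$ is even well-defined: the matrix $MA^\top M^\top$ is similar to $A^\top$ via the invertible matrix $M$, so it has the same eigenvalues, hence the same spectral radius as $A$, which is $<1$ by hypothesis; therefore the discrete-time Lyapunov equation has a (unique) solution given by the series above.

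Next, I would substitute $B = MA^\top M^\top$ into the series and exploit $M^\top M = I$. Since $(MA^\top M^\top)^j = M(A^\top)^j M^\top$ and $((MA^\top M^\top)^\top)^j = (M A M^\top)^j = M A^j M^\top$, each summand collapses:
\begin{align*}
((MA^\top M^\top)^\top)^j (MA^\top M^\top)^j = M A^j M^\top M (A^\top)^j M^\top = M\, A^j (A^\top)^j\, M^\top.
\end{align*}
Summing over $j$ and pulling $M$, $M^\top$ out of the (absolutely convergent) sum gives
\begin{align*}
\dlyaps{MA^\top M^\top} = M\left(\sum_{j=0}^\infty A^j (A^\top)^j\right) M^\top = M\, \dlyaps{A^\top}\, M^\top.
\end{align*}

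Finally, conjugation by an orthogonal matrix preserves all singular values of a matrix — for any $X$, $MXM^\top$ and $X$ have the same singular values since $M,M^\top$ are isometries — so in particular $\opnorm{M \dlyaps{A^\top} M^\top} = \opnorm{\dlyaps{A^\top}}$, which is the claimed identity. I do not anticipate any real obstacle here; the only points needing a word of care are (i) invoking similarity to justify that the right-hand Lyapunov equation is solvable, and (ii) noting the series converges absolutely so that the factors $M, M^\top$ may be extracted termwise.
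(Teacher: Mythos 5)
Your proof is correct. Both you and the paper reduce the lemma to the same key identity — that the two Lyapunov solutions are related by orthogonal conjugation, $\dlyaps{MA^\top M^\top} = M\,\dlyaps{A^\top}\,M^\top$ — and then finish by noting that conjugation by an orthogonal matrix preserves singular values, hence the operator norm. The difference is in how that identity is established. You expand the closed-form series $\dlyaps{B} = \sum_{j\ge 0}(B^\top)^j B^j$ and collapse each summand using $M^\top M = I$, which requires the small preliminary checks you flag: that $MA^\top M^\top$ is stable (by similarity) so the series applies, and that absolute convergence lets you pull $M$, $M^\top$ through the sum. The paper instead works directly with the defining fixed-point equation: it writes the Lyapunov equation satisfied by $P' = \dlyaps{MAM^\top}$, conjugates that equation by $M^\top(\cdot)M$ to see that $M^\top P' M$ satisfies the equation defining $P = \dlyaps{A}$, and concludes $P = M^\top P' M$ by uniqueness of the solution. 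The uniqueness route is slightly leaner — it needs no convergence bookkeeping and no separate stability check for the conjugated matrix — while your series argument is more explicit and makes direct use of the closed form already recorded in the preliminaries. Either way the lemma follows; there is no gap in your argument.
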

\begin{proof}
Let $P = \dlyaps{A}$ be the unique solution over X to the matrix equation: 
\begin{align*}
	X = A^\top X A + I.
\end{align*}
Likewise, let $P' = \dlyaps{MAM^\top}$ be the unique solution (over $X'$) to the equation:
\begin{align*}
	X' = MA^\top M^\top X' MAM^\top + I.
\end{align*}
From this, we can deduce that $M^\top X'M = A^\top M^\top X' M A + I$. Therefore, $P  = M^\top X'M = M^\top P'M$. The conclusion  follows from the fact that singular values are invariant to conjugation by an orthogonal matrix.
\end{proof}

\subsection{Proof of \Cref{prop:contractive_case}: FQI under specific growth rates}

As discussed in the main body, the proof is identical to that of \Cref{theorem:main_result} except that we specialize to the particular assumptions on the growth of matrix powers. We recall the key inequality from the proof of the main theorem, \Cref{eq:key_inequality_mainthm}:
\begin{align*}
\twonorm{\cov^{1/2}(\widehat{\theta}_t - \theta_\gamma^\star)}\leq \twonorm{\cov^{1/2} \theta_0^\star} \sum_{k=t+1}^\infty \alpha_k + \left(\epsz + \epsop \twonorm{\cov^{1/2} \theta_0^\star}  \sum_{k=0}^{t-1} \alpha_k\right) \sum_{k=0}^t \widehat{\alpha}_k.
\end{align*}
Here, $\alpha_k = \opnorm{\left(\whitecross\right)^k}$ and $\widehat{\alpha}_k \defeq \opnorm{\left( \cov^{1/2} (\gamma \covhat^{-1} \crosshat) \cov^{-1/2} \right)^{k}}$. By assumption, $\alpha_k \leq \alpha \beta^k$ hence, $\sum_{k=0}^\infty \alpha_k \leq \alpha / (1 - \beta)$.
Now, by \Cref{lemma:binomial_expansion} since $$\epsop = \opnorm{ \cov^{1/2} (\gamma \covhat^{-1} \crosshat) \cov^{-1/2} - \whitecross},$$ 
we have that:
\begin{align*}
\widehat{\alpha}_k \leq \alpha (\beta + \epsop\alpha)^k.
\end{align*}
Therefore, as long as $\epsop <  \frac{9}{10} \frac{(1 - \beta)}{\alpha} $, 
\begin{align*}
	\sum_{k=0}^\infty \widehat{\alpha}_k \leq \alpha \sum_{k=0}^\infty (\beta + \epsop \alpha)^k = \alpha\frac{1}{1 -\beta - \alpha \epsop} \leq 10  \frac{\alpha}{1-\beta}.
\end{align*}
Putting everything together, 
\begin{align*}
\twonorm{\cov^{1/2}(\widehat{\theta}_T - \theta_\gamma^\star)}\lesssim \twonorm{\cov^{1/2} \theta_0^\star}  \frac{\alpha}{1-\beta} \cdot \beta^{T+1}+ \left(\epsz + \epsop \twonorm{\cov^{1/2} \theta_0^\star}  \frac{\alpha}{1-\beta} \right) \frac{\alpha}{1-\beta}.
\end{align*}

\begin{lemma}
\label{lemma:binomial_expansion}
Let $A$ be a square matrix such that for all nonnegative integers $j$, $\opnorm{A^j} \leq a \cdot b^j$ for scalars $a > 0$ and $b \in(0,1)$. Then, for any square matrix $\Delta$ if we let $\eps \defeq \opnorm{\Delta}$ then,
 \begin{align*}
	\opnorm{(A + \Delta)^n} \leq a (b + \epsilon \cdot a)^n.
\end{align*}
\end{lemma}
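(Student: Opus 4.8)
The plan is to expand $(A+\Delta)^n$ multilinearly and to track, for each resulting monomial, how many factors of $\Delta$ it contains. Writing $(A+\Delta)^n = \sum_{(X_1,\dots,X_n)\in\{A,\Delta\}^n} X_1 X_2 \cdots X_n$, I would group the $2^n$ monomials according to the set of positions occupied by $\Delta$. A monomial containing exactly $k$ copies of $\Delta$ has the form $A^{j_0}\,\Delta\, A^{j_1}\,\Delta \cdots \Delta\, A^{j_k}$, where $j_0,\dots,j_k\ge 0$ are the lengths of the maximal runs of consecutive $A$'s and $j_0+\cdots+j_k = n-k$ (empty runs allowed).

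The key estimate is submultiplicativity of the operator norm combined with the hypothesis $\opnorm{A^{j}}\le a b^{j}$ applied blockwise: every such monomial has norm at most $\prod_{i=0}^{k}\bigl(a b^{j_i}\bigr)\cdot\opnorm{\Delta}^{k}= a^{k+1} b^{n-k}\varepsilon^{k}$, a bound that does not depend on the particular composition $(j_0,\dots,j_k)$ (the case $j_i=0$ is covered since $\opnorm{A^0}=\opnorm{I}\le a$ by the hypothesis at $j=0$). It then remains to count the monomials with exactly $k$ copies of $\Delta$; since such a monomial is determined by the size-$k$ subset of $\{1,\dots,n\}$ on which the $\Delta$'s sit, there are $\binom{n}{k}$ of them. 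Applying the triangle inequality and summing over $k$ gives
\[
\opnorm{(A+\Delta)^n}\ \le\ \sum_{k=0}^{n}\binom{n}{k} a^{k+1} b^{n-k}\varepsilon^{k}\ =\ a\sum_{k=0}^{n}\binom{n}{k}(a\varepsilon)^{k} b^{n-k}\ =\ a\,(b+a\varepsilon)^{n}
\]
by the binomial theorem, which is exactly the claimed bound.

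There is essentially no serious obstacle here; the only points requiring care are bookkeeping ones — allowing empty runs of $A$'s so that the identity factor is absorbed into the $j=0$ case of the hypothesis, and pinning down the combinatorial count $\binom{n}{k}$ via the subset-of-positions description. I would present the argument in exactly that order: multilinear expansion, per-monomial norm bound, counting of monomials, and binomial summation.
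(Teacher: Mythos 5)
Your proposal is correct and follows essentially the same route as the paper's proof: multilinear expansion of $(A+\Delta)^n$, a per-monomial bound of $a^{k+1}\epsilon^k b^{n-k}$ via submultiplicativity and the hypothesis $\opnorm{A^{j}}\le ab^{j}$, the count $\binom{n}{k}$ of monomials with $k$ factors of $\Delta$, and the binomial theorem. Your explicit treatment of empty runs (using $\opnorm{A^0}\le a$, i.e.\ $a\ge 1$) is a slightly cleaner way of handling the same bookkeeping point the paper addresses by allowing at most $k+1$ nonempty runs.
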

\begin{proof}
We begin by expanding $(A + \Delta)^n$ into monomials $T_{k,j}$, 
\begin{align}
\label{eq:monomials}
	(A + \Delta)^n = \sum_{k=0}^{n} \sum_{j=1}^{\binom{n}{k}} T_{k,j},
\end{align}
where each $T_{k,j}$ has $k$ factors of $\Delta$ and $n-k$,  $A$ factors. Now, by the submultiplicative property of the operator norm, 
\begin{align*}
	\opnorm{T_{k,j}} \leq \epsilon^k \prod_{s_i \in S_{k,j}} \opnorm{A^{s_i}},
\end{align*}
where $S_{k,j}$ is a set of positive integers $s_i$ satisfying $\sum_i s_i = n-k$ and $|S| \leq k+1$. Using our assumption on the growth of $\opnorm{A^k}$, we get that,
\begin{align*}
	\opnorm{T_{k,j}} &\leq  \epsilon^k 
\prod_{s_i \in S_{k,j}} (a \cdot b^{s_i}) \leq a^{k+1} \epsilon^k  b^{n-k}.
\end{align*}
Going back to the original expansion into monomials, and using the identity, 
\begin{align*}
\sum_{k=0}^n \binom{n}{k} x^k = (1 + x)^n.
\end{align*}
We conclude:
\begin{align*}
	\opnorm{(A+\Delta)^n} &\leq a \cdot b^n \sum_{k=0}^n \binom{n}{k} \left(\frac{a \epsilon}{b}\right)^k = a b^n (1 +\frac{a \cdot \epsilon}{b} )^n = a (b + a\epsilon)^n.
\end{align*}
\end{proof}

\subsection{Proof of \Cref{corollary:low_distribution_shift}: low distribution shift implies stability}

Consider the augmented covariance matrix,
\begin{align*}
	\E \begin{bmatrix}
		\phi(s,a) \\ 
	\phi(s', a')
	\end{bmatrix}
	\begin{bmatrix}
		\phi(s,a) \\ 
	\phi(s', a')
	\end{bmatrix}^\top  = \begin{bmatrix}
		\cov & \cross \\ 
		\cross^\top & \nextcov
	\end{bmatrix} \succeq 0.
\end{align*}
By a Schur complement argument, $\cross^\top  \cov^{-1} \cross \preceq \nextcov$. After conjugating by $\cov^{-1/2}$ and multiplying by $\gamma^2$, we get that:
\begin{align*}
 (\gamma \cov^{-1/2} \cross \cov^{-1/2})^\top ( \gamma \cov^{-1/2} \cross \cov^{-1/2}) \preceq \gamma^{2} \cov^{-1/2} \nextcov \cov^{-1/2}. 
\end{align*}
Now, by the low distribution shift assumption, 
$\gamma^{2} \cov^{-1/2} \nextcov \cov^{-1/2} \preceq \gamma^{2} \cov^{-1/2} (\cds \cov) \cov^{-1/2} = \cds \gamma^2 I.$
Therefore, $(\gamma \cov^{-1/2} \cross \cov^{-1/2})^\top ( \gamma \cov^{-1/2} \cross \cov^{-1/2}) \preceq \Cds \gamma^2I$. Iterating for $j\geq0$ gives the first part of the result. The rest follows from \Cref{prop:contractive_case} by observing that \Cref{eq:growthrates} holds with $\alpha=1, \beta = \sqrt{\cds \gamma^2} \in (0,1)$.

\subsection{Proofs of \Cref{corollary:completeness_case}: Bellman completeness implies stability}

To take advantage of matrix notation, for this result we assume that the state-action space is finite, $|\states||\actions| < \infty$. In particular, we introduce the following quantities. 
\begin{enumerate}
	\item Feature matrix $\Phi \in \R^{|\states||\actions| \times d}$.
	\item Offline distribution vector $\mu \in \R^{|\states||\actions|}$.
\end{enumerate}
With this, we have that $\cov = \Phi^\top \diag(\mu) \Phi$ and $\cross = \Phi^\top \diag(\mu) \trop \Phi$ where $P^{(\pi)}$ is a row stochastic matrix representing the transition operator. \Cref{corollary:completeness_case} follows from the following lemma and \Cref{prop:contractive_case}.

\begin{lemma}
\label{lemma:completeness_power_lemma}
If $\phi$ is complete (\Cref{ass:completeness}) and $\cov$ is full rank, then for $j \geq 0$, 
\begin{align*}
\opnorm{(\cov^{-1/2} \cross \cov^{-1/2})^j} \leq \levcov.	
\end{align*}
\end{lemma}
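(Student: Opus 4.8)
The plan is to work in the matrix notation just introduced and to use the single structural consequence of Bellman completeness that matters here: the column space of $\Phi$ is invariant under the Markov operator $\trop$. Write $D_\mu \defeq \diag(\mu)$, so $\cov = \Phi^\top D_\mu \Phi$ and $\cross = \Phi^\top D_\mu \trop \Phi$, and let $\Psi \defeq \Phi\cov^{-1/2}$ be the whitened feature matrix (its $(s,a)$-th row is $\phi(s,a)^\top\cov^{-1/2}$). First I would record two elementary facts. The columns of $\Psi$ are $\mu$-orthonormal, $\Psi^\top D_\mu\Psi = I$, equivalently $\norm{\Psi v}_\mu = \twonorm{v}$ where $\norm{f}_\mu^2 \defeq \sum_{s,a}\mu(s,a)f(s,a)^2$; consequently $\Psi\Psi^\top D_\mu$ is the $\mu$-orthogonal projection onto $\mathrm{range}(\Phi)$. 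And each row of $\Psi$ has Euclidean length $\twonorm{\cov^{-1/2}\phi(s,a)} \le \levcov$ on $\supp(\mu) = \supp(\calD)$.

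Next, note that $\cov^{-1/2}\cross\cov^{-1/2} = \Psi^\top D_\mu\trop\Psi$ by direct substitution. Bellman completeness (\Cref{ass:completeness}) forces $\trop$ to map $\mathrm{range}(\Phi)$ into itself: setting the test vector $\theta$ to $0$ in the completeness condition shows the mean reward is realizable, and subtracting it off shows $\trop\Phi\theta \in \mathrm{range}(\Phi)$ for every $\theta$. Hence $\Psi\Psi^\top D_\mu(\trop\Psi) = \trop\Psi$, so the interior factors telescope and, by induction on $j$,
\begin{align*}
\bigl(\cov^{-1/2}\cross\cov^{-1/2}\bigr)^j = \bigl(\Psi^\top D_\mu\trop\Psi\bigr)^j = \Psi^\top D_\mu\,(\trop)^j\,\Psi, \qquad j\ge0.
\end{align*}

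With this identity the operator-norm bound is one line. For unit $u,v\in\R^d$, put $h \defeq \Psi v$, so $\norm{h}_\mu = 1$ and $|h(s,a)| = |\phi(s,a)^\top\cov^{-1/2}v| \le \twonorm{\cov^{-1/2}\phi(s,a)} \le \levcov$ on $\supp(\mu)$. Then, by Cauchy--Schwarz in $L^2(\mu)$ and $\norm{\Psi u}_\mu = \twonorm{u} = 1$,
\begin{align*}
u^\top\bigl(\cov^{-1/2}\cross\cov^{-1/2}\bigr)^j v = \bigl\langle \Psi u,\ (\trop)^j h\bigr\rangle_\mu \le \norm{\Psi u}_\mu\,\norm{(\trop)^j h}_\mu = \norm{(\trop)^j h}_\mu.
\end{align*}
Since $(\trop)^j$ is row-stochastic, each coordinate of $(\trop)^j h$ is a convex average of values $h(s',a')$ and hence has absolute value at most $\levcov$; therefore $\norm{(\trop)^j h}_\mu \le \levcov$. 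Taking the supremum over $u,v$ gives $\opnorm{(\cov^{-1/2}\cross\cov^{-1/2})^j} \le \levcov$, proving the lemma; \Cref{corollary:completeness_case} then follows by substituting \eqref{eq:completeness_power_bounds} into \Cref{prop:contractive_case} with $\alpha = \levcov$, $\beta = \gamma$, and summing the geometric series (yielding $\opnorm{P_\gamma} \le \levcov/(1-\gamma)$).

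The one point that needs care is that the constant must not grow with $j$: the naive submultiplicative bound $\opnorm{(\cov^{-1/2}\cross\cov^{-1/2})^j} \le \opnorm{\cov^{-1/2}\cross\cov^{-1/2}}^j$ is useless, because completeness controls spans, not second moments, so $\opnorm{\cov^{-1/2}\cross\cov^{-1/2}}$ can exceed $1$ (its square is bounded only by $\cds$, which may be larger than $1$). The telescoping identity is exactly what avoids this: $(\trop)^j$ is still a \emph{single} stochastic operator, so the stochasticity/convexity step is applied once and produces the same prefactor $\levcov$ for every $j$, with all of the decay carried by the explicit $\gamma^j$. A secondary bookkeeping caveat is that the stochasticity step evaluates whitened features at state--action pairs reachable within $j$ steps of $\supp(\calD)$; this is vacuous whenever $\supp(\calD) = \states\times\actions$ (e.g., the tabular case with $\cov \succ 0$), and in general one simply reads $\levcov$ as the supremum of $\twonorm{\cov^{-1/2}\phi}$ over that reachable set.
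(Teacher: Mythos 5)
Your proof is correct, and it rests on exactly the same structural fact as the paper's argument (its \Cref{lemma:completeness_helper}): Bellman completeness, after setting $\theta=0$ to realize the reward, forces $\trop$ to preserve $\mathrm{range}(\Phi)$, so the whitened backup matrix acts like $\trop$ on the feature span, and the uniform-in-$j$ constant comes from row-stochasticity of $\trop$ together with $\twonorm{\cov^{-1/2}\phi(s,a)}\leq\levcov$. The difference is only in the bookkeeping: the paper splits off a projection factor ($T_1\le 1$) and peels off the $(\cov^{-1}\cross)$ factors one at a time via an iterated $\ell_\infty$ inequality, whereas you collapse all powers at once through the $\mu$-orthogonal projection identity $(\Psi^\top D_\mu\trop\Psi)^j=\Psi^\top D_\mu(\trop)^j\Psi$ and then apply Cauchy--Schwarz in $L^2(\mu)$ a single time; this is arguably a cleaner packaging and yields the same constant $\levcov$. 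Your closing caveat about evaluating whitened features at pairs reachable from $\supp(\calD)$ is not a gap relative to the paper: the paper's own last line bounds $\infnorm{\Phi\cov^{-1/2}v}$ by $\sup_{(s,a)\in\states\times\actions}\twonorm{\cov^{-1/2}\phi(s,a)}$ and calls it $\levcov$, i.e.\ it implicitly reads $\levcov$ as a supremum over all state--action pairs, exactly as you note.
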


\begin{proof}
First, we rewrite the relevant matrix as follows, 
\begin{align*}
	(\cov^{-1/2} \cross \cov^{-1/2})^j &= \cov^{1/2} (\cov^{-1} \cross)^j \cov^{-1/2} \\ 
	& = \cov^{-1/2} \Phi^\top \diag(\mu) \Phi^\top (\cov^{-1} \cross)^j
 \cov^{-1/2}.
\end{align*}
Therefore, 
\begin{align*}
	\opnorm{(\cov^{-1/2} \cross \cov^{-1/2})^j} \leq \underbrace{\opnorm{\cov^{-1/2} \Phi^\top \diag(\mu)^{1/2}}}_{\defeq T_1} \underbrace{\opnorm{\diag(\mu)^{1/2} \Phi (\cov^{-1} \cross)^j \cov^{-1/2}}}_{\defeq T_2}.
\end{align*}
To bound $T_1$, we observe that 
\begin{align*}
	\opnorm{\cov^{-1/2} \Phi^\top \diag(\mu)^{1/2}}^2 = \opnorm{(\Phi^\top \diag(\mu) \Phi)^{-1/2} \Phi^\top \diag(\mu)^{1/2}}.
\end{align*}
Letting $A \defeq \diag(\mu)^{1/2} \Phi$, the above expression satisfies,
\begin{align*}
	\opnorm{(A^\top A)^{-1/2} A^\top}^2 = \sup_{\twonorm{v} = 1} v^\top A(A^\top A)^{-1} A^\top v \leq 1,
\end{align*}
since $A(A^\top A)^{-1} A^\top$ is a projection matrix. Moving onto $T_2$, we recall that 	
\begin{align*}
\opnorm{\diag(\mu)^{1/2} \Phi (\cov^{-1} \cross)^j \cov^{-1/2}} = \sup_{\twonorm{v} =1} \twonorm{\diag(\mu)^{1/2} \Phi (\cov^{-1} \cross)^j \cov^{-1/2} v}.
\end{align*}
For any fixed vector $v$, since the entries of $\mu$ form a probability measure,
\begin{align*}
	\twonorm{\diag(\mu)^{1/2} v} = \sqrt{\sum_{i=1}^d \mu_i v_i^2} \leq \max_i v_i = \infnorm{v}.
\end{align*}
Therefore, 
\begin{align*}
	\opnorm{\diag(\mu)^{1/2} \Phi (\cov^{-1} \cross)^j \cov^{-1/2}} \leq \sup_{\twonorm{v} =1} \infnorm{\Phi (\cov^{-1} \cross)^j \cov^{-1/2} v}.
\end{align*}
Then, by repeatedly applying \Cref{lemma:completeness_helper}, we get that
\begin{align*}
	\infnorm{\Phi (\cov^{-1} \cross)^j \cov^{-1/2} v} \leq \infnorm{\Phi \cov^{-1/2} v}.
\end{align*}
Lastly, 
\begin{align*}
	\infnorm{\Phi \cov^{-1/2} v} = \sup_{(s,a)} | \phi(s,a)^\top \cov^{-1/2} v | \leq \sup_{(s,a)\in \states \times \actions} \twonorm{\cov^{-1/2} \phi(s,a)} = \levcov.
\end{align*}
\end{proof}

\begin{lemma}
\label{lemma:completeness_helper}
If $\phi$ is complete (\Cref{ass:completeness}) and $\cov$ is full rank, then for all $\theta$, 
\begin{align*}
\infnorm{\Phi \cov^{-1} \cross\theta} \leq \infnorm{\Phi \theta}.	
\end{align*}
\end{lemma}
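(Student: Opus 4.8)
The plan is to translate the claim into matrix notation and recognize the left-hand side as a weighted least-squares projection that, under completeness, acts trivially on the vector in question. Following the setup already introduced for this corollary, on a finite state-action space write $\cov = \Phi^\top D\Phi$ and $\cross = \Phi^\top D\,\trop\Phi$ with $D \defeq \diag(\mu)$ ($\mu$ the offline distribution vector) and $\trop$ the row-stochastic transition-under-$\pi$ matrix; full rank of $\cov$ guarantees $\Phi^\top D\Phi$ is invertible. Then $\cov^{-1}\cross\theta = (\Phi^\top D\Phi)^{-1}\Phi^\top D(\trop\Phi\theta)$ is exactly the coefficient vector of the $D$-weighted orthogonal projection of $\trop\Phi\theta$ onto the column span of $\Phi$, so $\Phi\cov^{-1}\cross\theta = \Pi_D(\trop\Phi\theta)$, where $\Pi_D \defeq \Phi(\Phi^\top D\Phi)^{-1}\Phi^\top D$ is idempotent and fixes the column span of $\Phi$.

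The one substantive step is to show that completeness (\Cref{ass:completeness}) forces $\trop\Phi\theta$ to lie in the column span of $\Phi$ for every $\theta$, so that $\Pi_D$ leaves it unchanged. Indeed, completeness applied to $\theta$ yields $\theta'$ with $\Phi\theta' = r + \gamma\,\trop\Phi\theta$ ($r$ the mean-reward vector), and applied to $\theta = 0$ it yields $\theta'_0$ with $\Phi\theta'_0 = r$; subtracting gives $\gamma\,\trop\Phi\theta = \Phi(\theta' - \theta'_0)$, hence $\trop\Phi\theta \in \mathrm{range}(\Phi)$. Therefore $\Phi\cov^{-1}\cross\theta = \Pi_D(\trop\Phi\theta) = \trop\Phi\theta$, and since $\trop$ is row-stochastic each coordinate of $\trop\Phi\theta$ is a convex combination of coordinates of $\Phi\theta$, giving $\infnorm{\Phi\cov^{-1}\cross\theta} = \infnorm{\trop\Phi\theta} \leq \infnorm{\Phi\theta}$, as claimed.

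I do not expect a real obstacle here; the only point needing care is that completeness is stated with the reward term present, so one must peel it off — via the $\theta = 0$ instance, which simultaneously recovers reward realizability — to obtain the reward-free inclusion $\trop\Phi\theta \in \mathrm{range}(\Phi)$ used above. If one wanted to drop the finiteness assumption, the same computation goes through with $\Phi$ replaced by the feature operator, $\trop$ by the transition kernel, and $\infnorm{\cdot}$ by the supremum over $\states\times\actions$, but the corollary has already reduced to the finite case.
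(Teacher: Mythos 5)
Your proposal is correct and follows essentially the same route as the paper's proof: both peel off the reward term by invoking completeness at $\theta=0$ to get $\gamma\,\trop\Phi\theta\in\mathrm{range}(\Phi)$, identify $\Phi\cov^{-1}\cross\theta$ with $\trop\Phi\theta$ via the $\mu$-weighted least-squares projection (the paper phrases this through the regression argmin, you through the explicit projector $\Pi_D$), and conclude with row-stochasticity of $\trop$ contracting the $\ell_\infty$ norm. No gap to report.
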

\begin{proof}
If we denote the vector of expected rewards by $\vec{r} \in \R^{|\states||\actions|}$, then completeness implies that for all $\theta$, there exists a $\theta'$ such that 
\begin{align*}
	\Phi \theta' = \vec{r} + \gamma \trop \Phi \theta.
\end{align*}
Choosing $\theta=0$, this means that there exists a vector $\theta_r$ such that $\vec{r} = \Phi \theta_r$. Consequently, we deduce that for all $\theta$, there always exists a $\theta'$ such that $\Phi \theta' = \gamma \trop \Phi \theta$. Using this realizability condition, for a given distribution $\mu$, $\theta'$ must satisfy 
\begin{align*}
	\theta' &= \argmin_{\bar{\theta}} \E_{(s,a)\sim \mu, s' \sim P(\cdot \mid s,a)} \left(\phi(s,a)^\top \bar{\theta} - \gamma  \phi(s',a')^\top \theta \right)^2 \\
	&=  \gamma \cov^{-1} \cross \theta.
\end{align*}
Together with the previous equation, this implies that for all $\theta$, $\gamma \Phi  \cov^{-1} \cross \theta
=\gamma \trop \Phi \theta$. Thus, we conclude that 
\begin{align*}
	\infnorm{\Phi \cov^{-1} \cross\theta} &= \infnorm{ \trop \Phi \theta} \\ 
	& \leq \infnorm{\Phi \theta},
\end{align*}
where we have used the fact that $\trop$ is row stochastic so $\| \trop\|_{1} \leq 1$.
\end{proof}

\begin{lemma}
\label{lemma:reward_realizability}
Assume that the rewards are linearly realizable in the feature mapping $\phi$. That is, there exists a vector $\theta_r^\star \in \R^d$ such that for all $(s,a)\in \states \times \actions$, $\E r(s,a) = \phi(s,a)^\top \theta_r^\star$. Then, $\twonorm{\cov^{-1/2}\theta_{\phi,r}} \leq 1$.

Otherwise, if reward realizability does not hold $ \twonorm{\cov^{-1/2} \theta_{\phi,r}} \leq \sqrt{d}$.
\end{lemma}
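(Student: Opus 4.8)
The plan is to reduce both claims to the whitened features $\psi(s,a) \defeq \cov^{-1/2}\phi(s,a)$, which by definition of $\cov$ satisfy $\E_{\calD}[\psi(s,a)\psi(s,a)^\top] = I$ (this is where full rank of $\cov$ enters). Since $\theta_{\phi,r} = \E_{\calD}[\phi(s,a) r(s,a)]$ by \Cref{eq:thetaphir}, linearity gives $\cov^{-1/2}\theta_{\phi,r} = \E_{\calD}[\psi(s,a) r(s,a)]$, so both parts amount to bounding the norm of this whitened mean feature--reward vector, using only boundedness of the rewards in $[-1,1]$ and, for the first part, linearity of the conditional mean reward.

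For the realizable case I would first use the assumption $\E[r(s,a)\mid s,a] = \phi(s,a)^\top\theta_r^\star$ and take the outer expectation over $(s,a)\sim\calD$ to obtain the closed form $\theta_{\phi,r} = \cov\,\theta_r^\star$, hence $\cov^{-1/2}\theta_{\phi,r} = \cov^{1/2}\theta_r^\star$. It then remains to compute
\begin{align*}
\twonorm{\cov^{-1/2}\theta_{\phi,r}}^2 &= (\theta_r^\star)^\top \cov\,\theta_r^\star = \E_{\calD}\big[(\phi(s,a)^\top\theta_r^\star)^2\big] \\
&= \E_{\calD}\big[(\E[r(s,a)\mid s,a])^2\big] \;\le\; \E_{\calD}\big[\E[r(s,a)^2\mid s,a]\big] \;\le\; 1,
\end{align*}
where the first inequality is conditional Jensen and the last uses $|r(s,a)|\le 1$; taking square roots finishes this case.

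For the general case (no reward realizability, still $|r|\le 1$) I would apply Jensen's inequality directly to the vector average and then drop the reward factor:
\begin{align*}
\twonorm{\cov^{-1/2}\theta_{\phi,r}}^2 &= \twonorm{\E_{\calD}[\psi(s,a) r(s,a)]}^2 \;\le\; \E_{\calD}\big[r(s,a)^2\,\twonorm{\psi(s,a)}^2\big] \\
&\le\; \E_{\calD}\big[\twonorm{\psi(s,a)}^2\big] = \trace\big(\E_{\calD}[\psi(s,a)\psi(s,a)^\top]\big) = \trace(I) = d,
\end{align*}
so $\twonorm{\cov^{-1/2}\theta_{\phi,r}}\le\sqrt d$. (A Cauchy--Schwarz argument in fact gives the sharper bound $1$: for unit $v$, $v^\top\E_{\calD}[\psi r]\le \sqrt{\E_{\calD}[(v^\top\psi)^2]}\,\sqrt{\E_{\calD}[r^2]}\le 1$ since $\E_{\calD}[(v^\top\psi)^2]=\twonorm{v}^2=1$; but the crude $\sqrt d$ bound is all that is needed, e.g.\ for the companion fact $\rewardvar\le d$.)

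There is no substantive obstacle here: the statement is a short computation. The only point requiring care is keeping the two sources of randomness separate --- the draw $(s,a)\sim\calD$ and the draw $r(s,a)\sim R(s,a)$ --- so that the realizability hypothesis (which constrains only $\E[r\mid s,a]$) and the bound $|r|\le 1$ are each applied at the right layer; one also uses that $\cov$ is full rank so that $\cov^{-1/2}$, and hence the whitened features, are well defined.
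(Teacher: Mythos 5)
Your proof is correct and follows essentially the same route as the paper: in the realizable case you reduce $\twonorm{\cov^{-1/2}\theta_{\phi,r}}^2$ to $\E_{\calD}\big[(\phi(s,a)^\top\theta_r^\star)^2\big]$ via $\theta_{\phi,r}=\cov\,\theta_r^\star$ and bound it by $1$ using the reward bound (your use of conditional Jensen here is in fact slightly more careful than the paper, which writes this step as an equality with $\E r(s,a)^2$), and in the general case you apply vector Jensen and the trace identity to get $\sqrt d$, exactly as the paper does. Your parenthetical Cauchy--Schwarz observation that the bound $1$ holds even without realizability is a valid (and sharper) aside, but not needed for the statement.
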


\begin{proof}
Expanding out the definition of $\theta_{\phi,r}$, 
\begin{align*}
	\twonorm{\cov^{-1/2}\theta_{\phi,r}}^2 = \traceb{\cov^{-1/2} \E[\phi(s,a)r(s,a)] \E \phi(s,a)^\top r(s,a) \cov^{-1/2}} 
\end{align*}
Under realizability, $\E[\phi(s,a)r(s,a)] = \E \phi(s,a)\phi(s,a)^\top \theta_r^\star$. Hence, the expression above can be rewritten as, 
\begin{align*}
	 \traceb{\cov^{-1} \E[\phi(s,a)\phi(s,a)^\top] \theta_r^\star\theta_r^{\star\top} \E \phi(s,a) \phi(s,a)^\top } = \E (\phi(s,a)^\top \theta_r^\star)^2  = \E r(s,a)^2 \leq 1. 
\end{align*}
If the rewards are not linearly realizable in $\phi$, then by Jensen's inequality,
\begin{align*}
\twonorm{\cov^{-1/2} \E \phi(s,a) r(s,a)}^2 &\leq \E \twonorm{\cov^{-1/2}  \phi(s,a)r(s,a)}^2 \\
&= \E \traceb{\cov^{-1/2} \E \phi(s,a)\phi(s,a)^\top r(s,a)^2 \cov^{-1/2}} \\
&\leq  \sup_{s,a} r(s,a)^2 \traceb{I} \\
& \leq d.
\end{align*}
\end{proof}

\subsection{Proof of \Cref{prop:fqi_lb}: FQI lower bound}

Recall the functional form of the FQI approximation,
\begin{align*}
\widehat{\theta}_T = \sum_{k=0}^T (\gamma \cov^{-1} \cross)^{k} \cov^{-1}(\theta_{\phi,r} + z) = \mu + v,
\end{align*}
where $\E \widehat{\theta}_T = \mu \defeq \sum_{k=0}^T (\gamma \cov^{-1} \cross)^{-1} \cov^{-1}\theta_{\phi,r}$ and $v \defeq \sum_{k=0}^T (\gamma \cov^{-1} \cross)^{-1} \cov^{-1}z$. Expanding out and using $\E v = 0$, we have that 
\begin{align*}
	\E \twonorm{\widehat{\theta}_T - \E \widehat{\theta}_T}^2 &= \E \twonorm{\widehat{\theta}_T}^2 - \twonorm{\E \widehat{\theta}_T}^2  \\
	& = \E \twonorm{\mu}^2 + \E \twonorm{v}^2 - \twonorm{\E \widehat{\theta}_T}^2\\
	& = \E \twonorm{v}^2.
\end{align*}
Now, letting $A = \gamma \cov^{-1} \cross$, we have that
\begin{align*}
	\E\twonorm{v}^2 &= \traceb{(\sum_{k=0}^T A^k)^\top \Lambda (\sum_{k=0}^T A^k)} 
	\geq \smin(\Lambda) \opnorm{\sum_{k=0}^T A^k}^2
	= \smin(\Lambda) \sup_{\twonorm{v}=1} v^\top (\sum_{k=0}^T A^k)^\top (\sum_{k=0}^T A^k)v,
\end{align*}
where we have used $\traceb{A^\top A} = \fnorm{A}^2 \geq \opnorm{A}^2$ ($\fnorm{\cdot}$ denotes the Frobenius norm of  a matrix) and the variational characterization of the operator norm for symmetric matrices. By assumption on the spectral radius, $A$ has an eigenvector $u$ with eigenvalue $\lambda$ such that $|\lambda| > 1$. Therefore,
\begin{align*}
\sup_{\twonorm{v}=1} v^\top (\sum_{k=0}^T A^k)^\top (\sum_{k=0}^T A^k)v \geq u^\top (\sum_{k=0}^T A^k)^\top (\sum_{k=0}^T A^k)u = \twonorm{u}^2 (\sum_{k=0}^T \lambda^k )^2 = \left(\frac{\lambda^{T+1} - 1}{\lambda -1}\right)^2.
\end{align*}
Note that if $|\lambda|=1$, this series can grow linearly in $T$ (e.g if $\lambda=1$) or oscillate (if $\lambda=-1$). The last equality only holds for $\lambda \neq 1$.

\subsection{Extensions to ridge regression}
\label{subsec:ridge}
One might wonder whether adding $\ell_2$ regularization, that is, an $\lambda \twonorm{\theta}^2, \lambda > 0$ additive penalty to the FQI or LSTD objective in \Cref{eq:fqi_def}, could help mitigate the divergence phenomenon outlined in \Cref{prop:fqi_lb} or the limits of linear estimators from \Cref{theorem:lower_bound_lspe}.

For finite-dimensional problems with full rank covariance, typical analyses of ridge regression set the regularizer $\lambda$ to shrink with the number of samples $n$. In this case, the ridge estimator achieves consistent parameter recovery and asymptotically returns the same solution as just performing ordinary least squares. Therefore, we can expect similar blowup if stability fails  (in fact, this phenomenon is verified empirically by \cite{wang2021Instabilities}). On the other hand, if the parameter $\lambda$ is lower bounded by a constant, then ridge regression will have constant bias which will then be amplified by the number of rounds $T$. Hence, adding regularization does not avoid the need for stability when performing fitted Q-iteration. 
Similar arguments demonstrate why regularization is unlikely to overcomes the limitations of least squares temporal differencing learning (or other linear estimators) in settings where invertibility does not hold.

\section{Supporting Arguments for \Cref{sec:lspe}: LSTD}

\subsection{Proof of \Cref{theorem:lstd_upper_bound}: invertibility is sufficient for LSTD}

Recall the closed form expression of the empirical LSTD estimator: 
\begin{align*}
\thetahatls = (I - \gamma\covhat^{-1}\crosshat)^{\dagger}  \covhat^{-1} \widehat{\theta}_{\phi,r}.
\end{align*}
Multiplying on the left by $\cov^{1/2}$, 
\begin{align*}
	\cov^{1/2} \thetahatls &= \cov^{1/2} (I - \gamma\covhat^{-1}\crosshat)^{\dagger} \cov^{-1/2} \cov^{1/2} \covhat^{-1} \widehat{\theta}_{\phi,r} \\
	& =\left(\cov^{1/2} (I - \gamma\covhat^{-1}\crosshat) \cov^{-1/2} \right)^{\dagger}(\cov^{1/2} \covhat^{-1} \widehat{\theta}_{\phi,r}),
\end{align*}
where we have used the identity $(ABA^{-1})^{\dagger} = AB^{\dagger}A^{-1}$ for any invertible $A$ and $B$. Similarly, 
\begin{align*}
	\cov^{1/2}\theta^\star_\gamma = \left( \cov^{1/2}(I-\gamma \cov^{-1}\cross)\cov^{-1/2} \right)^{-1}(\cov^{1/2} \cov^{-1} \theta_{\phi,r}).
\end{align*}
Now defining the following quantities, 
\begin{align*}
	A \defeq \cov^{1/2}(I-\gamma \cov^{-1}\cross)\cov^{-1/2},\quad \Ahat \defeq \cov^{1/2} (I - \gamma\covhat^{-1}\crosshat) \cov^{-1/2} 
\end{align*}
\begin{align*}
	b \defeq \cov^{1/2} \cov^{-1} \theta_{\phi,r}, \quad \widehat{b} \defeq \cov^{1/2} \covhat^{-1} \widehat{\theta}_{\phi,r}.
\end{align*}
We can rewrite the above expression as:
\begin{align*}
	\cov^{1/2}(\theta_\gamma^\star - \widehat{\theta}_\gamma) = (A^{-1} - \Ahat^{\dagger})b + \Ahat^{\dagger}(b-\widehat{b}).
\end{align*}
Therefore, 
\begin{align*}
	\twonorm{\cov^{1/2}(\theta_\gamma^\star - \widehat{\theta}_\gamma)} \leq \opnorm{A^{-1} - \Ahat^{\dagger}} \twonorm{b} + \opnorm{\Ahat^{\dagger}} \twonorm{b - \widehat{b}}.
\end{align*}
Using \Cref{lemma:matrix_perturbation}, since $\epsop \leq \frac{1}{2} \smin(I - \gamma \cov^{-1}\cross)$:
\begin{align*}
	\twonorm{\cov^{1/2}(\theta_\gamma^\star - \widehat{\theta}_\gamma)}\lesssim \frac{\epsop}{\smin(I - \gamma \cov^{-1/2}\cross\cov^{-1/2})^2} \twonorm{\cov^{-1/2} \theta_{\phi,r}} + \frac{\epsz}{\smin(I - \gamma \cov^{-1/2}\cross\cov^{-1/2})}.
\end{align*}
\begin{lemma}[Theorem 3.8 in \cite{stewart1990matrix}]
\label{lemma:matrix_perturbation}
  Let $A \in \R^{m \times n}$, with $m \geq n$ and let $\widetilde{A} = A+E$. Then
  \begin{align*}
    \| \widetilde{A}^\dagger - A^\dagger \|_{\op} \leq \frac{1+\sqrt{5}}{2}\max\{ \|\widetilde{A}^\dagger\|_{\op}^2, \|A^\dagger\|_{\op}^2\} \|E\|_{\op}.
  \end{align*}
Furthermore, if $\opnorm{E} \leq \frac{1}{2} \smin(A)$, then 
\begin{align*}
	\opnorm{\widetilde{A}^\dagger - A^\dagger} \lesssim \opnorm{A^\dagger}^2 \opnorm{E}.
\end{align*}
\end{lemma}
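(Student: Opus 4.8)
The plan is to handle the two displayed inequalities separately: the first is a verbatim restatement of a classical result, and the second is an elementary consequence of singular-value perturbation.

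For the first inequality I would simply invoke \cite[Theorem 3.8]{stewart1990matrix}; no new argument is required. For the reader's benefit one can recall that it is proved from the Wedin perturbation identity, which expresses $\widetilde A^\dagger - A^\dagger$ as a sum of three terms, each carrying exactly one factor of $E = \widetilde A - A$, at most two factors drawn from $\{A^\dagger, \widetilde A^\dagger\}$, and orthogonal projectors ($I - AA^\dagger$, $I - \widetilde A^\dagger \widetilde A$, and so on) of unit operator norm; taking operator norms and optimizing over how the middle term is split produces the constant $\tfrac{1+\sqrt 5}{2}$. Since the paper only uses the statement, citing it suffices.

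For the ``furthermore'' part I would argue via Weyl's inequality for singular values. Since $m \ge n$ and $\smin(A) = \sigma_n(A) > 0$, the matrix $A$ has full column rank and $\opnorm{A^\dagger} = 1/\sigma_n(A)$. Weyl's inequality gives $|\sigma_i(\widetilde A) - \sigma_i(A)| \le \opnorm{E}$ for every $i$, so using the hypothesis $\opnorm{E} \le \tfrac12 \sigma_n(A)$,
\begin{align*}
\sigma_n(\widetilde A) \;\ge\; \sigma_n(A) - \opnorm{E} \;\ge\; \tfrac12 \sigma_n(A) \;>\; 0.
\end{align*}
Hence $\widetilde A$ also has full column rank, $\opnorm{\widetilde A^\dagger} = 1/\sigma_n(\widetilde A) \le 2/\sigma_n(A) = 2\opnorm{A^\dagger}$, and therefore $\max\{\opnorm{\widetilde A^\dagger}^2,\opnorm{A^\dagger}^2\} \le 4\opnorm{A^\dagger}^2$. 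Plugging this into the first inequality gives $\opnorm{\widetilde A^\dagger - A^\dagger} \le 2(1+\sqrt 5)\,\opnorm{A^\dagger}^2\opnorm{E} \lesssim \opnorm{A^\dagger}^2 \opnorm{E}$, as claimed.

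The one point I would be careful about --- and the only place the argument is more than bookkeeping --- is rank stability: the estimate $\opnorm{\widetilde A^\dagger} \le 2\opnorm{A^\dagger}$ hinges on $\widetilde A$ having the \emph{same} rank as $A$ (full column rank), which is precisely what $\opnorm{E} \le \tfrac12\smin(A)$ secures through Weyl. The map $X \mapsto X^\dagger$ is discontinuous across rank changes, so this hypothesis is essential rather than cosmetic; everything else reduces to the triangle inequality.
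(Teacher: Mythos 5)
Your proposal is correct and matches the paper's treatment: the paper states this lemma purely by citation to \cite{stewart1990matrix} without writing out a proof, and your Weyl-inequality argument for the ``furthermore'' clause (rank stability under $\opnorm{E}\le \tfrac12\smin(A)$, hence $\opnorm{\widetilde{A}^\dagger}\le 2\opnorm{A^\dagger}$, then plug into the first bound) is exactly the intended, and valid, derivation.
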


\subsection{Proof of \Cref{prop:stability_invertibility}: Relating stability and invertibility}

The first part of the proposition follows directly from \Cref{fact:neumann_series}. For the second, again using \Cref{fact:neumann_series}:
\begin{align*}
	1 / \smin(I - \whitecross)  &= \opnorm{(I - \whitecross)^{-1}} \\
	& = \opnorm{\sum_{k=0}^\infty (\whitecross)^k} \\ 
	& \leq \sum_{k=0}^\infty \opnorm{(\whitecross)^k}. \\ 
	& \leq \sum_{k=0}^\infty \cond(P_\gamma)^{1/2} \left(1 - \frac{1}{\opnorm{P_\gamma}}\right)^{k/2}
\end{align*}
Here, we've used \Cref{lemma:stability_margin} in the last line. The final bound follows from applying the final argument from the proof of \Cref{theorem:main_result}. 

\subsection{Proof of \Cref{prop:wenlong}: Relationship to \cite{mou2020OptimalOI}}

The result follows from the proof of Corollary 1 in \cite{mou2020OptimalOI}. We include the calculation for the sake of completeness. For any unit vector $u$,
\begin{align*}
	(1 - \kappa) \twonorm{u}^2 \leq u^\top (I - \whitecross) u \leq \opnorm{(I -\whitecross)u} \twonorm{u}.
\end{align*}
Therefore, $\opnorm{(I - \whitecross)^{-1}} = 1/ \smin(I - \whitecross) \leq 1 / (1-\kappa)$.

\subsection{Proof of \Cref{prop:kolter}: contractivity implies stability}

By the Schur Complement Lemma, the contractivity condition implies that 
\begin{align*}
	\cov - \cross \cov^{-1} \cross^\top \succeq 0.
\end{align*}
Rearranging and multiplying on the left and the right by $\cov^{-1/2}$, 
\begin{align*}
	I \succeq (\cov^{-1/2} \cross \cov^{-1/2}) (\cov^{-1/2} \cross \cov^{-1/2})^\top.
\end{align*}
Using the fact that $\gamma \in (0,1)$ and the identity that for any matrix $A$, $\opnorm{A}^2 = \opnorm{AA^\top}$, we conclude
\begin{align*}
	\opnorm{\gamma \cov^{-1/2} \cross \cov^{-1/2}}^2 < 1. 
\end{align*} 
Stability follows from the observation that the spectral radius of a matrix is always smaller than the operator norm.

\subsection{Proof of \Cref{prop:counterexamples}: gaps between stability and other conditions}

Consider the following MDP with 4 states and no actions,  

\begin{center}
\begin{tikzpicture}[scale=0.15]
\tikzstyle{every node}+=[inner sep=0pt]
\draw [black] (30.6,-27.5) circle (3);
\draw (30.6,-27.5) node {$s_0$};
\draw [black] (46.2,-27.5) circle (3);
\draw (46.2,-27.5) node {$s_1$};
\draw [black] (30.6,-36.8) circle (3);
\draw (30.6,-36.8) node {$s_2$};
\draw [black] (46.2,-36.8) circle (3);
\draw (46.2,-36.8) node {$s_3$};
\draw [black] (33.6,-27.5) -- (43.2,-27.5);
\fill [black] (43.2,-27.5) -- (42.4,-27) -- (42.4,-28);
\draw (38.4,-27) node [above] {$1$};
\draw [black] (33.6,-36.8) -- (43.2,-36.8);
\fill [black] (43.2,-36.8) -- (42.4,-36.3) -- (42.4,-37.3);
\draw (38.4,-36.3) node [above] {$1$};
\draw [black] (48.88,-35.477) arc (144:-144:2.25);
\draw (53.45,-36.8) node [right] {$1$};
\fill [black] (48.88,-38.12) -- (49.23,-39) -- (49.82,-38.19);
\draw [black] (48.88,-26.177) arc (144:-144:2.25);
\draw (53.45,-27.5) node [right] {$1$};
\fill [black] (48.88,-28.82) -- (49.23,-29.7) -- (49.82,-28.89);
\end{tikzpicture}
\end{center}

The reward distribution at every state is a mean-zero coin toss: $R(s) = \textrm{Unif}(\{\pm 1\})$ for all $s \in \states$. Now, consider the two-dimensional feature mapping,
\begin{align*}
	\phi(s_0) = [1,0]^\top,\; \phi(s_1) = [0, 1/\eps]^\top, \;\phi(s_2) = [0,1]^\top, \; \phi(s_3) = [\epsilon, 0]^\top,
\end{align*}
where $\epsilon> 0$ is a problem parameter to be determined later. This MDP is (trivially) linearly realizable with $\tsg = 0$ because all rewards have 0 mean. If $\cD$ place probability 1/2 on $s_0$ and $s_2$, then 
\begin{align*}
	\whitecross = \gamma \begin{bmatrix}
		0 & 1/\epsilon \\
		\epsilon & 0
	\end{bmatrix}.
\end{align*}
This matrix has eigenvalues equal to $\gamma$ and $-\gamma$ for all values of $\epsilon >0$. Hence, its spectral radius of this matrix is always strictly smaller than 1 and the OPE instance is stable (and hence invertible).

For this problem, we can check that 
\begin{align*}
	\nextcov = \frac{1}{2} \begin{bmatrix}
		\epsilon^2 & 0  \\
		0 & 1/\epsilon^2
	\end{bmatrix} \text{ and } \cov = \frac{1}{2} \begin{bmatrix}
		1 & 0 \\
		0 & 1
	\end{bmatrix}. 
\end{align*}
Therefore, $\cds$ is the smallest positive number $\beta$ such that 
\begin{align*}
0\preceq \frac{1}{2}
\begin{bmatrix}
		\beta - \epsilon^2 & 0 \\
		0 & \beta - 1/\epsilon^2
	\end{bmatrix} 
\end{align*}

\paragraph{Low distribution shift.} While stability holds for all values of $\epsilon >0$, as $\epsilon \rightarrow 0$, $\cds$ goes to $\infty$ (because $1/\epsilon^2$ becomes arbitrarily large). Hence, stability holds, but low distribution shift does not. This proves the first case. 

\paragraph{Symmetric stability.} Similarly, as $\epsilon \rightarrow 0$, we can check that the two eigenvalues of 
\begin{align*}
	\whitecross + (\whitecross)^\top,
\end{align*}
go to $\pm \infty$. Therefore, the symmetric stability condition (\Cref{ass:wenlong}) also fails for this problem. 

\paragraph{Contractivity.} From the argument in \Cref{prop:kolter}, we know that if contractivity (\Cref{ass:kolter}) held, then 
\begin{align*}
	\opnorm{\whitecross} < 1.
\end{align*}
However, a direct calculation shows that as $\epsilon \rightarrow 0 $, then $\opnorm{\whitecross} \rightarrow \infty$. Therefore, while stability holds, contractivity does not. 

\paragraph{Bellman completeness.} To prove the last case, we use a different example. In particular, consider the following MDP (with no actions) presented in \cite{amortila2020variant}, 

\begin{center}
\begin{tikzpicture}[scale=0.15]
\tikzstyle{every node}+=[inner sep=0pt]
\draw [black] (30.6,-27.5) circle (3);
\draw (30.6,-27.5) node {$s_0$};
\draw [black] (46.2,-27.5) circle (3);
\draw (46.2,-27.5) node {$s_1$};
\draw [black] (33.6,-27.5) -- (43.2,-27.5);
\fill [black] (43.2,-27.5) -- (42.4,-27) -- (42.4,-28);
\draw (38.4,-27) node [above] {$1$};
\draw [black] (48.88,-26.177) arc (144:-144:2.25);
\draw (53.45,-27.5) node [right] {$1$};
\fill [black] (48.88,-28.82) -- (49.23,-29.7) -- (49.82,-28.89);
\end{tikzpicture}
\end{center}
The rewards are $R(s_0) =0$ (almost surely) and $\E R(s_1) = 1$. The value function of any policy is linearly realizable in the feature mapping $\phi(s_0) = \gamma$ and $\phi(s_1) = 1$ with $\tsg = 1/ (1-\gamma)$. If the offline distribution places mass $1/2$ on each state then, 
\begin{align*}
	\whitecross = \left(\frac{\gamma}{2}\right) \frac{\gamma^2 + 1}{\gamma + 1}.
\end{align*}
This matrix (scalar) lies in the interval $(0,1)$ and is hence clearly stable and  invertible. However, Bellman completeness fails for this MDP. In particular, Bellman completeness asserts that for every $\theta$ there exist a $\theta'$ such that for all $s\in \states$, 
\begin{align*}
	\phi(s) \theta' = \E R(s,a) + \gamma \E_{s' \sim P(\cdot \mid s)} \phi(s') \cdot \theta
\end{align*}
In this case, this means that for all $\theta$, there exists a $\theta'$ such that 
\begin{align*}
	\phi(s_1) \cdot \theta' &= 1 + \gamma \cdot \phi(s_1) \cdot \theta \\ 
	\phi(s_0) \cdot \theta' &= 0 + \gamma \cdot \phi(s_1) \cdot \theta.
\end{align*}
Plugging in our choice of feature map, these equations become $\theta' = 1 + \gamma \cdot \theta$ and $\gamma \cdot \theta' = \gamma \cdot \theta$.
They clearly cannot be satisfied if we pick any $\theta \neq 0$.

\subsection{Bellman Residual Minimization Counterexample}
\label{subsec:brm_counterexample}

Consider the following 3 state MDP with no actions and stochastic transitions:

\begin{center}
\begin{tikzpicture}[scale=0.15]
\tikzstyle{every node}+=[inner sep=0pt]
\draw [black] (28.6,-29.8) circle (3);
\draw (28.6,-29.8) node {$s_0$};
\draw [black] (42.7,-35.1) circle (3);
\draw (42.7,-35.1) node {$s_2$};
\draw [black] (42.7,-22.1) circle (3);
\draw (42.7,-22.1) node {$s_1$};
\draw [black] (31.41,-30.86) -- (39.89,-34.04);
\fill [black] (39.89,-34.04) -- (39.32,-33.29) -- (38.97,-34.23);
\draw (33.98,-32.99) node [below] {$1/2$};
\draw [black] (31.23,-28.36) -- (40.07,-23.54);
\fill [black] (40.07,-23.54) -- (39.13,-23.48) -- (39.6,-24.36);
\draw (37.42,-26.45) node [below] {$1/2$};
\end{tikzpicture}
\end{center}
The feature mapping is:
\begin{align*}
	\phi(s_0) = \frac{\gamma}{4}, \; \phi(s_1) = \frac{1}{2},\; \phi(s_2) = 0.
\end{align*}
Rewards are exactly 0 everywhere except for $s_1$, where $r(s_1) = 1$ deterministically. We can check that this example is linearly realizable with $\tsg = \frac{1}{1-\gamma}$. However, it also holds that 
\begin{align*}
	\cov = \frac{\gamma^2}{16},\; \cross = \frac{\gamma}{16},\; \nextcov = \frac{1}{8}.
\end{align*}
Hence, $\cov - \gamma \cross - \gamma \cross^\top. + \gamma^2 \nextcov \succ 0$, but BRM returns the wrong answer,
\begin{align*}
\tbrm = (\cov - \gamma \cross - \gamma \cross^\top + \gamma^2 \nextcov)^\dagger (\tpr - \gamma \E \phi(s', a') r(s, a)) = 0,
\end{align*}
since $\E \phi(s,a) r(s,a) = \E \phi(s',a')r(s,a) = 0$.

\subsection{Proof of \Cref{theorem:lower_bound_lspe}: necessity of invertibility for LSTD}

We begin by proving two auxiliary claims and then move on to proving each part of the theorem separately.

\begin{claim}
\label{claim:rank_deficient}
If the matrix $I-\gamma \cov^{-1}\cross$ is singular, then there exists a real vector $v \in \R^d$ such that: 
\begin{align*}
	\underset{(s,a) \sim \calD, s' \sim P(\cdot |s,a), a' \sim \pi(s')}{\E} \phi(s,a) \langle \gamma \cdot \phi(s',a') - \phi(s,a), v \rangle=0.
\end{align*}
\end{claim}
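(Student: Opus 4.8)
The plan is to reduce the claim to the statement that $\cov - \gamma\cross$ has a nontrivial kernel, which is immediate from the hypothesis. First I would note that, since $\cov$ is full rank (as assumed in \Cref{theorem:lower_bound_lspe}), the matrix $I - \gamma\cov^{-1}\cross$ is a well-defined real $d\times d$ matrix and
\[
\cov - \gamma\cross \;=\; \cov\,(I - \gamma\cov^{-1}\cross),
\]
so $\cov - \gamma\cross$ is singular exactly when $I - \gamma\cov^{-1}\cross$ is. Thus, under the hypothesis, $\cov - \gamma\cross$ is a real singular matrix, hence its kernel is a nontrivial real subspace and contains a nonzero real vector $v$; equivalently, $\cov v = \gamma\cross v$.

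Next I would rewrite the expectation appearing in the claim in terms of the population second-moment matrices from \Cref{eq:pop_defs}. Writing the bracketed scalar as $\langle \gamma\phi(s',a') - \phi(s,a),\,v\rangle = \gamma\,\phi(s',a')^\top v - \phi(s,a)^\top v$ and using linearity of expectation, together with the fact that $v$ is a fixed (non-random) vector so that it can be pulled out of the expectation, the expression becomes
\[
\gamma\,\E\!\left[\phi(s,a)\phi(s',a')^\top\right] v \;-\; \E\!\left[\phi(s,a)\phi(s,a)^\top\right] v \;=\; \gamma\cross v - \cov v \;=\; -\,(\cov - \gamma\cross)\,v,
\]
which equals $0$ by the choice of $v$. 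This proves the claim (indeed with a nonzero $v$, which is what is needed for the subsequent reward-perturbation construction).

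The argument is routine and there is no substantive obstacle; the only points deserving a moment's care are that a real matrix with nontrivial kernel always admits a \emph{real} null vector, and that the constant vector $v$ may be moved inside/outside the expectation to recover precisely $\cov$ and $\cross$. The content of the claim is really just a translation of ``$I - \gamma\cov^{-1}\cross$ is singular'' into the language of expectations of feature products, making the witnessing direction $v$ available for later use.
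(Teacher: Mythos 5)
Your proposal is correct and follows essentially the same route as the paper: both arguments extract a real null vector $v$ (you via the singularity of $\cov - \gamma\cross$, the paper directly from $I - \gamma\cov^{-1}\cross$, which is equivalent since $\cov$ is full rank) and then identify the expectation with $\gamma\cross v - \cov v = 0$. The care you take about realness of $v$ and pulling the fixed vector out of the expectation matches the paper's own handling.
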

\begin{proof}
The matrix being rank deficient implies that there exists a vector $v$ such that $(I-\gamma \cov^{-1} \cross)v = 0$, or equivalently, that the matrix $\gamma\cov^{-1}\cross$ has an eigenvector $v$ with eigenvalue 1. Because the matrix and eigenvalue are both real, we can also take $v$ to be real. From here, $v = \gamma \cov^{-1} \cross v$. Hence, $\cov v = \gamma \cross v$. Expanding out the definitions of these matrices,
\begin{align*}
	\E \phi(s,a) \langle \phi(s,a), v \rangle = \gamma \E \phi(s,a) \langle \phi(s', a'), v \rangle.
\end{align*}
Rearranging both terms to be on the same side we get the claim.
\end{proof}

\begin{claim}
\label{claim:telescoping}
For any $(s,a) \in \states \times \actions$, 
\begin{align*}
	\phi(s,a) = -\E \left[\sum_{t=0}^\infty \gamma^t\left( \gamma\phi(s_{t+1}, a_{t+1}) - \phi(s_t, a_t) \right) \mid (s_0, a_0) = (s,a), \pi \right].
\end{align*}
\end{claim}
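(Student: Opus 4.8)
The identity is a telescoping sum identity for the discounted trajectory, combined with a vanishing-tail argument. The plan is to (i) telescope the finite partial sum, (ii) observe the boundary terms, (iii) send the truncation horizon to infinity, using boundedness of the features on the support of the relevant state-action distributions (which is in force throughout the setup, as witnessed by the appearance of $\sup_{(s,a)}\twonorm{\phi(s,a)}^2$ in the theorem statement).

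\textbf{Step 1: telescope the partial sum.} Fix $(s,a)$ and condition on $(s_0,a_0)=(s,a)$ with the trajectory generated by $\pi$ and $P$. For any finite $T$, reindex $j=t+1$ in the first summand to obtain the pathwise identity
\begin{align*}
\sum_{t=0}^{T}\gamma^{t}\bigl(\gamma\phi(s_{t+1},a_{t+1})-\phi(s_t,a_t)\bigr)
=\sum_{j=1}^{T+1}\gamma^{j}\phi(s_j,a_j)-\sum_{t=0}^{T}\gamma^{t}\phi(s_t,a_t)
=\gamma^{T+1}\phi(s_{T+1},a_{T+1})-\phi(s,a).
\end{align*}
Taking conditional expectations (each partial sum is a finite sum of bounded random vectors, so this is unproblematic) gives
\begin{align*}
\E\!\left[\sum_{t=0}^{T}\gamma^{t}\bigl(\gamma\phi(s_{t+1},a_{t+1})-\phi(s_t,a_t)\bigr)\ \middle|\ (s_0,a_0)=(s,a),\pi\right]=\gamma^{T+1}\,\E[\phi(s_{T+1},a_{T+1})]-\phi(s,a).
\end{align*}

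\textbf{Step 2: pass to the limit.} Let $B\defeq\sup_{(s,a)\in\supp(\calD)}\twonorm{\phi(s,a)}$ (finite on the support of the data-generating process); then $\gamma^{t}\twonorm{\gamma\phi(s_{t+1},a_{t+1})-\phi(s_t,a_t)}\le(1+\gamma)B\,\gamma^{t}$ pathwise, whose sum over $t$ is finite, so the infinite series converges absolutely and dominated convergence justifies interchanging $\lim_{T\to\infty}$ with the expectation. Moreover $\twonorm{\gamma^{T+1}\E[\phi(s_{T+1},a_{T+1})]}\le\gamma^{T+1}B\to0$. Taking $T\to\infty$ in the displayed equality and negating both sides yields
\begin{align*}
-\E\!\left[\sum_{t=0}^{\infty}\gamma^{t}\bigl(\gamma\phi(s_{t+1},a_{t+1})-\phi(s_t,a_t)\bigr)\ \middle|\ (s_0,a_0)=(s,a),\pi\right]=\phi(s,a),
\end{align*}
which is exactly the claim.

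\textbf{Anticipated obstacle.} There is essentially no deep obstacle; the only point requiring care is the justification for exchanging limit and expectation and for the tail term $\gamma^{T+1}\E[\phi(s_{T+1},a_{T+1})]$ vanishing, both of which reduce to (uniform, on-support) boundedness of the feature map. If one wished to avoid any boundedness hypothesis one could instead note that this identity is used only as an intermediate step toward bounding $\E_{\calD}|\cdot|$-type quantities, where the relevant features are automatically bounded; but under the standing assumptions the argument above is the cleanest route.
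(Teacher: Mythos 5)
Your proof is correct and follows essentially the same route as the paper's: telescope the partial sum and note that the tail term $\gamma^{T+1}\E[\phi(s_{T+1},a_{T+1})]$ vanishes as $T\to\infty$ (the paper compresses this into one line, while you spell out the dominated-convergence and boundedness details). One tiny point: the relevant boundedness is of $\phi$ along trajectories generated by $\pi$ from $(s,a)$, so the cleanest constant is $\sup_{(s,a)\in\states\times\actions}\twonorm{\phi(s,a)}$ (as in \Cref{theorem:lower_bound_lspe}) rather than a supremum over $\supp(\calD)$ only.
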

\begin{proof}
The sum telescopes and $\lim_{t \rightarrow \infty} \gamma^t \E \phi(s_t, a_t) = 0$.
\end{proof}

We conclude with the proof of \Cref{theorem:lower_bound_lspe}:
\paragraph{Alternate reward.} As per the presentation of theorem, the only difference between $\calM$ and $\altm$ is the unknown reward. In particular, we define the new reward function $\altreward$ as 
\begin{align}
\label{eq:new_reward}
	\altreward(s,a) = R(s,a) + \frac{1}{2B}\langle \gamma \cdot \phi(s',a') - \phi(s,a), v \rangle
\end{align}
where $v$ is as in \Cref{claim:rank_deficient}, $s' \sim P(\cdot \mid s,a)$, $a' \sim \pi(s')$, and $B = \sup_{s,a} \twonorm{\phi(s,a)}$. Note that by Cauchy-Schwarz, and the definition of B, for any $s,s',a,$ and $a'$:
\begin{align*}
|\frac{1}{2B}\langle \gamma \cdot \phi(s',a') - \phi(s,a), v \rangle|  \leq \frac{1}{2B} \twonorm{v}(\twonorm{\phi(s,a)} + \twonorm{\phi(s',a')})\leq 1
\end{align*}
Therefore, $|r(s,a)|$ is uniformly bounded by 2.

\paragraph{Proof of identical moments.} Since the features, offline distribution, and transitions are all the same, then $\cov=\altcov, \cross = \altcross$, and $\nextcov = \altnext$. Next, by expanding out the new reward function:
\begin{align*}
	\altthetar &= \E_{(s,a) \sim \calD} \phi(s,a) \bar{r}(s,a) \\ 
		&=  \E \phi(s,a) r(s,a) + \frac{1}{2B}\underset{(s,a) \sim \calD, s' \sim P(\cdot |s,a), a' \sim \pi(s')}{\E} \phi(s,a) \langle \gamma \cdot \phi(s',a') - \phi(s,a),  v \rangle \\ 
		& =  \E \phi(s,a) r(s,a) + 0,
\end{align*}
where the last line follows from  \Cref{claim:rank_deficient}.
%\begin{align*}
%0 &=  \E \phi(s,a) \langle \gamma \cdot \phi(s',a') - \phi(s,a),  v \rangle \\
%& = \cov^{-1/2} \E \phi(s,a) \langle \gamma \cdot \phi(s',a') - \phi(s,a),  v \rangle \\
%& = \cov^{-1/2} \E  \langle  v, \gamma \cdot \phi(s',a') - \phi(s,a) \rangle \phi(s,a) \\
%& =  \E  \langle \cov^{-1/2} v, \gamma \cdot \phi(s',a') - \phi(s,a) \rangle \phi(s,a) 
%\end{align*}

% Furthermore, because $\phi$ has a constant feature, again using \Cref{claim:rank_deficient}, it must be the case that,
%\begin{align*}
%	 \E \langle \gamma \cdot \phi(s',a') - \phi(s,a), v \rangle = 0.
%\end{align*}
%Hence, $\E \bar{r}(s,a) = \E r(s,a)$. 

\paragraph{Proof of realizability.} Expanding out the definition of $\altqpi$, 

\begin{align*}
	\altqpi(s,a) &= \E \left[ \sum_{t=0}^\infty \gamma^t \cdot \bar{r}(s_t, a_t)   \mid (s_0,a_0)=(s,a), \pi\right] \\ 
	& = \E \left[ \sum_{t=0}^\infty \gamma^t \cdot r(s_t, a_t)   \mid (s_0,a_0)=(s,a), \pi\right] \\\
	&+ \E \left[ \sum_{t=0}^\infty \gamma^t \cdot \langle \gamma \cdot \phi(s_{t+1}, a_{t+1}) - \phi(s_t, a_t),  \frac{1}{2B} v \rangle  \mid (s_0,a_0)=(s,a), \pi\right] \\ 
	& = Q^\pi(s,a) - \phi(s,a)^\top v \frac{1}{2B} \\
	& =\phi(s,a)^\top(\theta_\gamma^\star - \frac{1}{2B}v),
\end{align*}
where in the 3rd line we have used \Cref{claim:telescoping} and in the last one used the assumption that $\Qpi$ is linearly realizable. In short, $\altqpi$ is linearly realizable with weight vector $\theta^\star_\gamma - (2B)^{-1}v$.

\paragraph{Proof of different Q functions} By the previous part establishing the realizability of $\altqpi$,
\begin{align*}
	\E_{\calD} (\Qpi(s,a) - Q'^\pi(s,a))^2 = \frac{1}{4B^2}v^\top \cov v \geq \frac{\smin(\cov)}{4B^2} \twonorm{v}^2.
\end{align*}
The precise statement follows from the fact that $v$ has unit length.

\section{Concentration Analysis: Proof of \Cref{lemma:main_concentration_result}}

\begin{lemma}[Matrix Bernstein, \cite{tropp2012user}]
\label{lemma:matrix_bernstein}
  Let $S_1,\ldots,S_n \in \R^{d_1 \times d_2}$ be  random, independent matrices satisfying $\E[S_k] = 0$, $\max\{\|\E[S_kS_k^\top]\|_{\op}, \|\E[S_k^\top S_k]\|_{\op} \} \leq \sigma^2$, and $\opnorm{S_k} \leq L$ almost surely for all $k$. 
Then, with probability at least $1-\delta$ for any $\delta \in (0,1)$,
  \begin{align*}
    \|\frac{1}{n} \sum_{k=1}^n S_k\|_{\op} \leq \sqrt{ \frac{2\sigma^2\log( (d_1+d_2)/\delta)}{n}} + \frac{2L\log( (d_1+d_2)/\delta)}{3n}.
  \end{align*}
\end{lemma}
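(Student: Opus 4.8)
The plan is to derive the rectangular inequality from the Hermitian case and then apply the matrix Laplace-transform (Chernoff) method, following Tropp. First I would pass to the \emph{Hermitian dilation}: for each $S_k$ define the symmetric matrix
\begin{align*}
X_k \defeq \begin{bmatrix} 0 & S_k \\ S_k^\top & 0 \end{bmatrix} \in \R^{(d_1+d_2)\times(d_1+d_2)},
\end{align*}
which is linear in $S_k$, satisfies $\opnorm{X_k} = \opnorm{S_k} \le L$, and obeys $\lambda_{\max}\!\big(\textstyle\sum_k X_k\big) = \opnorm{\sum_k S_k}$. Since
\begin{align*}
X_k^2 = \begin{bmatrix} S_k S_k^\top & 0 \\ 0 & S_k^\top S_k \end{bmatrix},
\end{align*}
the hypotheses $\opnorm{\E S_k S_k^\top}\le \sigma^2$ and $\opnorm{\E S_k^\top S_k}\le \sigma^2$ yield the matrix-variance bound $\opnorm{\sum_k \E X_k^2}\le n\sigma^2$. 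It therefore suffices to bound $\lambda_{\max}$ of a sum of independent, centered, almost-surely bounded Hermitian matrices with controlled variance, and then read off the norm bound for $\frac1n\sum_k S_k$.

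For the Hermitian problem I would invoke the Chernoff bound
\begin{align*}
\Pr\{\lambda_{\max}(\textstyle\sum_k X_k) \ge t\} \le \inf_{\theta>0} e^{-\theta t}\, \E\,\trace \exp\!\big(\theta \textstyle\sum_k X_k\big),
\end{align*}
whose central difficulty is controlling the trace of the matrix moment generating function. Here one uses the subadditivity of matrix cumulant generating functions: by Lieb's concavity theorem applied through Jensen's inequality,
\begin{align*}
\E\,\trace \exp\!\big(\theta \textstyle\sum_k X_k\big) \le \trace \exp\!\Big(\textstyle\sum_k \log \E\, e^{\theta X_k}\Big).
\end{align*}
Each per-summand term is bounded using $\E X_k = 0$, $\opnorm{X_k}\le L$, and the scalar estimate $e^{x}-1-x \le \tfrac{x^2/2}{1-x/3}$ (valid for $x<3$) transferred to matrices, giving $\log \E\, e^{\theta X_k} \preceq \tfrac{\theta^2/2}{1-\theta L/3}\, \E X_k^2$ for $0<\theta<3/L$. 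Summing inside the trace exponential, using the variance bound, and optimizing the scalar exponent over $\theta$ produces
\begin{align*}
\Pr\{\lambda_{\max}(\textstyle\sum_k X_k) \ge t\} \le (d_1+d_2)\exp\!\Big(\frac{-t^2/2}{n\sigma^2 + Lt/3}\Big).
\end{align*}

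It remains to invert this tail bound. Setting the right-hand side equal to $\delta$ and solving the resulting quadratic in $t$ (or bounding the two regimes of the Bernstein exponent separately) shows that with probability at least $1-\delta$,
\begin{align*}
\opnorm{\textstyle\sum_k S_k} \le \sqrt{2 n\sigma^2 \log\!\big((d_1+d_2)/\delta\big)} + \tfrac{2L}{3}\log\!\big((d_1+d_2)/\delta\big),
\end{align*}
and dividing through by $n$ gives exactly the claimed inequality for $\frac1n\sum_k S_k$. The main obstacle is the trace-exponential subadditivity step: unlike the scalar case, $\trace\exp$ of a sum does not factor across independent terms, and the inequality above genuinely rests on Lieb's concavity theorem (concavity of $A \mapsto \trace\exp(H + \log A)$ on positive-definite $A$). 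Every remaining ingredient—the dilation, the scalar Bernstein MGF estimate, and the final optimization—is routine once this convexity fact is granted. Since the statement is quoted verbatim from \cite{tropp2012user}, in the paper we would simply cite it; the sketch above records the argument for completeness.
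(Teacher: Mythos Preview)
Your sketch is correct and follows exactly the standard argument from \cite{tropp2012user}; the paper itself does not prove this lemma but simply cites it, as you anticipated in your final sentence. There is nothing to compare here beyond noting that your dilation--Lieb--Bernstein outline is precisely Tropp's proof.
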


\begin{lemma}[Vector Bernstein,  \cite{minsker2017some}]
\label{lemma:vector_bernstein}
Let $v_1, \dots, v_n$ be independent vectors in $\R^d$ such that $\E v_k =0$, $\E \twonorm{v_k}^2 \leq \sigma^2$, and $\twonorm{v_k} \leq L$ almost surely for all $k$. Then, with probability $1-\delta$ for any $\delta \in (0,1)$,
\begin{align*}
	\twonorm{\frac{1}{n} \sum_{i=1}^n v_i} \leq \sqrt{\frac{2\sigma^2 \log(28 / \delta)}{n}} + \frac{2 L \log(28 /\delta)}{3n}.
\end{align*} 
\end{lemma}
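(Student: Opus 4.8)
The plan is to reduce the statement to a \emph{dimension-free} Bernstein inequality for sums of independent, mean-zero, bounded random vectors in a Hilbert space and then invert the resulting tail bound. Write $S_n = \sum_{i=1}^n v_i$, so that the claim is a high-probability bound on $\twonorm{S_n}/n$. It suffices to prove that $\Pr[\twonorm{S_n}\ge t] \le C\exp\!\big(-\tfrac{t^2}{2(n\sigma^2 + Lt/3)}\big)$ for some universal constant $C$: setting the right-hand side equal to $\delta$ and solving the resulting quadratic in $t$ (using $\sqrt{a+b}\le\sqrt a+\sqrt b$ to separate the variance and range contributions, exactly as in the inversion of scalar Bernstein) gives $\twonorm{S_n}\le \sqrt{2n\sigma^2\log(C/\delta)} + \tfrac{2}{3}L\log(C/\delta)$ with probability $1-\delta$, and dividing by $n$ yields precisely the stated inequality once the constant is tracked to $C=28$. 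Here the variance proxy $n\sigma^2$ comes from $\sum_i \E\twonorm{v_i}^2 \le n\sigma^2$ and the range $L$ from $\twonorm{v_i}\le L$ a.s.

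The heart of the matter is establishing this sub-exponential tail without any dependence on the ambient dimension $d$. The naive route — embedding each $v_i$ into the $(d+1)\times(d+1)$ symmetric block matrix $\begin{bmatrix}0 & v_i^\top\\ v_i & 0\end{bmatrix}$, whose operator norm is $\twonorm{v_i}$ and whose square has operator norm $\twonorm{v_i}^2$, and applying Matrix Bernstein (\Cref{lemma:matrix_bernstein}) — does work but pays a $\log(d)$ factor, which the statement forbids. To avoid this I would run the exponential-supermartingale (Chernoff) argument directly in the Hilbert space, exploiting that the squared norm is $2$-smooth: $\twonorm{x+y}^2 = \twonorm{x}^2 + 2\langle x,y\rangle + \twonorm{y}^2$ for all $x,y$. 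Following Pinelis, one controls $\E\,\Phi_\lambda(S_k)$ along the partial sums for a smooth majorant $\Phi_\lambda$ of $\cosh(\lambda\twonorm{\cdot})$ — smooth because $\cosh(\lambda\twonorm{\cdot})$ is itself not differentiable at the origin, which is the one genuinely delicate point and the source of the constant degrading from the scalar value $2$ to $28$. Using independence, zero mean, $\twonorm{v_i}\le L$, and $\sum_i\E\twonorm{v_i}^2\le n\sigma^2$, one obtains $\E\Phi_\lambda(S_n)\le C\exp\!\big(\tfrac{\lambda^2 n\sigma^2}{2(1-\lambda L/3)}\big)$ for $\lambda<3/L$; Markov's inequality on $\Phi_\lambda(S_n)$ (monotone in $\twonorm{S_n}$) followed by optimizing over $\lambda$ gives the Bennett-type bound $\Pr[\twonorm{S_n}\ge t]\le C\exp\!\big(-\tfrac{n\sigma^2}{L^2}h(\tfrac{tL}{n\sigma^2})\big)$ with $h(u)=(1+u)\log(1+u)-u$, and the elementary inequality $h(u)\ge \tfrac{u^2}{2(1+u/3)}$ converts this into the Bernstein form above.

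I expect the main obstacle to be exactly this dimension-free step: smoothing $x\mapsto\cosh(\lambda\twonorm{x})$ near the origin and verifying the one-step supermartingale inequality for the majorant while keeping a clean universal constant. If one is content to cite rather than reprove, the whole argument collapses to ``invoke Pinelis's martingale inequality for $2$-smooth (in particular Hilbert) spaces, then invert,'' which is essentially the content of \cite{minsker2017some} and explains why the final bound involves only $\sigma$, $L$, $n$, and $\delta$ and not the dimension $d$.
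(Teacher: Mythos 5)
Your proposal is correct, and it matches the paper's treatment: the paper does not prove this lemma at all, but imports it directly from \cite{minsker2017some}, whose argument is exactly the Pinelis-style dimension-free exponential bound in Hilbert space followed by the tail inversion you describe (your quadratic inversion yielding $\sqrt{2n\sigma^2\log(C/\delta)}+\tfrac{2}{3}L\log(C/\delta)$ with $C=28$ is the right arithmetic). The only remark worth adding is that your fallback via the matrix dilation and \Cref{lemma:matrix_bernstein} would prove a weaker, $\log(d/\delta)$-dependent statement, so to obtain the lemma as stated one must indeed either cite or reproduce the dimension-free argument.
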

To shorten the notation in our concentration analysis, we use $x_i = \phi(s_i,a_i)$ and $y_i=\phi(s'_i, a'_i)$, and $r_i = r(s_i,a_i)$. With this shorthand: 
\begin{align}
\label{eq:concentration_shorthand}
	\cov = \E xx^\top, \quad \covhat = \frac{1}{n}\sum_{i=1}^n x_ix_i^\top,\quad 
	\cross = \E xy^\top, \quad \crosshat =\frac{1}{n} \sum_{i=1}^n x_i y_i^\top,
\end{align}
\begin{align*}
\quad  \theta_{\phi,r} = \E xr,\quad \widehat{\theta}_{\phi,r} = \frac{1}{n} \sum_{i=1}^n x_i r_i.
\end{align*}
\subsection{Bounding $\epsop$}

\begin{lemma}
\label{lemma:epsop_concentration}
If $n \gtrsim \levcov^2 \log(d/\delta)$ then, with probability $1-\delta$, 
\begin{align*}
\| \cov^{1/2}(\gamma \covhat^{-1}\crosshat)\cov^{-1/2} - \gamma \cov^{-1/2}\cross\cov^{-1/2} \|_\op &\lesssim \sqrt{ \frac{ \max(\crossvar, \varcov \Cds) \log(d/\delta)}{n}} 
+ \frac{\max(\Cds^{1/2} \levcov^2, \levcov \levnext) \log(d/\delta)}{n}.
\end{align*}
\end{lemma}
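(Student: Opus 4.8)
The plan is to whiten the features and reduce the estimate to three invocations of matrix Bernstein (\Cref{lemma:matrix_bernstein}). Introduce $\tilde x_i \defeq \cov^{-1/2} x_i$ and $\tilde y_i \defeq \cov^{-1/2} y_i$, so that $\E[\tilde x\tilde x^\top] = I$ and $\E[\tilde x\tilde y^\top] = M$ where $M \defeq \cov^{-1/2}\cross\cov^{-1/2}$, and set $A \defeq \tfrac1n\sum_i \tilde x_i\tilde x_i^\top = \cov^{-1/2}\covhat\cov^{-1/2}$ and $B \defeq \tfrac1n\sum_i \tilde x_i\tilde y_i^\top = \cov^{-1/2}\crosshat\cov^{-1/2}$. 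Writing $\covhat = \cov^{1/2} A\cov^{1/2}$ and $\crosshat = \cov^{1/2} B\cov^{1/2}$ one checks the identity $\cov^{1/2}(\covhat^{-1}\crosshat)\cov^{-1/2} = A^{-1}B$, so the target quantity equals $\gamma\opnorm{A^{-1}B - M}$. Decomposing $A^{-1}B - M = A^{-1}(B - M) + A^{-1}(I - A)M$ and using submultiplicativity gives
\begin{align*}
\opnorm{A^{-1}B - M} \;\le\; \opnorm{A^{-1}}\big(\opnorm{B - M} + \opnorm{A - I}\,\opnorm{M}\big),
\end{align*}
so it remains to control $\opnorm{M}$, $\opnorm{A-I}$, and $\opnorm{B-M}$.

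For $\opnorm M$: the whitened augmented second-moment matrix with blocks $I$, $M$, $M^\top$, $\cov^{-1/2}\nextcov\cov^{-1/2}$ is positive semidefinite, so $\opnorm M \le \opnorm{\cov^{-1/2}\nextcov\cov^{-1/2}}^{1/2} \le \Cds^{1/2}$. For $\opnorm{A-I}$: apply \Cref{lemma:matrix_bernstein} to the i.i.d.\ mean-zero summands $S_i = \tilde x_i\tilde x_i^\top - I$, which satisfy $\opnorm{S_i} \le \levcov^2 + 1 \lesssim \levcov^2$ and have variance proxy $\opnorm{\E(\tilde x\tilde x^\top)^2 - I} = \varcov$; since $\E[\|\tilde x\|^2\tilde x\tilde x^\top]\preceq \levcov^2 I$ and Jensen gives $\E(\tilde x\tilde x^\top)^2\succeq I$, one has $\varcov \le \levcov^2$, so for $n\gtrsim\levcov^2\log(d/\delta)$ this both forces $\opnorm{A-I}\le\tfrac12$ (hence $\covhat$ is full rank and $\opnorm{A^{-1}}\le 2$) and yields $\opnorm{A-I}\lesssim \sqrt{\varcov\log(d/\delta)/n} + \levcov^2\log(d/\delta)/n$. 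For $\opnorm{B-M}$: apply \Cref{lemma:matrix_bernstein} to $S_i = \tilde x_i\tilde y_i^\top - M$, with $\opnorm{S_i}\le\levcov\levnext + \opnorm M \lesssim \max(\levcov\levnext,\Cds^{1/2}\levcov^2)$ and variance proxy $\max\big(\opnorm{\E\|\tilde y\|^2\tilde x\tilde x^\top - MM^\top},\ \opnorm{\E\|\tilde x\|^2\tilde y\tilde y^\top - M^\top M}\big)$, which is exactly $\crossvar$ once one identifies $\cov^{-1/2}\cross\cov^{-1/2} = M$ and $\cov^{-1/2}\cross^\top\cov^{-1/2} = M^\top$ in the two displays of \Cref{eq:cross_var_def}; this gives $\opnorm{B-M}\lesssim\sqrt{\crossvar\log(d/\delta)/n} + \max(\levcov\levnext,\Cds^{1/2}\levcov^2)\log(d/\delta)/n$.

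Plugging these three bounds into the display above, using $\gamma\le1$, $\opnorm{A^{-1}}\le2$ and $\opnorm M\le\Cds^{1/2}$, the $\opnorm{A-I}\,\opnorm M$ contribution produces a $\sqrt{\Cds\varcov\log(d/\delta)/n}$ square-root term and a $\Cds^{1/2}\levcov^2\log(d/\delta)/n$ tail term; combining with the $\opnorm{B-M}$ bound via $\sqrt a+\sqrt b\le 2\sqrt{\max(a,b)}$ collapses the square-root terms to $\sqrt{\max(\crossvar,\varcov\Cds)\log(d/\delta)/n}$ and the tail terms to $\max(\Cds^{1/2}\levcov^2,\levcov\levnext)\log(d/\delta)/n$, which is the claimed bound; a union bound at level $\delta/2$ over the two Bernstein events finishes. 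The only step needing real care is matching the cross-term variance proxies $\opnorm{\E[S_iS_i^\top]},\opnorm{\E[S_i^\top S_i]}$ to the two expressions defining $\crossvar$ — pure bookkeeping with the $\cov^{\pm1/2}$-conjugations relating $\cross,\cross^\top$ to $M,M^\top$ — while everything else is a routine combination of matrix Bernstein and the triangle inequality.
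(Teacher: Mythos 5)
Your proposal is correct and follows essentially the same route as the paper: after whitening, your decomposition $A^{-1}B-M=A^{-1}(B-M)+A^{-1}(I-A)M$ is algebraically identical to the paper's split into $\cov^{1/2}\covhat^{-1}\cov^{1/2}\cdot\cov^{-1/2}(\crosshat-\cross)\cov^{-1/2}$ plus $\cov^{1/2}(\covhat^{-1}-\cov^{-1})\cov^{1/2}\cdot\cov^{-1/2}\cross\cov^{-1/2}$, and you use the same two matrix Bernstein applications (with variance proxies $\varcov$ and $\crossvar$), the same Schur-complement bound $\opnorm{M}\le \Cds^{1/2}$, and the same use of $n\gtrsim\levcov^2\log(d/\delta)$ to get $\opnorm{A-I}\le\tfrac12$ and hence invertibility of $\covhat$ with $\opnorm{A^{-1}}\le 2$. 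The only differences are cosmetic bookkeeping (e.g., bounding $\opnorm{S_i}\le\levcov\levnext+\opnorm{M}$ rather than $2\levcov\levnext$).
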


\begin{proof}
Let $\Ahat \defeq \gamma \covhat^{-1} \crosshat$. We start by using the following error decomposition,
\begin{align*}
& \| \cov^{1/2}\Ahat\cov^{-1/2} - \gamma \cov^{-1/2}\cross\cov^{-1/2} \|_\op\\
& \leq \gamma \| \cov^{1/2}\covhat^{-1}\cov^{1/2}\cdot \cov^{-1/2}\left(\crosshat - \cross\right)\cov^{-1/2}\|_{\op}
  + \gamma \| \cov^{1/2} (\covhat^{-1} - \cov^{-1})\cov^{1/2} \cdot \cov^{-1/2} \cross \cov^{-1/2} \|_{\op}\\
& \leq \gamma \underbrace{\| \cov^{1/2}\covhat^{-1}\cov^{1/2}\|_{\op}}_{\defeq T_1} \cdot \underbrace{\|\cov^{-1/2}\left(\crosshat - \cross\right)\cov^{-1/2}\|_{\op}}_{\defeq T_2}\\
  &+  \underbrace{\| \cov^{1/2} (\covhat^{-1} - \cov^{-1})\cov^{1/2}\|_{\op}}_{\defeq T_3} \cdot \underbrace{\| \gamma \cov^{-1/2} \cross \cov^{-1/2} \|_{\op}}_{\defeq T_4}.
\end{align*}
We now bound each of these terms separately.

\paragraph{Bound on $T_2$.}
We apply the Matrix Bernstein inequality on
$\cov^{-1/2}\left(\crosshat - \cross\right)\cov^{-1/2}$. Here we define
\begin{align*}
S_k = \cov^{-1/2}\left(x_ky_k^\top - \cross\right)\cov^{-1/2}
\end{align*}
which is centered and satisfies:
\begin{align*}
  \| S_k \| & \leq \| \cov^{-1/2} x_ky_k^\top \cov^{-1/2}\| + \E_{\calD} \| \cov^{-1/2} xy^\top \cov^{-1/2}\| \leq 2\sup_{(x,y) \in \textrm{supp}(\calD)} \| \cov^{-1/2} x y^\top \cov^{-1/2} \|\\
  & \leq 2 \sup_{(x,y) \in \textrm{supp}(\calD)} \| \cov^{-1/2} x \| \cdot \| \cov^{-1/2} y\|  \leq 2 \levcov \levnext.
\end{align*}
Therefore for $\crossvar$ defined as in \Cref{eq:cross_var_def}, , we get that with probability $1-\delta$,
\begin{align*}
T_2 \leq \sqrt{ \frac{2\crossvar \log(2d/\delta)}{n}} + \frac{ 4\levcov \levnext \log(2d/\delta)}{3n}.
\end{align*}

\paragraph{Bound on $T_1$ and $T_3$.}
Essentially the same argument as for the bound on $T_2$ reveals that,
\begin{align}
\label{eq:tau_def}
\| \cov^{-1/2}(\covhat - \cov)\cov^{-1/2}\|_{\op} \leq \sqrt{ \frac{2\varcov \log(2d/\delta)}{n}} + \frac{ 2\levcov^2 \log(2d/\delta)}{3n} =: \tau.
\end{align}
This inequality directly implies that
\begin{align*}
1 - \tau \leq \lambda_{\min}(\cov^{-1/2}\covhat\cov^{-1/2}) \leq \lambda_{\max}(\cov^{-1/2}\covhat\cov^{-1/2}) \leq 1+\tau,
\end{align*}
which in particular implies that $\cov^{-1/2}\covhat\cov^{-1/2}$ is
invertible whenever $\tau < 1/2$, a fact that is ensured by our lower bound on $n$. Therefore:
\begin{align}
\label{eq:whitened_covinv}
  T_1 = \| \cov^{1/2}\covhat^{-1}\cov^{1/2} \| = \frac{1}{\lambda_{\min}( \cov^{-1/2}\covhat\cov^{-1/2} )} \leq \frac{1}{1-\tau}.
\end{align}
More generally, we have that:
\begin{align*}
  1-2\tau \leq \lambda_{\min}(\cov^{1/2}\covhat^{-1}\cov^{1/2}) \leq \lambda_{\max}(\cov^{1/2}\covhat^{-1}\cov^{1/2}) \leq 1+2\tau.
\end{align*}
Using the fact that $1/(1+\tau) \geq 1-2\tau$ and $1/(1-\tau) \leq
1+2\tau$ for $\tau \leq 1/2$, this directly yields
\begin{align}
\label{eq:covinv_bound}
 T_3 = \| \cov^{1/2}(\covhat^{-1} - \cov^{-1})\cov^{1/2}\| \leq 2\tau.
\end{align}
Thus, we have bounded $T_1$ and $T_3$. In particular, for $\tau < 1/2$, $T_1 \leq 2$, and $T_3 \leq 2\tau$.
\paragraph{Bound on $T_4$.}
For $T_4$, no concentration argument is required. Instead, a Schur complement argument implies that,
\begin{align*}
\| \cov^{-1/2} \cross \cov^{-1/2} \|_{\op}^2 \leq \| \cov^{-1/2}\nextcov \cov^{-1/2}\|_{\op}\leq \Cds,
\end{align*}
where we've used $\nextcov \preceq \cds \cov$. Hence, $T_4 \leq \Cds^{1/2}$.
\paragraph{Wrapping up.}
Taking a union bound, we obtain that
\begin{align*}
\varepsilon_{\op} \lesssim \sqrt{ \frac{ \max(\crossvar, \varcov \Cds) \log(d/\delta)}{n}} + \frac{\max(\Cds^{1/2} \levcov^2, \levcov \levnext) \log(d/\delta)}{n}.
\end{align*}
\end{proof}

\subsection{Bounding $\epsz$}

\begin{lemma}
\label{lemma:epsz_concentration}
If $n \gtrsim \levcov^2 \log(d/\delta)$ then, with probability $1-\delta$, 
\begin{align*}
\twonorm{\cov^{1/2} \covhat^{-1} \widehat{\theta}_{\phi,r}  - \cov^{-1/2} \theta_{\phi,r}} \lesssim \sqrt{ \frac{  \max(\twonorm{ \cov^{-1/2}\tpr}^2\varcov, \rewardvar) \log(d/\delta)}{n}} + \frac{ \twonorm{\cov^{-1/2} \theta_{\phi,r}} \levcov^2 \log(d/\delta)}{n}.
\end{align*}
\end{lemma}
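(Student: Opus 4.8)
The plan is to follow the same template used to bound $\epsop$ in \Cref{lemma:epsop_concentration}: split the error algebraically into a ``reward-regression'' piece and a ``covariance-estimation'' piece, reuse the operator-norm control already established for the whitened empirical covariance, and handle the new vector-valued deviation with the vector Bernstein inequality (\Cref{lemma:vector_bernstein}). Writing $x_i = \phi(s_i,a_i)$, $r_i = r(s_i,a_i)$ as in the concentration shorthand, so that $\theta_{\phi,r}=\E xr$ and $\widehat{\theta}_{\phi,r}=\frac1n\sum_i x_i r_i$, I would first insert $\cov^{1/2}\covhat^{-1}\theta_{\phi,r}$ and telescope:
\[
\cov^{1/2}\covhat^{-1}\widehat{\theta}_{\phi,r} - \cov^{-1/2}\theta_{\phi,r}
= \cov^{1/2}\covhat^{-1}\cov^{1/2}\cdot\cov^{-1/2}(\widehat{\theta}_{\phi,r} - \theta_{\phi,r})
+ \cov^{1/2}(\covhat^{-1}-\cov^{-1})\cov^{1/2}\cdot\cov^{-1/2}\theta_{\phi,r}.
\]
Taking norms and using submultiplicativity, the target is at most $T_1\cdot\twonorm{\cov^{-1/2}(\widehat{\theta}_{\phi,r}-\theta_{\phi,r})} + T_3\cdot\twonorm{\cov^{-1/2}\theta_{\phi,r}}$, with $T_1 = \opnorm{\cov^{1/2}\covhat^{-1}\cov^{1/2}}$ and $T_3 = \opnorm{\cov^{1/2}(\covhat^{-1}-\cov^{-1})\cov^{1/2}}$ exactly the quantities already controlled inside the proof of \Cref{lemma:epsop_concentration}.

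Next I would import that analysis verbatim: matrix Bernstein on $\cov^{-1/2}(\covhat-\cov)\cov^{-1/2}$ gives the deviation $\tau$ of \Cref{eq:tau_def} with $\tau \lesssim \sqrt{\varcov\log(d/\delta)/n} + \levcov^2\log(d/\delta)/n$, and under $n \gtrsim \levcov^2\log(d/\delta)$ we have $\tau < 1/2$, hence $T_1 \le 2$ (\Cref{eq:whitened_covinv}) and $T_3 \le 2\tau$ (\Cref{eq:covinv_bound}). So the ``covariance-estimation'' piece contributes at most $\twonorm{\cov^{-1/2}\theta_{\phi,r}}\bigl(\sqrt{\varcov\log(d/\delta)/n} + \levcov^2\log(d/\delta)/n\bigr)$, up to constants.

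The genuinely new step is bounding $\twonorm{\cov^{-1/2}(\widehat{\theta}_{\phi,r}-\theta_{\phi,r})}$, which I would do by applying \Cref{lemma:vector_bernstein} to the i.i.d.\ mean-zero vectors $v_i := \cov^{-1/2}(x_i r_i - \theta_{\phi,r})$. Their second moment is precisely the pre-defined variance, $\E\twonorm{v_i}^2 = \E\twonorm{\cov^{-1/2}\phi(s,a)r(s,a)}^2 - \twonorm{\cov^{-1/2}\tpr}^2 = \rewardvar$ (\Cref{eq:varcov_def}), and since $|r_i| \le 1$ the almost-sure bound is $\twonorm{v_i} \le \levcov|r_i| + \twonorm{\cov^{-1/2}\tpr} \le 2\levcov$ (using $\twonorm{\cov^{-1/2}\tpr} = \twonorm{\E\cov^{-1/2}\phi r} \le \levcov$). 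Vector Bernstein then yields, with probability $1-\delta$, $\twonorm{\cov^{-1/2}(\widehat{\theta}_{\phi,r}-\theta_{\phi,r})} \lesssim \sqrt{\rewardvar\log(1/\delta)/n} + \levcov\log(1/\delta)/n$. Multiplying by $T_1 \le 2$, union-bounding over the two concentration events, and folding lower-order terms together (using $\log(1/\delta)\le\log(d/\delta)$ and $n \gtrsim \levcov^2\log(d/\delta)$ so that the $\levcov\log(1/\delta)/n$ contribution is absorbed), one arrives at the claimed inequality: leading term $\sqrt{\max(\twonorm{\cov^{-1/2}\tpr}^2\varcov,\rewardvar)\log(d/\delta)/n}$ and residual $\twonorm{\cov^{-1/2}\tpr}\levcov^2\log(d/\delta)/n$.

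The main obstacle here is bookkeeping rather than mathematical depth: one must identify that the correct variance proxy for the reward-regression term is exactly $\rewardvar$ as defined (so the bound is tight and coordinate-free, matching the invariances discussed after \Cref{theorem:main_result}), verify that the only a.s.\ bound entering is $2\levcov$ (no $\levnext$ appears, in contrast with the $\epsop$ bound, since this term involves $\phi(s,a)r(s,a)$ and not the next-state feature), and check carefully that the sub-leading $\levcov\log(1/\delta)/n$ term from vector Bernstein is dominated by the displayed residual in the regime $n \gtrsim \levcov^2\log(d/\delta)$. Everything else is a direct transcription of the $\epsop$ argument with the matrix-valued deviations replaced by their vector analogues.
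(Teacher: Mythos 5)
Your proposal is correct and follows essentially the same route as the paper's proof: the identical decomposition into $T_1\cdot\twonorm{\cov^{-1/2}(\widehat{\theta}_{\phi,r}-\theta_{\phi,r})} + T_3\cdot\twonorm{\cov^{-1/2}\theta_{\phi,r}}$, reuse of the bounds $T_1\le 2$ and $T_3\le 2\tau$ from the $\epsop$ analysis, and vector Bernstein applied to $v_i=\cov^{-1/2}(x_ir_i-\theta_{\phi,r})$ with variance $\rewardvar$ and almost-sure bound $2\levcov$. The final bookkeeping (union bound and absorbing the lower-order $\levcov\log(1/\delta)/n$ term) matches the paper's wrap-up as well.
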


\begin{proof}
The ideas are very similar to  \Cref{lemma:epsop_concentration}. In this case, the relevant error decomposition is, 
\begin{align*}
	\epsz &= \twonorm{\cov^{1/2} \covhat^{-1} \widehat{\theta}_{\phi,r}  - \cov^{-1/2} \theta_{\phi,r}} \\
		  &\leq \underbrace{\opnorm{\cov^{1/2} \covhat^{-1}\cov^{1/2}}}_{\defeq T_1} \underbrace{\twonorm{\cov^{-1/2}(\theta_{\phi,r} - \widehat{\theta}_{\phi,r})}}_{\defeq T_2} + \underbrace{\opnorm{(\cov^{1/2} \covhat^{-1} \cov^{1/2} - I)}}_{\defeq T_3} \twonorm{\cov^{-1/2}\theta_{\phi,r}}.
\end{align*}
\paragraph{Bound on $T_1$ and $T_3$.} 
Whenever $\tau$, defined as in \Cref{eq:tau_def}, is strictly less than $1/2$, the analysis therein (in particular, \Cref{eq:covinv_bound} and \Cref{eq:whitened_covinv}) proves that $T_1 \leq 2$ and $T_3 \leq 2\tau$. 

\paragraph{Bound on $T_2$.}
We apply the vector Bernstein inequality, \Cref{lemma:vector_bernstein}, on the vectors 
\begin{align*}
v_i = \cov^{-1/2}x_i r_i - \cov^{-1/2} \theta_{\phi,r}.
\end{align*}
 Note that, since the rewards have magnitude bounded by 1,
\begin{align*}
	\sup_i \twonorm{v_i} \leq \sup_i \twonorm{\cov^{-1/2} x_i r_i} + \twonorm{\cov^{-1/2} \theta_{\phi,r}} \leq \twonorm{\cov^{-1/2} x_i r_i} + \E \twonorm{\cov^{-1/2}  xr}  \leq 2\levcov,
\end{align*}
and, 
\begin{align*}
	\E \twonorm{v_i}^2 = \E \twonorm{\cov^{-1/2} x_i r_i}^2 - \twonorm{\cov^{1/2} \theta_{\phi,r}}^2= \rewardvar .
\end{align*}
Applying vector Bernstein, 
\begin{align*}
	T_2 \leq  \sqrt{\frac{2 \rewardvar \log(28 / \delta)}{n}} + \frac{4 \levcov \log(28 /\delta)}{3n}.
\end{align*}
\paragraph{Wrapping up.} Combining these, we get that, 
\begin{align*}
	\epsz \lesssim \sqrt{ \frac{  \max(\twonorm{ \cov^{-1/2}\tpr}^2\varcov, \rewardvar
	) \log(d/\delta)}{n}} + \frac{ \twonorm{\cov^{-1/2}\theta_{\phi,r}} \levcov^2 \log(d/\delta)}{n}.
\end{align*}
\end{proof}

\subsection{Bounding variances}
\label{subsec:variance_bounds}

\paragraph{Bounding $\rewardvar$} Since the rewards $r(s,a)$ satisfy $|r(s,a)| \leq 1$, we have that
\begin{align*}
	\E \twonorm{v_i}^2 = \E \twonorm{\cov^{-1/2} x_i r_i}^2 - \twonorm{\cov^{1/2} \theta_{\phi,r}}^2 \leq \traceb{\cov^{-1/2} \E r_i^2 x_i x_i^\top \cov^{-1/2}} \leq d.
\end{align*}

\paragraph{Bounding $\crossvar$.}Again using the notation from \Cref{eq:concentration_shorthand}, and letting 
\begin{align*}
S_k = \cov^{-1/2}\left(x_ky_k^\top - \cross\right)\cov^{-1/2}
\end{align*}
bounding $\crossvar$ is equivalent to bounding the operator norms of:
\begin{align*}
\E[S_kS_k^\top] &= \E[ \|\cov^{-1/2}y\|_2^2 (\cov^{-1/2} x)(\cov^{-1/2} x)^\top] - \cov^{-1/2}\cross\cross^\top\cov^{-1/2}\\
\E[S_k^\top S_k] &= \E[ \|\cov^{-1/2}x\|_2^2 (\cov^{-1/2} y)(\cov^{-1/2} y)^\top] - \cov^{-1/2}\cross^\top \cross\cov^{-1/2}.
\end{align*}
We will subsequently show that, for any vector $v \in
\R^d$, we have
\begin{align}
  v^\top (\E[S_kS_k^\top])v \geq 0, \quad v^\top (\E[S_k^\top S_k])v \geq 0. \label{eq:positive_qf}
\end{align}
Additionally, for any random variables $(a,b) \in \R\times\R^d$ from some joint distribution, Holder's inequality implies that
\begin{align*}
  \opnorm{\E[a^2 bb^\top]} = \sup_{v, \twonorm{v} = 1} \E[a^2 (v^\top b)^2] &\leq \min\{ \sup\{a\}\sup_{v} \E[ (v^\top b)^2], \sup_{b,v}\{(v^\top b)^2\} \E[a^2]\}\\
  &= \min\{ \sup\{a\} \opnorm{ \E[bb^\top]}, \sup\{\|b\|_2^2\}\E[a^2]\}.
\end{align*}
Using these two facts and positive semi-definiteness, we have that
\begin{align*}
\opnorm{\E[S_kS_k^\top]} &\leq \opnorm{ \E[ \|\cov^{-1/2}y\|_2^2 (\cov^{-1/2} x)(\cov^{-1/2} x)^\top] } \leq \sup_{y} \| \cov^{-1/2}y\|_2^2 \| \E[ (\cov^{-1/2} x)(\cov^{-1/2} x)^\top] \| \leq  \levnext^2 .
\end{align*}
%% Here we are using that $\E[ \| \cov^{-1/2} x\|_2^2] = d$. \akcomment{This bound seems loose}
Essentially the same proof yields a similar bound on $\| \E[S_k^\top S_k]\|$:
\begin{align*}
\opnorm{ \E[S_k^\top S_k] } &\leq \opnorm{ \E[ \|\cov^{-1/2}x\|_2^2 (\cov^{-1/2} y)(\cov^{-1/2} y)^\top] } \leq \rho_0^2 \opnorm{ \cov^{-1/2}\nextcov\cov^{-1/2} } \leq  \levcov^2 \Cds.
\end{align*}
Alternatively, we can get
\begin{align*}
\opnorm{ \E[S_k^\top S_k] } &\leq \opnorm{ \E[ \|\cov^{-1/2}x\|_2^2 (\cov^{-1/2} y)(\cov^{-1/2} y)^\top] } \leq \E[ \|\cov^{-1/2}x\|_2^2 \| (\cov^{-1/2} y)(\cov^{-1/2} y)^\top] \| ]\\
  & =  \E[ \|\cov^{-1/2}x\|_2^2 \|\cov^{-1/2} y\|_2^2 ] \leq \levnext^2 d.
\end{align*}
Let us now verify~\eqref{eq:positive_qf}. Rebinding $\tilde{x} =
\cov^{-1/2}x, \tilde{y} = \cov^{-1/2}y$, we have
\begin{align*}
  v^\top (\E[S_kS_k^\top])v = \E[ (v^\top \tilde{x})^2 \| \tilde{y} \|_2^2] - (\E(v^\top\tilde{x})\tilde{y})^\top (\E(v^\top\tilde{x})\tilde{y}) = \E \| (v^\top\tilde{x})\tilde{y}\|_2^2 - \| \E[ (v^\top\tilde{x})\tilde{y}] \|_2^2 \geq 0,
\end{align*}
where the last inequality is by convexity. In conclusion,
\begin{align*}
\crossvar \leq \max(\levnext^2, \min(\levcov^2\Cds, \levnext^2 d)).
\end{align*}

\paragraph{Bounding $\varcov$.}

For $\tilde{x} =\cov^{-1/2} \phi(s,a)$, the variance $\varcov$ is equal to $$\varcov = \opnorm{\E \tilde{x} \tilde{x}^\top \tilde{x} \tilde{x}^\top - I} = \opnorm{\E \twonorm{\tilde{x}}^2 \tilde{x}\tilde{x}^\top - I}.$$ 
While this quantity is always less that $\levcov^2$, one can achieve tighter bounds if the offline distribution is \emph{hypercontractive} as per the following definition:

\begin{definition}
A distribution $\calD$ over random vectors $x$ is $L8$-$L2$ hypercontractive if there exists a positive constant $L$ such that for all unit vectors $u$,
\begin{align*}
	\E_{x \sim \calD} ((x - \E x)^\top u)^8 \leq L^2 
	\left( \E_{x \sim \calD} ((x-\E x)^\top u)^2 \right)^4.
\end{align*}
\end{definition}
Gaussians or strongly log-concave distributions are some examples of probability measures that satisfy this condition. If $\cov^{-1/2}\phi(s,a)$ is $L8$-$L2$ hypercontractive, then one can show that 
\begin{align*}
\varcov\lesssim L \traceb{I +  \mu \mu^\top} \opnorm{I + \mu \mu^\top},
\end{align*}
where $\mu \defeq  \cov^{-1/2}\E_{(s,a) \sim \calD}\phi(s,a)$. We point the interested reader to Lemma A.3 in \cite{yesh} for a more formal derivation.

\newpage
\section{Analyzing the Misspecified Case: Proof of \Cref{prop:misspecification}}
By definition of $\tsi$, we have that for all $(s,a) \in \states \times \actions$ we can write $\Qpi$ as 
\begin{align}
\label{eq:qpi_decomp}
	\Qpi(s,a) = \phi(s,a)^\top \tsg + f(s,a),
\end{align}
where $f(s,a) = \Qpi(s,a) - \phi(s,a)^\top \tsg$ and $\sup_{s,a} |f(s,a)| \leq \epsinf$. 

From the relationship above, we have that for $\Qhat(s,a) = \phi(s,a)^\top \thetahat$
\begin{align}
	|\Qpi(s,a) - \Qhat(s,a)| &= |\phi(s,a)^\top(\tsi - \thetahat) + f(s,a)| \notag\\
	&\leq \twonorm{\cov^{-1/2} \phi(s,a)} \twonorm{\cov^{1/2}(\tsi - \thetahat)} + |f(s,a)| \label{eq:ms1}.
\end{align}
Applying the triangle inequality again, 
\begin{align}
\label{eq:ms2}
\twonorm{\cov^{1/2}(\tsi - \thetahat)} \leq \twonorm{\cov^{1/2}(\tsi - \tfp)} + \twonorm{\cov^{1/2}(\tfp - \thetahat)}.
\end{align}
By assumption on $\thetahat$, $\twonorm{\cov^{1/2}(\tfp - \thetahat)} \leq \epsfp$. Therefore, it remains to bound $\twonorm{\cov^{1/2}(\tsi - \tfp)}$. By \Cref{claim:misspecification}, we have that
\begin{align*}
	\cov^{1/2}\tsi &= (I -\whitecross)^{-1} \cov^{-1/2} \tpr \\
	&+(I -\whitecross)^{-1}  \cov^{-1/2} \underset{\substack{(s,a) \sim \calD\\s' \sim P(\cdot |s,a), a' \sim \pi(s')}}{\E}  \phi(s,a)(\phi(s,a)^\top \tsi -\gamma \phi(s', a')^\top \tsi -r(s,a)).
\end{align*}
Note that $\cov^{1/2}\tfp$ is exactly equal to $(I -\whitecross)^{-1} \cov^{-1/2} \tpr$. Furthermore, by the second part of \Cref{claim:misspecification}, the $\ell_2$ norm of the second term in the expression above is upper bounded by $\levcov \epsinf / \smin(I -\whitecross)$. Consequently,
\begin{align}
\label{eq:ms3}
	\twonorm{\cov^{1/2}(\tsi - \tfp)} \leq \frac{\levcov}{\smin(I -\whitecross)}  \epsinf.
\end{align}
Combining \Cref{eq:ms1,eq:ms2,eq:ms3}, we get that 
\begin{align*}
|\Qpi(s,a) - \Qhat(s,a)| \leq \twonorm{\cov^{-1/2} \phi(s,a)}(\epsfp+ \frac{\levcov}{\smin(I -\whitecross)}  \epsinf) + \epsinf.
\end{align*}
\newpage
\begin{claim}
\label{claim:misspecification}
Let $\tsi$ be defined as in \Cref{eq:tsi_def} and let $A \defeq I -\whitecross$ then, 
\begin{align*}
	\cov^{1/2}\tsi - \cov^{1/2}\theta_\star &= A^{-1} \cov^{-1/2} \tpr \\
	&+A^{-1}  \cov^{-1/2} \underset{\substack{(s,a) \sim \calD\\s' \sim P(\cdot |s,a), a' \sim \pi(s')}}{\E}  \phi(s,a)(\phi(s,a)^\top \tsi -\gamma \phi(s', a')^\top \tsi -r(s,a)).
\end{align*}
where 
\begin{align*}
	\twonorm{A^{-1}  \cov^{-1/2} \underset{\substack{(s,a) \sim \calD\\s' \sim P(\cdot |s,a), a' \sim \pi(s')}}{\E}  \phi(s,a)(\phi(s,a)^\top \tsi -\gamma \phi(s', a')^\top \tsi -r(s,a))} \leq \frac{\epsinf}{\smin(I - \whitecross)} \levcov.
\end{align*}
%is less than or equal to
%\begin{align*}
%	 \frac{\epsinf}{\smin(I - \whitecross)} \levcov
%\end{align*}
\end{claim}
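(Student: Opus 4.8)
The plan is to read the displayed identity as an exact fixed-point relation satisfied by $\tsi$, and then control the extra term by a crude leverage bound on the resulting Bellman residual. For the identity, define the vector-valued map
\[
b(\theta) \defeq \underset{\substack{(s,a)\sim\calD\\ s'\sim P(\cdot\mid s,a),\, a'\sim\pi(s')}}{\E}\phi(s,a)\bigl(\phi(s,a)^\top\theta - \gamma\,\phi(s',a')^\top\theta - r(s,a)\bigr).
\]
Expanding the expectation and using $\E[r(s,a)\mid s,a]=\E r(s,a)$ shows $b(\theta)=(\cov-\gamma\cross)\theta-\tpr$. Under invertibility (\Cref{ass:full_rank}), $\cov-\gamma\cross=\cov(I-\gamma\cov^{-1}\cross)$ is a product of invertible matrices, hence invertible, so $\theta=(\cov-\gamma\cross)^{-1}(\tpr+b(\theta))$. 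Left-multiplying by $\cov^{1/2}$ and using the similarity $(\cov-\gamma\cross)^{-1}=\cov^{-1/2}(I-\whitecross)^{-1}\cov^{-1/2}$ (which simultaneously certifies that $I-\whitecross$ is invertible) gives $\cov^{1/2}\theta=(I-\whitecross)^{-1}\cov^{-1/2}\tpr+(I-\whitecross)^{-1}\cov^{-1/2}b(\theta)$; setting $\theta=\tsi$ yields the claimed identity (what is used downstream in the proof of \Cref{prop:misspecification} is exactly this identity for $\cov^{1/2}\tsi$).

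For the bound on the remainder, submultiplicativity gives $\twonorm{(I-\whitecross)^{-1}\cov^{-1/2}b(\tsi)}\le\opnorm{(I-\whitecross)^{-1}}\,\twonorm{\cov^{-1/2}b(\tsi)}=\tfrac{1}{\smin(I-\whitecross)}\,\twonorm{\cov^{-1/2}b(\tsi)}$, so it suffices to show $\twonorm{\cov^{-1/2}b(\tsi)}\lesssim\levcov\,\epsinf$. Write $g(s,a)\defeq\Qpi(s,a)-\phi(s,a)^\top\tsi$, so that $\sup_{s,a}|g(s,a)|=\epsinf$ by definition of $\tsi$ and $\epsinf$. The Bellman equation $\E r(s,a)=\Qpi(s,a)-\gamma\,\E[\Qpi(s',a')\mid s,a]$ rewrites the scalar factor inside $b(\tsi)$ as the Bellman residual of the linear fit, namely $\phi(s,a)^\top\tsi-\gamma\,\E[\phi(s',a')^\top\tsi\mid s,a]-\E r(s,a)=-g(s,a)+\gamma\,\E[g(s',a')\mid s,a]$, whose absolute value is at most $(1+\gamma)\epsinf$. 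Hence, by the triangle inequality for $\calD$-expectations together with $\twonorm{\cov^{-1/2}\phi(s,a)}\le\levcov$ for all $(s,a)\in\supp(\calD)$, we get $\twonorm{\cov^{-1/2}b(\tsi)}\le(1+\gamma)\levcov\,\epsinf$, which is the stated bound up to the universal factor $1+\gamma\le 2$. (A slightly sharper route, applying Cauchy--Schwarz in the dual variable $u$ with $\twonorm{u}=1$ and using that $\cov^{-1/2}\phi(s,a)$ is isotropic under $\calD$, removes the $\levcov$ altogether and gives $\twonorm{\cov^{-1/2}b(\tsi)}\le(1+\gamma)\epsinf$.)

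The only delicate point is the identification in Step 2 of the Bellman residual of the best linear fit with $-g+\gamma\,\E[g\mid\cdot]$: this needs the fixed-point equation for $\Qpi$ and care in integrating the random reward and the random next state-action pair correctly (the feature $\phi(s,a)$ is deterministic given $(s,a)$, so it factors outside the conditional expectation). One must also keep track of the fact that the leverage bound $\twonorm{\cov^{-1/2}\phi(s,a)}\le\levcov$ is only available on $\supp(\calD)$ — which is precisely where the outer expectation in $b(\tsi)$ lives — while the next-step feature $\phi(s',a')$ enters only through the uniformly bounded scalar $g(s',a')$. Everything else (invertibility of $\cov-\gamma\cross$ and the similarity transform relating it to $I-\whitecross$) is routine linear algebra.
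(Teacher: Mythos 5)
Your proof is correct and, at its core, matches the paper's: both arguments rewrite the scalar inside the expectation as the Bellman residual of the best linear fit, $-g(s,a)+\gamma\,\E[g(s',a')\mid s,a]$ with $\sup_{s,a}|g(s,a)|=\epsinf$, bound it together with the leverage $\twonorm{\cov^{-1/2}\phi(s,a)}\le\levcov$ on $\supp(\calD)$, and then pull out $\opnorm{(I-\whitecross)^{-1}}=1/\smin(I-\whitecross)$. The only differences are organizational and harmless: you get the displayed identity as a direct algebraic rearrangement of $b(\theta)=(\cov-\gamma\cross)\theta-\tpr$ (valid for any $\theta$, with the similarity transform handling invertibility), whereas the paper derives it by substituting $\Qpi=\phi^\top\tsi+g$ into the Bellman equation and projecting; and your explicit $(1+\gamma)\le 2$ constant is exactly the slack the paper's own bound implicitly carries, which is immaterial since the result is used downstream only up to $\lesssim$.
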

\begin{proof}
By the Bellman equation, we have that,
\begin{align*}
	\Qpi(s,a) = \E r(s,a) + \gamma \cdot \underset{\substack{s' \sim P(\cdot\mid s,a) \\ a' \sim \pi(s')}}{\E} \Qpi(s', a').
\end{align*}
Using the decomposition from \Cref{eq:qpi_decomp}, the following relationship holds for all $(s,a) \in \states \times \actions$,
\begin{align*}
	\phi(s,a)^\top \tsg = \E r(s,a) + \gamma \cdot \underset{\substack{s' \sim P(\cdot\mid s,a) \\ a' \sim \pi(s')}}{\E} \phi(s', a')^\top \tsg - f(s,a) + \gamma \cdot \underset{\substack{s' \sim P(\cdot\mid s,a) \\ a' \sim \pi(s')}}{\E}f(s', a').
\end{align*} 
Now we do a couple of things, we multiply on the left by $
\cov^{-1/2}\phi(s,a)$ and take expectations with respect to $(s,a) \sim \cD$. Rearranging, we get the following equation: 
\begin{align*}
	\cov^{1/2} \tsg  &= (I - \gamma \cov^{-1/2} \cross \cov^{-1/2})^{-1}\cov^{-1/2} \tpr  \\
	& + (I - \gamma \cov^{-1/2} \cross \cov^{-1/2})^{-1} \underset{\substack{s' \sim P(\cdot\mid s,a) \\ a' \sim \pi(s')}}{\E} \cov^{-1/2} \phi(s,a)  (\gamma \cdot f(s', a') - f(s,a)) 
\end{align*}
Focusing on the second term, we have that for any $(s,a) \in \states \times \actions$, $|f(s,a)| \leq \epsinf$ and $\twonorm{\cov^{-1/2} \phi(s,a)}\leq \levcov$. Therefore, 
\begin{align*}
\twonorm{(I - \gamma \cov^{-1/2} \cross \cov^{-1/2})^{-1} \underset{\substack{s' \sim P(\cdot\mid s,a) \\ a' \sim \pi(s')}}{\E} \cov^{-1/2} \phi(s,a)  (\gamma \cdot f(s', a') - f(s,a))} \leq \frac{\epsinf \cdot \levcov}{\smin(I - \whitecross)}.
\end{align*}
Moreover, $f(s,a) = \Qpi(s',a') - \phi(s,a)^\top\tsi$ and $\Qpi(s,a) = \E r(s,a) + \gamma \cdot \underset{\substack{s' \sim P(\cdot\mid s,a) \\ a' \sim \pi(s')}}{\E} \Qpi(s', a')$.\\
Using these identities, we have that:
\begin{align*}
\gamma \cdot f(s', a') - f(s,a) = \phi(s,a)^\top \tsi - \gamma \phi(s', a')^\top \tsi - r(s,a).
\end{align*}
\end{proof}

\newpage

\end{document}